\theoremstyle{plain}
\newtheorem{thm}{Theorem}
\newtheorem{prop}{Proposition}
\newtheorem{lem}{Lemma}
\theoremstyle{definition}
\newtheorem{defn}{Definition}
\newtheorem{remark}{Remark}
\newcommand{\gray}{\cellcolor[gray]{0.9}}
\newcommand{\fin}{f^{\textrm{in}}}
\newcommand{\fout}{f^{\textrm{out}}}
\newcommand{\fpre}{f^{\textrm{pre}}}
\newcommand{\fpatch}{f^{\textrm{patch}}}
\newcommand{\diagf}[1]{#1^{\textrm{diag}}}
\newcommand{\SE}{\ensuremath{\mathrm{SE}}}
\newcommand{\e}{\mathbf{e}}
\newcommand{\z}{\mathbf{z}}
\renewcommand{\t}{\ensuremath{\mathbf{t}}}
\renewcommand{\x}{\mathbf{x}}
\renewcommand{\k}{\mathbf{k}}
\renewcommand{\SS}{\mathbb{S}}
\newcommand{\G}{\mathcal{G}}
\newcommand{\I}{\mathcal{I}}
\renewcommand{\Y}{\mathbf{Y}}
\renewcommand{\vec}[1]{\operatorname{vec}\left(#1\right)}
\title{A Geometric Approach to Steerable Convolutions}
\author{
    \name Soumyabrata Kundu \email soumyabratakundu@uchicago.edu \\
    \addr Department of Statistics\\University of Chicago
    \AND
    \name Risi Kondor \email risi@cs.uchicago.edu \\
    \addr Department of Computer Science\\ University of Chicago
}
\begin{document}

\maketitle

\begin{abstract}
In contrast to the somewhat abstract, group theoretical approach adopted by many papers, our work provides a new and more intuitive derivation of steerable convolutional neural networks in \(d\) dimensions. This derivation is based on geometric arguments and fundamental principles of pattern matching. We offer an intuitive explanation for the appearance of the Clebsch--Gordan decomposition and spherical harmonic basis functions. Furthermore, we suggest a novel way to construct steerable convolution layers using interpolation kernels that improve upon existing implementation, and offer greater robustness to noisy data.
\end{abstract}

\section{Introduction}

Convolutional neural networks (CNNs) are one of the most successful neural network architectures, and an essential part of modern computer vision systems. As the name implies, the distinguishing feature of CNNs is that the learnable filters of the network are combined with its activations via \emph{convolutions}. This guarantees that the same set of filters are applied to the image at every pixel position, and hence the network can recognize a given feature or constellation of features equally well in any part of the visual field. In modern parlance, we say that convolutional neural networks are \emph{equivariant to translations}. 

Steerable neural networks generalize this framework to also encompass rotations, and hence every point in the output carries additional information about the orientation of features. Consequently, the output of a steerable convolutional layer is a function of both position and orientation. Steerable neural networks offer a clear advantage over classical CNNs by eliminating the need to learn separate filters for each spatial orientation when detecting image constituents like edges and curves. As a whole, the inductive bias of steerable neural networks is more specific and faithful to the structure of most types of natural images than that of simple convolutional networks. This is especially true in applications such as medical imaging, where equivariance to rotations is essential for finding physiologically relevant features.

\subsection{Related Work}

To achieve invariance to specific transformations, one common approach is to apply these transformations to the input and then pool their responses, as shown in studies by \citet{Sohn2012LearningIR, Kanazawa2014LocallySC, zhang2015discriminative}. Recently, however, there has been a growing focus on rotation equivariant CNNs, which are designed to handle general rotations rather than being limited to specific transformations. Steerable filters, originally explored in the 1990s by \citet{FreemanAdelson1991, ReisertBurkhardt}, have seen renewed interest as key components in modern neural networks.

\citet{dieleman2016exploiting} introduced four core operations that seamlessly integrate into conventional convolutional networks by augmenting both the batch and feature dimensions with transformed versions of the input. Subsequently, \citet{marcos2017rotation} enforced rotational equivariance by pooling feature responses across multiple orientations. \citet{cohen2016group} developed a theoretical framework for group equivariant convolutional networks, focusing on discrete transformation groups constrained to the pixel grid, such as the wallpaper group $p4$, which includes translations and $90^\circ$ rotations. Expanding on this, \citet{cohen2017steerable} proposed the use of steerable representations composed of elementary feature types, enabling equivariance to a wider range of orientations. In parallel, \citet{worrall2017harmonic} introduced circular harmonic filters, marking a significant step towards continuous rotation equivariance. This was further refined by \citet{weiler2018learning}, who proposed a weight initialization scheme inspired by \citet{he2015delving}. \citet{weiler2019general} extended the steerable framework to include reflections in addition to rotations, thereby achieving equivariance under the full Euclidean group $\mathrm{E}(2)$.

In three dimensions, \citet{Worrall2018CubeNetET} introduced an architecture that achieves equivariance to translations and right angle rotations. \citet{weiler20183d} broadened the scope by developing models that are equivariant to arbitrary 3D rotations and translations. In a related direction, \citet{thomas2018tensor} proposed Tensor Field Networks, which guarantee equivariance to rotations, translations, and permutations for 3D point cloud data. For data defined on a three dimensional sphere, \citet{CohenSphericalICLR2018} introduced spherical CNNs that are equivariant to rotations. This framework was further refined by \citet{SphericalCNNNeurIPS2018}, who proposed the Clebsch--Gordan non-linearity, a Fourier space non-linearity that mitigates numerical errors caused by repeated transformations between the spatial and spectral domains. More recently, \citet{cesa2022a} generalized these ideas to include rotations, translations, and reflections in arbitrary \(d\) dimensional spaces, providing a principled approach for constructing architectures equivariant to the Euclidean group $\mathrm{E}(d)$.

In a broader and more abstract setting, \citet{krajsek2007unified} introduced a unified theory of steerable filters grounded in Lie group theory, providing a systematic framework for analyzing their structure and behavior. Building on this foundation, \citet{cohen2019general} formulated a general theory of group equivariant convolutional networks on homogeneous spaces, offering a flexible and principled basis for designing equivariant architectures. \citet{kondor18a} formally proved that convolution is the most general form of equivariant linear maps for compact groups, demonstrating that no broader class of equivariant linear operations exists. Extending these theoretical developments, \citet{lang2021a} generalized the classical Wigner--Eckart theorem to the setting of equivariant convolution kernels, advancing the mathematical understanding of symmetry in machine learning.

\subsection{Main Contribution}

Existing works on two and three dimensional steerability \citep{cohen2017steerable, weiler2019general, weiler2018learning, weiler20183d}, along with general theories for \(d\) dimensional steerability \citep{cohen2019general, cesa2022a}, predominantly adopt an algebraic framework to derive the equations for steerable convolutions. While this approach is mathematically rigorous, it often introduces a level of complexity that can hinder the understanding and broader adoption of these methods. 
In this paper, we propose a novel and intuitive method for deriving the equations of steerable convolutions by emphasizing on geometric reasoning over traditional algebraic techniques. This geometric perspective simplifies the underlying concepts, and offers a more nuanced understanding of the operations required for the implementation of steerable neural networks. Starting from the foundational principles of classical CNNs and Fourier transforms, we systematically develop the equations for steerable convolutions, making the methodology more accessible and practical for a wider audience. This derivation also leads to an alternative implementation of steerable convolution which, as demonstrated by our experiments, outperforms existing approaches in the literature and exhibits greater robustness to noise perturbations.

We extend the foundational result of \citet{kondor18a}, which established that convolutions are the most general equivariant linear maps for compact groups, to a broader setting. While elements of this generalization have been discussed in  \citet{weiler20183d} for three dimensional setting and in \citet{cohen2019general} within the context of homogeneous spaces of locally compact groups, our work provides a unified and rigorous formulation. The theorem we present not only consolidates these fragmented insights but also encompasses all prior results as special cases, offering a comprehensive and cohesive extension to the theory.

In the context of three dimensional steerable convolutions, earlier works of \citet{weiler20183d, thomas2018tensor} have recognized spherical harmonics as fundamental building blocks of the architecture. These studies primarily employ spherical harmonics to construct a basis for steerable filters. In contrast, our approach offers a more intuitive interpretation by showing that spherical harmonics naturally arise through the application of the Fourier transform over the rotation group. Furthermore, we extend this perspective to $d$ dimensional spaces, demonstrating that steerable filters can be systematically derived using spherical harmonic basis functions, This leads to a unified framework for developing rotation and translation equivariant architectures in arbitrary \(d\) dimensions.

When applying steerable architectures to real world image data, the discrete nature of images often compromises theoretical equivariance, resulting in small but notable errors. We analyze the \emph{loss of equivariance} in practical implementations, and provide theoretical bounds for these errors, offering deeper insights into the behavior, limitations, and practical applicability of steerable neural networks.
\section{The Principles Behind Classical CNNs}\label{sec: overview CNN}

As a foundation for a ``first principles'' approach to steerable neural networks, we begin by revisiting the fundamental concepts underlying classical two dimensional CNNs. While real world neural networks operate on finite resolution rasterized images, we simplify the mathematical treatment by modeling the inputs, filters, and activations as functions defined on continuous domains. Furthermore, we assume that our input image has a single channel. The generalization to multiple channels is relatively straightforward.

\subsection{Pattern Matching} \label{sec: pattern matching}

In an idealized 2D continuous CNN, the input image is represented as a function $\fin\colon \RR^2 \to \RR$, mapping spatial coordinates to intensity values. The intuitive idea behind CNNs is to find visually meaningful features by comparing local regions of the image against a hierarchy of small, learnable filters \(w\colon\Omega\to\RR\), defined on a compact domain \(\Omega\<=[-h,h]^2\) for some \(h>0\). This pattern matching idea is best expressed as a three stage process. First, around every image location \(\x\) we extract a ``patch function'' \(\fpatch_\x\colon \Omega\to\RR\) defined as
\begin{equation}\label{eq: fpatch0}
 \fpatch_\x(\y)=\fin(\x+\y).
\end{equation}
Second, this patch function is compared with a filter \(w\), by computing their inner product. This gives the so called ``pre-activation function'',
\begin{equation}\label{eq: fpre0}
\fpre(\x)=\inpN{\fpatch_\x,w}_\Omega=\int_{\Omega} \fpatch_\x(\y)\, w(\y)\,d\y. 
\end{equation}
Finally, we get the output by applying a pointwise nonlinearity \(\gamma\), such as the popular ReLU function \(\gamma_{\textrm{ReLU}}(x)=\textrm{max}(0,x)\), and get  
\begin{equation*}
    \fout(\x)=\gamma(\fpre(\x)).
\end{equation*}
Neglecting complications such as pooling and skip connections, classical CNNs simply iterate this pattern matching process over multiple layers. Throughout this paper, we will consistently use the notation $\fin$, $w$, and $\fpatch$, even in more general settings, with the understanding that the underlying domains of the functions will be clear from the context.

\subsection{Convolution on a Group} 

CNNs are named for their close connection between the pattern matching process and the mathematical concept of convolution. Equations \(\eqref{eq: fpatch0}\) and \(\eqref{eq: fpre0}\) imply that \(\fpre\) is the cross-correlation of \(\fin\) and \(w\),
\begin{equation}\label{eq: corr0}
    \fpre(\x)=\int_\Omega \fin(\x+\y)\,w(\y)\,d\y.
\end{equation}
Technically, the convolution of \(\fin\) with \(w\) is given by
\begin{equation}\label{eq: convo0}
    (\fin\ast w)(\x)=\int_\Omega \fin(\x-\y)\,w(\y)\,d\y.
\end{equation}
However, (\ref{eq: corr0}) and (\ref{eq: convo0}) are closely related, and the latter can be transformed into the former by replacing \(w\) with \(\wbar{w}(\y):=w(-\y)\). This conflation of convolution and correlation is common, and algorithms implementing \eqref{eq: corr0} are still called \emph{Convolutional Neural Networks}.

Equation \eqref{eq: convo0} is the \(\RR^2\)-specific formula for convolution. However, the concept of convolution generalizes naturally to any locally compact group \(\G \) (see Appendix \ref{sec: background groups} for definition). For functions \(\fin\) and \(w\) defined on \(\G \), the analogue of \eqref{eq: convo0} would be
\begin{equation}\label{eq: convo_gen}
    (f\ast \wbar{w})(x)=\int_\G \fin(x y^{-1})\,\wbar{w}(y)\,d\mu(y).
\end{equation}
Here, \(\wbar{w}(y) = w(y^{-1})\) and \(\mu\) is the canonical measure on \(\G \), called the Haar measure (see Appendix \ref{sec: background fourier transform} for definition). 

\subsection{Equivariance}\label{sec: equivariance}

The fundamental principle underlying the success of CNNs is that once a set of filters learns to recognize a particular image feature, it should be able to detect that feature equally well at any location in the image plane. Equivalently, if the input to a given CNN layer is translated by some vector \(\t\in \RR^2\), the output must also be translated the same way. This property is called \emph{equivariance} to the translation group \(\RR^2\). 
As we will see now, this idea extends naturally beyond \(\mathbb{R}^2\), allowing for a broader generalization to other domains and symmetry groups.
\begin{defn}\label{defn: group action}
    Given a domain \(\mathcal{X}\) and group \(\G \) acting on it (see Appendix \ref{sec: background group actions} for definition of group action and continuous group action), a natural action of the group on any function on \(\mathcal{X}\) is defined as \([g\cdot f](x) = f(g^{-1}\cdot x)\).
\end{defn}
\begin{defn}
    Suppose \(\mathcal{X}, \mathcal{Y}\) are two domains with group \(\G \) acting on them. Let \(\mathcal{F}(\cdot)\) denote the space of all complex valued functions on a domain. Then \(\phi\colon \mathcal{F}(\mathcal{X})\to \mathcal{F}(\mathcal{Y})\) is said to be \emph{equivariant} if for any \(g\in \G \) and \(f\in \mathcal{F}(\mathcal{X})\), \(g\cdot \phi[f] = \phi[g\cdot f]\).
\end{defn}
Let $\mathcal{X}$ be a measurable space and $\G $ a group acting on $\mathcal{X}$, continuously. Denote by $C_c(\mathcal{X})$ the space of compactly supported continuous functions defined on \(\mathcal{X}\), and by \(C_b(\G )\) the space of all continuous bounded functions on \(\G \). An equivariant linear map $\phi \colon C_c(\mathcal{X}) \to C_b(\G )$ can be defined as
\begin{equation}\label{eq: equivariant linear map}
    \phi[f](g) = \langle g^{-1}\cdot f, {w} \rangle,
\end{equation}
where $w \in \mathcal{L}_1(\mathcal{X})$. Here, $\mathcal{L}_1(\cdot)$ denotes the space of integrable functions on a given domain, and $\langle \cdot, \cdot \rangle$ represents the inner product with respect to the measure on that domain. Proposition \ref{prop: equivariant linear map} confirms that $\phi$ is indeed an equivariant linear map.
\begin{prop}\label{prop: equivariant linear map}
    The map $\phi$ defined in \eqref{eq: equivariant linear map} is an equivariant linear map.
\end{prop}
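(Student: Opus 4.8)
The statement bundles three claims: that $\phi$ is well defined (i.e.\ $\phi[f]$ genuinely lies in $C_b(\G)$ whenever $f\in C_c(\mathcal{X})$), that $\phi$ is linear, and that $\phi$ is equivariant. The plan is to dispatch these in that order, since linearity and equivariance are essentially one-line computations while well-definedness carries the only real technical content.

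For \emph{linearity}, I would simply note that $f\mapsto g^{-1}\cdot f$ is linear for each fixed $g$ (by Definition~\ref{defn: group action}, $[g^{-1}\cdot(\alpha f_1+\beta f_2)](x)=(\alpha f_1+\beta f_2)(g\cdot x)=\alpha[g^{-1}\cdot f_1](x)+\beta[g^{-1}\cdot f_2](x)$) and that $\langle\cdot,w\rangle$ is linear in its first slot; composing two linear maps gives a linear map. For \emph{equivariance}, the key is to recall that the group action on a function $\psi$ on $\G$ is $[g\cdot\psi](h)=\psi(g^{-1}h)$ (the natural action of $\G$ on itself by left translation, specialized from Definition~\ref{defn: group action}). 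Then for any $g,h\in\G$,
\begin{equation*}
[g\cdot\phi[f]](h)=\phi[f](g^{-1}h)=\langle (g^{-1}h)^{-1}\cdot f,\,w\rangle=\langle h^{-1}\cdot(g\cdot f),\,w\rangle=\phi[g\cdot f](h),
\end{equation*}
where the third equality uses $(g^{-1}h)^{-1}=h^{-1}g$ together with the fact that an action satisfies $(h^{-1}g)\cdot f=h^{-1}\cdot(g\cdot f)$. Since $h$ was arbitrary, $g\cdot\phi[f]=\phi[g\cdot f]$, which is exactly the required identity.

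For \emph{well-definedness}, boundedness is easy: the sup norm is invariant under the action, so $|\phi[f](g)|=|\langle g^{-1}\cdot f,w\rangle|\le \|g^{-1}\cdot f\|_\infty\,\|w\|_{\mathcal{L}_1}=\|f\|_\infty\,\|w\|_{\mathcal{L}_1}<\infty$, using that $f\in C_c(\mathcal{X})$ is bounded and $w\in\mathcal{L}_1(\mathcal{X})$. Continuity of $g\mapsto\phi[f](g)$ is the main obstacle: I would fix $g_0\in\G$, take a net (or sequence, in the metrizable case) $g_i\to g_0$, and show $\int_{\mathcal{X}} f(g_i\cdot x)\,w(x)\,d x\to \int_{\mathcal{X}} f(g_0\cdot x)\,w(x)\,d x$. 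Because the action is continuous, $f(g_i\cdot x)\to f(g_0\cdot x)$ pointwise in $x$; since $|f(g_i\cdot x)\,w(x)|\le \|f\|_\infty\,|w(x)|$ with $w$ integrable, the dominated convergence theorem gives the conclusion. (If one wants to avoid nets, one invokes continuity of the action $\G\times\mathcal{X}\to\mathcal{X}$ restricted to the compact-support considerations; the dominating function $\|f\|_\infty|w|$ is the crucial ingredient either way.) Assembling the three parts yields the proposition.
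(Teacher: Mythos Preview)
Your proof is correct and, for the linearity and equivariance portions, follows exactly the same computation as the paper: the paper's proof reads off linearity from the inner product and verifies equivariance via the one-line chain $g\cdot\phi[f](h)=\phi[f](g^{-1}h)=\langle(g^{-1}h)^{-1}\cdot f,w\rangle=\langle h^{-1}g\cdot f,w\rangle=\phi[g\cdot f](h)$, which is precisely your argument.

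Where you differ is in scope rather than method: the paper's proof does not address well-definedness at all (it silently takes for granted that $\phi[f]\in C_b(\G)$), whereas you supply the bound $|\phi[f](g)|\le\|f\|_\infty\|w\|_{\mathcal{L}_1}$ and the dominated-convergence argument for continuity in $g$. This extra material is not required to match the paper, but it is correct and fills a gap the paper leaves implicit.
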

\begin{proof}
    Linearity of $\phi$ follows directly from the linearity of the inner product. To verify equivariance, observe that for any $g, h \in \G $, we have
    \begin{align*}
           g\cdot \phi[f](h) = \phi[f](g^{-1}h) = \langle (g^{-1}h)^{-1} \cdot f, w\rangle = \langle h^{-1}g \cdot f, w\rangle = \langle h^{-1}\cdot g\cdot f, w\rangle = \phi[g\cdot f](h).
        \end{align*}
    Since $g$ and $h$ were arbitrary, this identity holds for all $g, h \in \G $, completing the proof.
\end{proof}

Equation \eqref{eq: equivariant linear map} offers a way to construct equivariant linear maps. This leads to a natural question: does this representation capture the most general form such maps can take? Under mild regularity assumptions on the space $\mathcal{X}$, the answer is yes. These assumptions are typically satisfied by spaces encountered in practical applications. The theorem below formalizes this statement.

\begin{thm}\label{thme: equivariant linear map}
    Let $\mathcal{X}$ be a locally compact Hausdorff space and $\G $ be a group that acts continuously on it. Suppose $\phi: 
    C_c(\mathcal{X}) \to C_b(\G )$ is a bounded linear map that is equivariant with respect to the action of \(\G \). 
    Then there exists a \emph{unique} complex measure \(\lambda\) such that for all $f \in C_c(\mathcal{X})$ and $g \in \G $,
    \begin{equation*}
        \phi[f](g) = \int_{\mathcal{X}}g^{-1}\cdot f\,d\lambda.
    \end{equation*}
    Furthermore, if \(\mathcal{X}\) is assumed to be \(\sigma\)-compact and equipped with a \(\sigma\)-finite measure, then there exists a \emph{unique} function $w \in \mathcal{L}_1(\mathcal{X})$ and a \emph{unique} singular measure $\nu$ on $\mathcal{X}$ such that for all $f \in C_c(\mathcal{X})$ and $g \in \G $,
    \begin{equation}\label{eq: equivariant theorem}
            \phi[f](g) = \langle g^{-1}\cdot f, w\rangle + \int_\mathcal{X} g^{-1}\cdot f\, d\nu.
    \end{equation}
\end{thm}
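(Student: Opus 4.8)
\emph{Proof plan.} The idea is to reduce the statement to the classical Riesz--Markov--Kakutani representation theorem, using equivariance to transport the information contained in a single group element — the identity $e\in\G$ — to all of $\G$. To this end, I would first consider the functional $L\colon C_c(\mathcal{X})\to\mathbb{C}$ defined by $L(f)=\phi[f](e)$. Linearity of $L$ is inherited from $\phi$, and since $\phi$ is bounded as a map into $C_b(\G)$ equipped with the supremum norm, we get $|L(f)|=|\phi[f](e)|\le\|\phi[f]\|_\infty\le C\|f\|_\infty$, so that $L$ is a bounded linear functional on $(C_c(\mathcal{X}),\|\cdot\|_\infty)$. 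It therefore extends uniquely to $C_0(\mathcal{X})$, and the Riesz--Markov--Kakutani theorem produces a \emph{unique} finite complex regular Borel measure $\lambda$ with $L(f)=\int_{\mathcal{X}}f\,d\lambda$ for all $f\in C_c(\mathcal{X})$.

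Next I would propagate this representation to an arbitrary $g\in\G$. Writing the action of $\G$ on functions over $\G$ explicitly as $[g\cdot F](h)=F(g^{-1}h)$ and evaluating the equivariance identity $g\cdot\phi[f]=\phi[g\cdot f]$ at $h=e$ gives $\phi[f](g^{-1})=[g\cdot\phi[f]](e)=\phi[g\cdot f](e)=L(g\cdot f)=\int_{\mathcal{X}}g\cdot f\,d\lambda$; substituting $g^{-1}$ for $g$ yields the claimed formula $\phi[f](g)=\int_{\mathcal{X}}g^{-1}\cdot f\,d\lambda$. Here $g^{-1}\cdot f$ again lies in $C_c(\mathcal{X})$ because the $\G$-action on $\mathcal{X}$ is continuous, and the integral is well defined and bounded since $\big|\int_{\mathcal{X}}g^{-1}\cdot f\,d\lambda\big|\le\|f\|_\infty\,|\lambda|(\mathcal{X})<\infty$. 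Uniqueness of $\lambda$ follows by specialization: any competing measure must represent $L$ at $g=e$, hence coincides with $\lambda$ by the uniqueness clause of Riesz--Markov.

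For the refined statement, assume in addition that $\mathcal{X}$ is $\sigma$-compact and carries a $\sigma$-finite reference measure $m$ — the measure with respect to which $\langle\cdot,\cdot\rangle$ and $\mathcal{L}_1(\mathcal{X})$ are defined. Because $\lambda$ is a complex measure it is finite, hence $\sigma$-finite, and $\sigma$-compactness places it alongside $m$ in a common $\sigma$-finite Borel framework; the Lebesgue decomposition theorem then gives a \emph{unique} splitting $\lambda=\lambda_{\mathrm{ac}}+\nu$ with $\lambda_{\mathrm{ac}}\ll m$ and $\nu\perp m$, and the Radon--Nikodym theorem provides a \emph{unique} $w\in\mathcal{L}_1(\mathcal{X})$ (up to $m$-null sets) with $d\lambda_{\mathrm{ac}}=w\,dm$. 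Plugging this into the formula from the previous step and splitting the integral gives $\phi[f](g)=\int_{\mathcal{X}}(g^{-1}\cdot f)\,w\,dm+\int_{\mathcal{X}}g^{-1}\cdot f\,d\nu=\langle g^{-1}\cdot f,w\rangle+\int_{\mathcal{X}}g^{-1}\cdot f\,d\nu$, with uniqueness of $w$ and $\nu$ inherited from the uniqueness of the Lebesgue decomposition and of the Radon--Nikodym derivative.

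I expect the main obstacle to be the careful handling of the Riesz step rather than any of the equivariance algebra: one must verify that boundedness of $\phi$ in the stated norms genuinely forces $\lambda$ to be a \emph{finite} complex measure — by passing through the extension to $C_0(\mathcal{X})$ — rather than a merely locally finite Radon measure, and (for the converse direction, confirming that such a $\phi$ really maps into $C_b(\G)$) that $g\mapsto\int_{\mathcal{X}}g^{-1}\cdot f\,d\lambda$ is continuous, which uses joint continuity of the action together with dominated convergence. The precise role of $\sigma$-compactness also deserves to be pinned down, since on a general locally compact Hausdorff space a Radon measure need not be $\sigma$-finite, and this hypothesis is exactly what makes the Lebesgue/Radon--Nikodym decomposition of $\lambda$ available and canonical.
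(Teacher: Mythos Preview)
Your proposal is correct and follows essentially the same route as the paper: evaluate at the identity to get a bounded functional, invoke Riesz--Markov--Kakutani for the representing measure, propagate to all of $\G$ via equivariance, and then apply Lebesgue decomposition/Radon--Nikodym for the refined statement. The only cosmetic difference is that the paper first splits $\lambda$ into four positive pieces via Hahn--Jordan before applying the (positive-measure) Lebesgue decomposition, whereas you apply it directly to the complex measure; both are fine.
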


\begin{remark}
    Theorem \ref{thme: equivariant linear map} shows that a general equivariant linear map consists of two main parts: an inner product component and a singular component. The singular component accounts for edge cases, such as when \(\phi\) is the identity, but it can be ignored in practice. The construction in equation \eqref{eq: equivariant linear map} corresponds to the special case in which the singular measure \(\nu \equiv 0\). Furthermore, when \(\mathcal{X}\) is the group itself, \eqref{eq: equivariant linear map} reduces to convolution on the group (\ref{eq: convo_gen}). This shows that a classical CNNs can be viewed as convolution on the group \(\RR^2\), or more broadly, as an equivariant linear map under the action of translation on \(\RR^2\).
\end{remark}

\begin{remark}
    \citet{kondor18a} proved a special case of Theorem \ref{thme: equivariant linear map} for compact groups using a Fourier space based argument. However, their method is more intricate and requires stronger assumptions about the underlying function spaces. \citet{cohen2019general} proved a related result in the setting of homogeneous spaces of locally compact groups. Their analysis assumes that the linear maps can be expressed in integral form, which excludes the appearance of singular measures in their formulation. In contrast, Theorem \ref{thme: equivariant linear map} applies more broadly. As established in Lemma \ref{lemma: homogeneous}, if the group $\G $ is locally compact and $\sigma$-compact, then a homogeneous space of \(\G \) (a space where the action of \(\G \) is continuous and transitive) inherits these topological properties, thereby satisfying the assumptions of the theorem. Specific cases involving locally compact groups such as $\SE(3)$ and $\textrm{E}(d)$ were previously addressed by \citet{weiler20183d} and \citet{cesa2022a}, respectively. Theorem \ref{thme: equivariant linear map} offers a cleaner and more general formulation, clearly stating the assumptions on the domains and function spaces, and unifying all of these prior results as special cases.
\end{remark}
\section{Fourier Space Neural Networks}\label{sec: fourier neural network}

The Fourier transform of a function on \m{\RR} is given by 
\begin{equation}\label{eq: fourier transform}
    \h f(\k)=\int f(\x)\,e^{\iota\mathbf{k}\cdot \x}\,d\x.
\end{equation}
In Fourier space, by the well known \emph{convolution theorem}, convolution reduces to 
\begin{equation}\label{eq: Fconvo}
\widehat{(f\ast w)}(\k)=\h f(\k)\,\h w(\k).
\end{equation}
In general, \eqref{eq: Fconvo} is of course much more efficient to compute than \eqref{eq: convo0}. However, the filters in classical CNNs are typically so small that the Fourier approach would afford little or no advantage, so most CNNs still operate in the time domain. In contrast, when functions are defined on nontrivial groups, the computation of convolution in the time domain becomes considerably more challenging. For example, on the rotation group in $d$ dimensions, $\SO(d)$ (see Appendix \ref{sec: background groups} for definition), time domain convolution is not only computationally expensive as it requires an exponential number of discretization points with respect to $d$, but also susceptible to numerical instabilities, particularly due to singularities near the poles. These factors make performing direct convolution operations on such groups particularly challenging, which makes Fourier space methods a more practical option. Remarkably, the Fourier transform generalizes to any compact group \(\G \) in the form 
\begin{equation}\label{eq: Gfourier}
    \h f(\rho)=\int_\G f(g)\,\rho(g)\,d\mu(g). 
\end{equation}
Here, instead of just being complex exponential, \(\rho\) extends over all \emph{irreducible representations (irreps)} of \(\G \) (see Appendix \ref{sec: background group representations} for definition). The irreps of a compact group are finite dimensional unitary complex matrix valued functions \(\rho\colon \G \to\CC^{d_\rho\times d_\rho}\), and hence the Fourier components are now complex matrices instead of a complex scalar. The convolution theorem also generalizes in the elegant form
\begin{equation}\label{eq: Gconvo} 
    \widehat{f\ast g}(\rho)=\h f(\rho)\,\h g(\rho), 
\end{equation}
where the right hand side now involves multiplying \(\h f(\rho)\) and \(\h g(\rho)\) as matrices (the convolution theorem for compact groups is proved in Lemma \ref{lem: group convolution theorem}). Neural networks operating in Fourier space have become very popular \citep{weiler20183d, weiler2018learning, SphericalCNNNeurIPS2018, thomas2018tensor,anderson2019cormorant}. We shall see that it is critical for both understanding and implementing steerable neural networks. 

\subsection{The Clebsch--Gordan Decomposition} 

A key tool required for deriving steerable neural networks is the Clebsch--Gordan decomposition (CG--decomposition). In general, this is used to decompose the tensor product of two irreps of a given group \(\G \) into a direct sum of irreps,
\begin{equation}\label{eq: CGdef}
    \rho_1(g)\<\otimes \rho_2(g)=
    C_{\rho_1,\rho_2} \sqbBig{\bigoplus_{\rho} \bigoplus_{i=1}^{\kappa(\rho_1,\rho_2,\rho)} \rho(g)}C_{\rho_1,\rho_2}^\dagger, \quad g\in  \G .
\end{equation}
Here, \(\kappa(\rho_1,\rho_2,\rho)\) is the multiplicity of \(\rho\) appearing in the CG--decomposition of \(\rho_1\otimes\rho_2\). The unitary matrices \(C_{\rho_1,\rho_2}\) are the so called the Clebsch--Gordan matrices (CG--matrices). While at first sight the CG--decomposition may seem complicated, in some of the relevant special cases it reduces to a simple form. For example, in the case of the two dimensional rotation group \(\SO(2)\), the irreps are all one dimensional and given by \(\rho_k(\theta)=e^{\iota k\theta}\) for \(k\in \ZZ\). In this setting equation \eqref{eq: CGdef} reduces to   
    \begin{equation*}
        \rho_{k_1}(\theta)\,\rho_{k_2}(\theta)=\rho_{k_1+k_2}(\theta).
    \end{equation*}
While the CG--decomposition has appeared in neural nets in different roles, we argue that the fundamental reason that they appear in steerable convolutions is the following result. 
\begin{prop}\label{prop: diagF}
    Let \(\G \) be a compact group and \(f\) a function on the product group \(\G \times \G \) with Fourier transform 
        \begin{equation}\label{eq: diagF1}
        \h f(\rho_1,\rho_2)=\!\!\int_{\G }\int_\G\!\!\!\!f(g_1,g_2)\brbig{\rho_1(g)\otimes \rho_2(g)}\,d\mu(g_1) d\mu(g_2).
        \end{equation}
    Then the Fourier transform of the restriction of \(f\) to the diagonal subgroup,
    \(\diagf{f}(g):=f(g,g)\), is given by
        \begin{align}\label{eq: CG--decomposition}
        \h{\diagf{f}}(\rho)
        &= \sum_{\rho_1,\rho_2}\!\! \sum_{i=1}^{\kappa(\rho_1,\rho_2,\rho)} \fr{d_{\rho_1} d_{\rho_2}}{d_\rho\mu(\G )}\sqbBig{C_{\rho_1,\rho_2}^\dag\, \h f(\rho_1,\rho_2)\,C_{\rho_1,\rho_2}}_{\rho,i},
        \end{align}
    where \([A]_{\rho,i}\) denotes the \(i\)'th diagonal block in the matrix \(A\) corresponding to the 
    irrep \(\rho\). 
\end{prop}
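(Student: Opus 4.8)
The plan is to compute $\h{\diagf{f}}(\rho)$ straight from the definition by first expanding the diagonal restriction $\diagf{f}(g)=f(g,g)$ through Fourier inversion on the product group and then applying the $\G$-Fourier transform term by term. Recall that the irreps of $\G\times\G$ are precisely the external tensor products $(g_1,g_2)\mapsto\rho_1(g_1)\otimes\rho_2(g_2)$, so that \eqref{eq: diagF1} really is the $\G\times\G$-Fourier transform and, for $f$ regular enough (say continuous with a uniformly convergent Fourier series), the Peter--Weyl inversion formula gives
\[
 f(g_1,g_2)=\frac{1}{\mu(\G)^2}\sum_{\rho_1,\rho_2}d_{\rho_1}d_{\rho_2}\,\operatorname{tr}\!\bigl[\bigl(\rho_1(g_1)\otimes\rho_2(g_2)\bigr)^\dagger\,\h f(\rho_1,\rho_2)\bigr].
\]
Setting $g_1=g_2=g$ and substituting this into $\h{\diagf{f}}(\rho)=\int_\G \diagf{f}(g)\,\rho(g)\,d\mu(g)$ reduces the whole problem to evaluating, for each pair $(\rho_1,\rho_2)$, the integral $\int_\G \operatorname{tr}\!\bigl[(\rho_1(g)\otimes\rho_2(g))^\dagger\h f(\rho_1,\rho_2)\bigr]\,\rho(g)\,d\mu(g)$.

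Next I would bring in the Clebsch--Gordan decomposition. Writing \eqref{eq: CGdef} as $\rho_1(g)\otimes\rho_2(g)=C_{\rho_1,\rho_2}\bigl(\bigoplus_{\sigma}\bigoplus_{i}\sigma(g)\bigr)C_{\rho_1,\rho_2}^\dagger$ and using unitarity of $C_{\rho_1,\rho_2}$, the conjugate transpose becomes $C_{\rho_1,\rho_2}\bigl(\bigoplus_{\sigma,i}\sigma(g)^\dagger\bigr)C_{\rho_1,\rho_2}^\dagger$; cyclicity of the trace then turns the integrand into $\operatorname{tr}\!\bigl[\bigl(\bigoplus_{\sigma,i}\sigma(g)^\dagger\bigr)M\bigr]$ with $M:=C_{\rho_1,\rho_2}^\dagger\,\h f(\rho_1,\rho_2)\,C_{\rho_1,\rho_2}$. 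Since $\bigoplus_{\sigma,i}\sigma(g)^\dagger$ is block diagonal, this trace collapses to $\sum_{\sigma,i}\operatorname{tr}\!\bigl[\sigma(g)^\dagger\,[M]_{\sigma,i}\bigr]$, where $[M]_{\sigma,i}$ is the $i$-th diagonal block of $M$ belonging to $\sigma$; the off-diagonal blocks of $M$ drop out entirely.

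The third step is to interchange the (absolutely convergent) double sum with the integral over $\G$ and to evaluate $\int_\G \operatorname{tr}\!\bigl[\sigma(g)^\dagger A\bigr]\,\rho(g)\,d\mu(g)$ for a fixed $d_\sigma\times d_\sigma$ matrix $A$. Expanding the trace entrywise and invoking the Schur orthogonality relations $\int_\G \overline{\sigma_{ab}(g)}\,\rho_{cd}(g)\,d\mu(g)=\frac{\mu(\G)}{d_\rho}\delta_{\sigma\rho}\delta_{ac}\delta_{bd}$ shows this equals $\frac{\mu(\G)}{d_\rho}\delta_{\sigma\rho}\,A$. Hence only the blocks with $\sigma=\rho$ survive, the index $i$ then running over $1,\dots,\kappa(\rho_1,\rho_2,\rho)$, and gathering the constants $\frac{1}{\mu(\G)^2}\cdot d_{\rho_1}d_{\rho_2}\cdot\frac{\mu(\G)}{d_\rho}=\frac{d_{\rho_1}d_{\rho_2}}{d_\rho\,\mu(\G)}$ yields precisely \eqref{eq: CG--decomposition}.

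I expect the main obstacle to be bookkeeping rather than any single hard idea: keeping consistent track of the conjugate transposes, of the block structure that $C_{\rho_1,\rho_2}$ imposes, and of the three normalization factors that enter (the $\mu(\G)^{-2}$ from inversion on $\G\times\G$, the Plancherel weights $d_{\rho_1}d_{\rho_2}$, and the $\mu(\G)/d_\rho$ from Schur orthogonality), all in the normalization convention fixed by \eqref{eq: Gfourier}. A secondary, purely technical point is justifying the pointwise Fourier inversion and the sum--integral interchange; I would handle this either by assuming enough regularity on $f$ so that its Fourier series converges uniformly, or by first checking the identity on the dense span of products of matrix coefficients and then passing to the limit, and I would state the hypothesis accordingly rather than belabor it.
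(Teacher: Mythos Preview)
Your proposal is correct and follows essentially the same route as the paper: expand $f(g,g)$ via Fourier inversion on $\G\times\G$, apply the Clebsch--Gordan change of basis to block-diagonalize $\rho_1(g)\otimes\rho_2(g)$, and then isolate the $\rho$-blocks. The only cosmetic difference is that the paper, after reaching the expression $\diagf f(g)=\sum_{\rho}\sum_{\rho_1,\rho_2,i}\frac{d_{\rho_1}d_{\rho_2}}{\mu(\G)^2}\operatorname{tr}\bigl[F^{\rho,i}_{\rho_1,\rho_2}\,\rho(g^{-1})\bigr]$, simply matches this against the inverse Fourier formula $\diagf f(g)=\sum_\rho\frac{d_\rho}{\mu(\G)}\operatorname{tr}\bigl[\h{\diagf f}(\rho)\,\rho(g^{-1})\bigr]$ to read off $\h{\diagf f}(\rho)$, whereas you integrate against $\rho(g)$ and invoke Schur orthogonality explicitly; these are equivalent, since uniqueness of Fourier coefficients is precisely Schur orthogonality.
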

While the CG--decomposition involves the tensor product of group representations, as we will see, it is a crucial component in deriving steerable convolutions. To make the notation more streamlined, we will use the notation \(\substack{\textrm{CG}\\ \rho_1\rho_2\to \rho}\left(\h f(\rho_1,\rho_2)\right)\) to denote the transformation in the right hand side of \eqref{eq: CG--decomposition}.

\begin{remark}\label{remark: CG--matrices}
   In the relevant cases of $\SO(2)$ and $\SO(3)$, the multiplicity of any irrep $\rho$ in the CG--decomposition of $\rho_1 \otimes \rho_2$ satisfies $\kappa(\rho_1, \rho_2, \rho) \leq 1$. This means that the multiplicity of any irrep \(\rho\) appearing in the CG--decomposition has multiplicity exactly one. For simplicity, we adopt this assumption going forward. With this assumption, we denote by $C^{(\rho, \rho_1, \rho_2)} \in \CC^{d_{\rho_1}d_{\rho_2} \times d_\rho}$ the subset of columns of the CG--matrix $C_{\rho_1, \rho_2}$ corresponding to the irrep $\rho$.
\end{remark}
\section{Steerable Convolutions}\label{sec: steerable network}

Steerable neural networks are constructed to enforce equivariance under rigid body transformations. In mathematical terms, this means that in \(d\) dimensional space, the network layers exhibit equivariance to the natural action of the \emph{Special Euclidean group}. Theorem \ref{thme: equivariant linear map} provides the foundation for defining equivariant linear maps in this framework. In this section, we will explore how this general form of equivariant linear maps and convolution theorem can be used to derive the equations for steerable convolutions.

\subsection{The Special Euclidean Group}

The group of encompassing all rotations and translations is referred to as the special euclidean group and is denoted by \(\SE(d)\) in \(d\) dimensions. Any element of \(\SE(d)\) can be expressed as \((\mathbf{t}, R)\), where \(\mathbf{t} \in \mathbb{R}^d\) represents the translational component, and \(R \in \SO(d)\) denotes the rotational component. This group acts on a vector \(\mathbf{x} \in \mathbb{R}^d\) according to the rule \((\mathbf{t}, R) \cdot \mathbf{x} = R \mathbf{x} + \mathbf{t}\), and the group operation is given by  
\begin{equation*}
    (\mathbf{t}', R') (\mathbf{t}, R) = (\mathbf{t}' + R' \mathbf{t}, \, R' R).
\end{equation*}
From an algebraic point of view, \(\SE(d)\) is the so called \emph{semi-direct product} \(\RR^d\rtimes \SO(d)\) (see Appendix \ref{sec: background groups} for more details).
In line with Definition \ref{defn: group action}, the action of \((\mathbf{t}, R)^{-1}\) on a function \(f\) defined over \(\mathbb{R}^d\) would be
\begin{equation*}
    [(\mathbf{t}, R)^{-1} \cdot f](\mathbf{x}) = f(R \mathbf{x} + \mathbf{t}).    
\end{equation*}
On the other hand, when \(f\) is defined on the group itself, the action becomes 
\begin{equation*}
    [(\mathbf{t}, R)^{-1} \cdot f](\mathbf{x}, R') = f((\mathbf{t}, R)(\mathbf{x}, R')) = f(R \mathbf{x} + \mathbf{t}, R R').
\end{equation*}

\subsection{First Layer}\label{sec: first layer}

We will now derive the equations of steerable convolutional networks, by applying the general equivariant linear map \eqref{eq: equivariant linear map} in the context of the group \(\SE(d)\).
The input field \(\fin\) is a function defined on \(\RR^d\). Similar to classical CNNs, the weight function \(w\) is assumed to be compactly supported on \(\Omega := [-h,h]^d\). 
With this setup, the equivariant linear map \eqref{eq: equivariant linear map} becomes
\begin{equation}\label{eq: nsteer1}
    \fpre(\x, R) = \langle {(\x, R)^{-1}}\cdot\fin, w\rangle =  \int_{[-h,h]^d} \fin(\x + R\y)w(\y)d\y.
\end{equation}
Equation \eqref{eq: nsteer1} captures the core operation of steerable convolutions. The key difference from classical CNNs is evident in this formulation: instead of applying the filter to a single patch at each spatial location $\x \in \RR^d$, steerable convolutions apply the filter to rotated versions of the patch, for \emph{every} rotation $R \in \SO(d)$.
Specifically, in two dimensions, the pre-activation function simplifies to
 \begin{equation}\label{eq: fpatch1}
     \fpre(\x,\phi) = \int_{[-h,h]^2} \fin(\x + R_\phi\y)w(\y)d\y,
\end{equation} 
where \(R_\phi\) is the standard 2D rotation matrix
\begin{equation*}
    R_\phi = 
    \begin{bmatrix}
        \cos\phi & -\sin\phi \\
        \sin\phi & \cos\phi
    \end{bmatrix}.
\end{equation*}
Intuitively, \(\fpre(\x, \phi)\) quantifies how well the filter \(w\) matches the input \(\fin\) at location \(\x\) for each orientation \(\phi\). This generalizes the pattern matching mechanism in classical CNNs: a single filter \(w\) is not only applied to all spatial patches, but also to their rotated versions at every possible angle. This added rotational dimension allows steerable neural networks to effectively capture features invariant or equivariant to rotations (see Figure \ref{fig: CNNconvo}).

\begin{figure}
     \centering
     \begin{subfigure}[b]{0.45\textwidth} 
         \centering
         \includegraphics[scale=0.37]{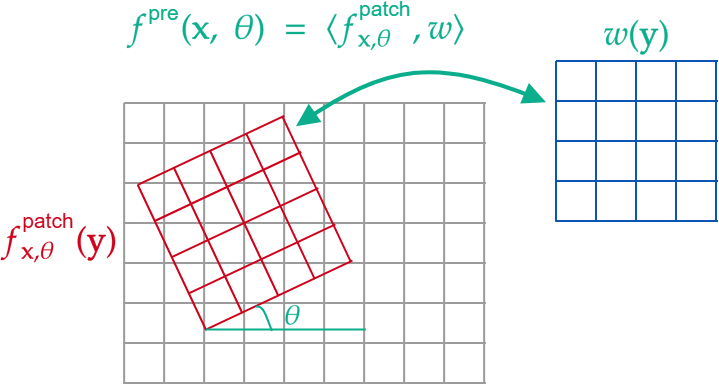}
         \caption{}
         \label{fig: CNNconvo} 
     \end{subfigure}
     \hfil
     \begin{subfigure}[b]{0.45\textwidth}
         \centering
         \includegraphics[scale=0.37]{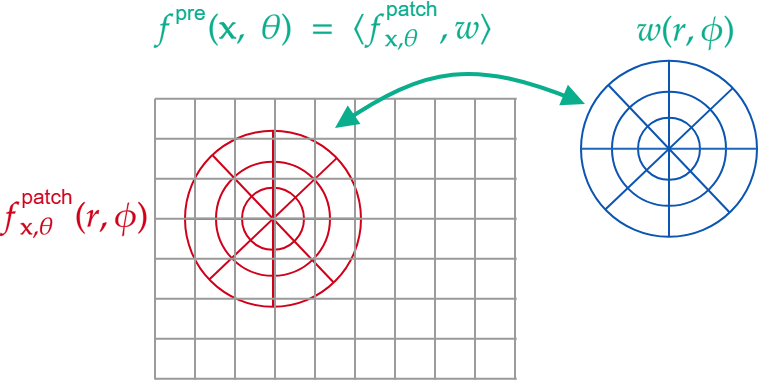}
         \caption{}
         \label{fig: polar} 
     \end{subfigure}
     \hfill
\caption{Illustration of the core principle of steerable convolutions in two dimensions. For each pixel location $\x$, the filter is matched against the input at all possible orientations $\theta$. (a) The patch grid $\fpatch_{\x,\theta}$ is generally not aligned with the underlying image grid. (b) To address this, both the patches centered at $\x$ and the filters are expressed in polar coordinates $(r,\theta)$.}
\end{figure}

Generalizing the pattern matching principle to the first layer of a steerable convolutional network leads to the expression in \eqref{eq: nsteer1}. In practice, when working with finite resolution images represented on a square grid, this operation becomes more involved. The added difficulty arises from the need to interpolate values from the original image grid to a grid at arbitrary rotations \(R \in \SO(d)\).
To simplify this process, one can transition from square filters to filters supported on a ball of radius \(h\), expressing both the input and filter in polar coordinates (see Figure \ref{fig: polar}). In this representation, any \(\y \in \RR^d\) can be written as \(\y = rR'\e\), for some \(r \geq 0\), \(R' \in \SO(d)\) and a fixed reference vector \(\e \in \RR^d\). The patch function in polar coordinates is then defined as \(\fpatch_\x(r, R') = \fin(\x + rR'\e)\).
Using this patch function, the expression in \eqref{eq: nsteer1} can be simplified to
\begin{equation}\label{eq: fpre first}
    \fpre(\x, R) = \int_0^h\int_{\SO(d)} \fpatch_\x(r,RR'^{-1})\wbar{w}(r,{R'}) r^{d-1} \, d\mu(R')\, dr,
\end{equation}
where \(\wbar{w}(r, {R'}^{-1}) = w(rR'\e)\). The equality in \eqref{eq: fpre first} is a direct consequence of Lemma \ref{lem: cartesian to polar} and \ref{lem: inverse haar}. Note that, for a fixed \(r\), the inner integral is a convolution over the group \(\SO(d)\). This observation naturally motivates a Fourier approach, as convolutions in the angular domain is more efficient in Fourier space.
Applying the convolution theorem, \(\fpre(\x, R)\) can be represented in Fourier space as
\begin{equation}\label{eq: fpre first fourier}
    \h\fpre(\x, \rho) = \int_0^h  \h {\fpatch_{\x}}(r,\rho)\,\h{\wbar{w}}(r,\rho) \,r^{d-1}\, dr,
\end{equation}
where the Fourier transform of the patch function \(\fpatch_\x\) is given by
\begin{equation}\label{eq: fpatch first fourier}
     \h {\fpatch_{\x}}(r,\rho) = \int_{\SO(d)} \fpatch_{\x}(r, R)\rho(R)d\mu(R).
\end{equation}

\begin{remark}
    The group \(\SE(d)\) itself is not compact. Without compactness, we cannot ensure that the irreps are finite dimensional. However, the rotation component \(\SO(d)\) within \(\SE(d)\) is indeed a compact group. In the derivation of steerable convolutional layers, we capitalize on this fact and employ the convolution theorem solely on the angular component. This results in a mixed Fourier approach, where convolution in the translation component is performed in the time domain, while the convolution in the angular component is carried out in Fourier domain.
\end{remark}

\subsection{Higher Layers}\label{sec: higher layers}

The output of \eqref{eq: nsteer1} is a function defined on both position \(\x\) and rotation \(R\), meaning that in subsequent layers, both the input field \(\fin\) and the filter \(w\) are functions defined on \(\SE(d)\). To proceed, we need to establish a compact support for the filter \(w\). This can be achieved by setting the support to \(\Omega := [-h, h]^d \times \SO(d)\), which is compact since \(\SO(d)\) itself is a compact group. Under this assumption, the equivariant linear map for higher layers takes the form
\begin{align}\label{eq: nsteer2}
     \fpre(\x, R) 
     &=  \langle {(\x, R)^{-1}}\cdot\fin, w\rangle\nonumber\\
     &= \int_{\Omega} \fin(\x+R\y, RR')w(\y, R')\,d\mu_{\SE(d)}(\y,R') \nonumber\\
     &= \int_{[-h,h]^d}\int_{\SO(d)} \fin(\x+R\y, RR')w(\y, R')\,d\mu(R')\,d\y.
\end{align}
The last equality follows from Lemma \ref{lemma: SE(d) haar}. Similar to the first layer, we will express \eqref{eq: nsteer2} in Fourier space. By leveraging the compactness of \(\SO(d)\), the convolution theorem can be applied to the angular component, leading to a Fourier space representation that simplifies computations. The details of this transformation and its implications for steerable convolutions are provided below.

We begin by defining the patch function at each position \(\x\). Since the input \(\fin\) already includes a rotational component, the patch function in polar coordinates involves two rotation arguments: \(\fpatch_{\x}(r, R', R'') = \fin(\x + rR''\e, R')\). Using this formulation, equation \eqref{eq: nsteer2} can be simplified to
\begin{equation}\label{eq: fpre higher}
    \fpre(\x, R) = \int_0^h \int_{\SO(d)}\int_{\SO(d)} \fpatch_\x(r, RR'^{-1}, RR''^{-1}) \wbar{w}(r,R',R'') r^{d-1}\,d\mu(R')\,d\mu(R'')\,dr,
\end{equation}
where \(\wbar{w}(r, R'^{-1}, R''^{-1}) = w(rR'\e, R'')\). Although equation \eqref{eq: fpre higher} bears a resemblance to a convolution over the group \(\SO(d) \times \SO(d)\), it is not quite the same thing. The inner product calculation involves integrating over \(R'\) and \(R''\), which are coupled through \(R\). As a result, applying the convolution theorem is not straightforward. To address this, we introduce ``shadow rotations'' \(R_1\) and \(R_2\), and define the quantity
 \begin{equation}\label{eq: q_x}
     q_\x(R_1, R_2) := \int_{0}^{h}\!\int_{\SO(d)}\int_{\SO(d)}
    \fpatch_{\x}(r,R_1R'^{-1}, R_2R''^{-1})\,\wbar{w}(r,R',R'')\,r^{d-1}\,d\mu(R')\,d\mu(R'')\,dr,
\end{equation}
which, for a fixed radius \(r\), can now be interpreted as a convolution on the group \(\SO(d) \times \SO(d)\). In Fourier space, this convolution is expressed as
\begin{equation}\label{eq: q_x hat expression}
    \h{q}_\x( \rho_1, \rho_2) = \int_0^h \h{\fpatch_{\x}}(r, \rho_1, \rho_2) \h{\wbar{w}}(r, \rho_1, \rho_2)\, r^{d-1} \, dr,
\end{equation}
where \(\h{\fpatch_{\x}}(r, \rho_1, \rho_2)\) is given by
\begin{equation}\label{eq: fpatch higher fourier}
    \h{\fpatch_{\x}}(r, \rho_1, \rho_2) = \int_{\SO(d)}\int_{\SO(d)} \fpatch_{\x}(r, R',R'') (\rho_1(R')\otimes \rho_2(R''))\,d\mu(R') \,d\mu(R'').
\end{equation}

Given that \(\fpre(\x, R) = q_\x(R, R)\), it remains to extract \(\h{{\fpre}}(\x, \rho)\) from \(\h q_{\x}(\rho_1,\rho_2)\). As Proposition \ref{prop: diagF} suggests, this operation can be carried out efficiently using the CG--decomposition of $\h q_\x$, yielding
\begin{equation}\label{eq: fpre higher fourier}
    \h{\fpre}(\x,\rho) = \substack{\textrm{CG} \\ \rho_1, \rho_2 \to \rho} \left(\h q_\x(\rho_1, \rho_2)\right) =  \int_0^h \substack{\textrm{CG} \\ \rho_1, \rho_2 \to \rho} \left( \h{\fpatch_{\x}}(r, \rho_1, \rho_2) \h{\wbar{w}}(r, \rho_1, \rho_2)\right) \,r^{d-1}\, dr,
\end{equation}
Recall the notation $\substack{\textrm{CG} \\ \rho_1, \rho_2 \to \rho}(\cdot)$, introduced in Remark \ref{remark: CG--matrices}, which denotes the CG--decomposition \eqref{eq: CG--decomposition}.

\begin{remark}\label{remark: steerable constraint}
    From Proposition \ref{prop: equivariant linear map}, we know that when an element \((\t,R)\in\mathrm{SE}(d)\) acts on the input map as \(\fin\mapsto(\t, R)^{-1} \cdot \fin = \fin(R\x+\t)\), the resulting pre-activation function transforms accordingly as
    \begin{equation*}
        \fpre(\x,R) \mapsto [(\t, R)^{-1} \cdot \fpre](\mathbf{x}, R') = \fpre(R\mathbf{x} + \t, RR').
    \end{equation*}
    Since the rotational component is expressed in Fourier space, the translation property of the Fourier transform ensures that \(\h{\fpre}\), for each irrep \(\rho\), transforms as
    \begin{equation}
        \h\fpre(\x, \rho) \mapsto [(\t,R)^{-1}\cdot \h\fpre](\x,\rho) = \rho(R)^\dagger \h\fpre(R\x+\t, \rho).
    \end{equation}
\end{remark}

\subsection{Comparison to Previous Derivations}\label{sec: prev work}
In this section, we will discuss the approach taken by previous works in deriving steerable convolutions. Previous works, such as those by \citet{cohen2017steerable, weiler2018learning, thomas2018tensor, weiler2019general, cesa2022a} introduce steerable convolutions as convolving ``steerable features'' with ``steerable filters''. First, let us define these concepts.

\begin{defn}\label{defn: steerable feature map}
    A feature map \(f:\mathbb{R}^d \to \mathbb{C}^{d_{\rho}}\) is said to be \(\rho\)-steerable if, under the action of any \((\t, R) \in \mathrm{SE}(d)\), it transforms according to \([(\t, R)^{-1} \cdot f](\mathbf{x}) = \rho(R)^{\dagger} f(R\mathbf{x} + \t)\).
\end{defn}

\begin{defn}\label{defn: steerable filter}
A filter \(K:\mathbb{R}^d \to \mathbb{R}^{d_\rho \times d_{\rho_1}}\) is called a \((\rho, \rho_1)\)-steerable filter if, for any \(R \in \mathrm{SO}(d)\), it satisfies the condition
\(K(R\mathbf{y}) = \rho(R) K(\mathbf{y}) \rho_1(R)^{\dagger}\).
\end{defn}

In essence, a \((\rho, \rho_1)\)-steerable filter ``steers'' \(\rho_1\)-steerable features to \(\rho\)-steerable features. Specifically, when a \(\rho_1\)-steerable feature map \(f\) is convolved with a \((\rho, \rho_1)\)-steerable filter \(K\), the output feature map is \(\rho\)-steerable:
\begin{align*}
    [K * (\t,R)^{-1}\cdot f](\x)
    &= \int_{\RR^d} K(\y) \rho_1(R)^\dagger f(R\x+\t - R\y,\rho_1)d\y
    = \int_{\RR^d} K(R^{-1}\y) \rho_1(R)^\dagger f(R\x+\t - \y,\rho_1)d\y\\
    &= \int_{\RR^d} \rho(R)^\dagger K(\y)  f(R\x+\t - \y,\rho_1)d\y
    = (\t,R)^{-1}\cdot [K * f](\x).
\end{align*}
The discussion in Remark \ref{remark: steerable constraint} establishes that the pre-activation function in Fourier space, $\widehat{\fpre}(\cdot, \rho)$, is $\rho$-steerable. The geometric derivation of steerable convolutions, presented in Section \ref{sec: steerable network}, shows that steerable feature maps can be understood as functions defined on the group $\mathrm{SE}(d)$, with the rotational component expressed in the Fourier domain. While the concept of steerable convolutions, when viewed through the lens of steerable features and filters, simplifies the derivation, it does not offer much insight into their underlying principles. Our derivation demonstrates that under the hood, we are essentially matching the same filter across all possible rotations. This can be understood as a generalization of the core principles of CNNs to the group \(\SE(d)\).
\section{Implementation}\label{sec: steerable implementation}

In section \ref{sec: steerable network}, we discussed how the pattern matching argument in CNNs can be replaced by a direct appeal to the standard method of generalizing equivariant linear maps \eqref{eq: equivariant linear map} to \(\SE(d)\), leading to the equations of steerable convolutions in first and higher layers. While this derivation provides a solid theoretical basis, it offers limited guidance for practical network design and implementation. In this section, we will build on this foundation to expand on how to implement steerable convolutional layers.

\subsection{Interpolation}\label{sec: interpolation}

In practical implementations, the input to a convolution layer are finite resolution rasterized images defined on a grid. These images can be thought of as functions on \( \ZZ^d \), which are zero outside a finite set. As discussed in Section \ref{sec: steerable network}, these images need to be interpolated from a Cartesian coordinate system to a spherical coordinate system. Standard interpolation techniques, such as linear and cubic interpolation, fundamentally operate by computing a weighted sum of values from a grid. Rather than focusing on a specific technique, we take a more general perspective and define interpolation in terms of a discrete kernel, as outlined below.
\begin{defn}\label{defn: interpolation}
    An interpolation kernel is a function \( \I: \RR^d \times \ZZ^d \to \RR \) that satisfies the following properties:
    \begin{enumerate}
        \item[(a)] For any function \( f: \ZZ^d \to \mathbb{C} \), the interpolated function \( \I[f]: \RR^d \to \mathbb{C} \) is given by \( \I[f](\x) = \sum_{\y \in \ZZ^d} \I(\x, \y) f(\y)\).
        \item[(b)] For any \( \x \in \RR^d \) and \( \y, \mathbf{z} \in \ZZ^d \), the kernel satisfies \( \I(\x + \mathbf{z}, \y + \mathbf{z}) = \I(\x, \y) \).
        \item[(c)] For any \( \y \in \ZZ^d \), the function \(\I(\cdot, \y)\) is \(\alpha\)-H\"older for \(\alpha \in [0,1]\), i.e., there exists constants \(M,C>0\) such that for all \(\x,\z\in \RR^d\), \(|\I(\x, \y)|\leq M\) and \(|\I(\x,\y) - \I(\z,\y)|\leq C\|\x-\z\|_2^\alpha\).
        \item[(d)] For any \( \x \in \RR^d \), the quantity \( \sum_{\y \in \ZZ^d} |\I(\x, \y)|\) is uniformly bounded above by some constant.
    \end{enumerate}
\end{defn}
Interpolation from a Cartesian grid to a spherical grid introduces errors that result in loss of equivariance. To quantify this loss, we define the following quantity.
\begin{defn}\label{defn: interpolation error}
    Suppose \(\G \) is a group acting on \(\RR^d\). For any \(g\in \G \), define
    \begin{equation}
        \Delta(g) := \sup_{\x\in \RR^d} \sup_{\y\in \ZZ^d} \left\vert \I(g\cdot \x, \y)  - \sum_{\z\in \ZZ^d}\I(\x,\z)\I(g\cdot \z, \y) \right\vert.
    \end{equation}
    The interpolation is said to be exact with respect to the action of \(g\), if \(\Delta(g) = 0\).
\end{defn}

In essence, \(\Delta(g)\) measures the maximum potential information loss incurred when evaluating an input function on the grid obtained by translating it by \(g\) through interpolation. More precisely, when we evaluate the \(g\)-translated version of a function \(f\in\mathcal{L}_1(\ZZ^d)\) at a point \(\x\), the deviation from the value obtained by directly evaluating the original function at \(g \cdot \x\) is bounded above by \(\Delta(g)\|f\|_1\):
\begin{align*}
     \Bigg|\I[f](g\cdot \x) - \I[g^{-1}\cdot \I[f]](\x)\Bigg| 
    =& \left\vert \sum_{\y\in \ZZ^d} \I(g\cdot \x,\y)f(\y) - \sum_{\z\in \ZZ^d} \I(\x,\z) [g^{-1}\cdot \I[f]](\z) \right\vert\\
    =& \left\vert \sum_{\y\in \ZZ^d} \I(g\cdot \x,\y)f(\y) - \sum_{\z\in \ZZ^d} \I(\x,\z) \I[f](g\cdot \z) \right\vert\\
    =& \left\vert \sum_{\y\in \ZZ^d} \I(g\cdot \x,\y)f(\y) - \sum_{\y,\z \in \ZZ^d} \I(\x,\z) \I(g\cdot \z, \y) f(\y) \right\vert\\
    =& \left\vert \sum_{\y\in \ZZ^d} \left(\I(g\cdot \x,\y) - \sum_{\z\in \ZZ^d}\I(\x,\z) \I(g\cdot \z, \y)\right)f(\y) \right\vert\\
    \leq& \Delta(g) \|f\|_1
\end{align*}
Thus, \(\Delta(g)\) provides a uniform upper bound on the interpolation error induced by the group action.

\begin{figure}
    \centering
    \includegraphics[scale=0.5]{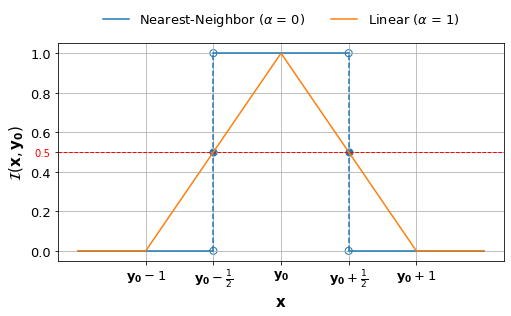}
    \caption{Illustration of the interpolation kernel \(\I\) at a fixed \(\y_0\) in one dimension. The nearest-neighbor interpolation is an example of a discontinuous kernel, hence corresponding to the case \(\alpha=0\) in Definition \ref{defn: interpolation}(c). In contrast, linear interpolation varies linearly between adjacent grid points, hence corresponding to the case \(\alpha=1\). In \(d\) dimensions, the kernel \(\I\) is constructed by multiplying the kernel values along each dimensions.}
    \label{fig: intepolation kernel}
\end{figure}

\subsection{Fourier Transform}\label{sec: SHT}

The interpolation into spherical coordinates is followed by applying the Fourier transform over the rotation group. At first glance, the idea of integrating over a (non-commutative) group might appear intimidating. However, on closer examination, we see that for a fixed radius \(r\), \(\fpatch_\x(r,\cdot)\) is actually a function on the \(d\) dimensional sphere \(\SS^{d-1}\). This observation allows us to use \emph{spherical harmonic} basis functions (see Appendix \ref{sec: background spherical harmonics} for definition) to simplify the integral.

In \(d\) dimensions, spherical harmonics form a basis for \(\mathcal{L}_2(\SS^{d-1})\), the space of all square integrable function on \(\SS^{d-1}\), and are closely related to the symmetric traceless irreps of \(\SO(d)\) (see Appendix \ref{sec: background spherical harmonics} for details). A spherical harmonic associated with an irrep \(\rho\) of \(\SO(d)\) is a function \(\Y^{(\rho)}:\SS^{d-1}\to \CC^{d_\rho}\) that satisfies
\begin{equation}\label{eq: spherical harmonics}
    \Y^{(\rho)}(Rs) = \rho(R)\Y^{(\rho)}(s)
\end{equation}
for all \(s \in \SS^{d-1}\) and \(R \in \SO(d)\). The relationship between \(\rho\) and \(\Y^{(\rho)}\) in the above equation simplifies the Fourier transform over the group, as formalized in Proposition \ref{prop: SHT}. 
\begin{prop}\label{prop: SHT}
     Fix \(\e\in \SS^{d-1}\). For any \(f\in \mathcal{L}_2(\SS^{d-1})\),
    \begin{equation*}
        \int_{\SO(d)} f(R\e)\rho(R)d\mu(R) = \begin{cases}
            \bigg(\int_{\SS^{d-1}}f(s) \Y^{(\rho)}(s) \,d\sigma(s)\bigg)\Y^{(\rho)}(\e)^\dagger &\rho \text{ is a symmetric traceless irrep},\\
            0 & \text{otherwise},
        \end{cases}
    \end{equation*}
    where \(\sigma\) denotes the normalized surface measure on \(\SS^{d-1}\), and the spherical harmonics are normalized such that \(\|\Y^{(\rho)}(s)\|_2 = 1\), for all \(s\in \SS^{d-1}\).
\end{prop}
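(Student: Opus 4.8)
The plan is to reduce the integral over $\SO(d)$ to an integral over the sphere $\SS^{d-1}$ using the fact that $\SS^{d-1}$ is a homogeneous space of $\SO(d)$, and then exploit the defining property \eqref{eq: spherical harmonics} of spherical harmonics to pull the evaluation at $\e$ out of the integral. Concretely, fix $\e \in \SS^{d-1}$ and let $H = \mathrm{Stab}(\e) \cong \SO(d-1)$ be its stabilizer subgroup. The orbit map $R \mapsto R\e$ identifies $\SO(d)/H$ with $\SS^{d-1}$, and the normalized Haar measure $\mu$ on $\SO(d)$ pushes forward to the normalized surface measure $\sigma$ on $\SS^{d-1}$ (this is the standard quotient-integral formula; I would cite the appropriate lemma from the appendix on Haar measures). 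The first step is therefore to write, for any fixed $R_0$ with $R_0\e = s$,
\[
    \int_{\SO(d)} f(R\e)\,\rho(R)\,d\mu(R) = \int_{\SS^{d-1}} f(s) \left( \int_{H} \rho(R_s k)\, d\mu_H(k) \right) d\sigma(s),
\]
where $R_s$ is any chosen section with $R_s\e = s$ and $\mu_H$ is normalized Haar measure on $H$. By the homomorphism property of $\rho$, the inner integral equals $\rho(R_s)\,P_\rho$, where $P_\rho := \int_H \rho(k)\,d\mu_H(k)$ is the orthogonal projection onto the subspace of $H$-invariant vectors in $\CC^{d_\rho}$.

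The second step is to analyze $P_\rho$. By a branching/multiplicity argument (Frobenius reciprocity, or directly: the dimension of the $H$-fixed subspace equals the multiplicity of the trivial representation of $\SO(d-1)$ in $\rho|_H$), this space is at most one-dimensional for every irrep $\rho$ of $\SO(d)$, and it is exactly one-dimensional precisely when $\rho$ is a symmetric traceless irrep — this is exactly the classical characterization of spherical harmonics (functions on $\SO(d)/\SO(d-1)$) and is the content of the appendix on spherical harmonics. When the multiplicity is zero, $P_\rho = 0$ and the whole integral vanishes, giving the second case. When $\rho$ is symmetric traceless, $P_\rho = v v^\dagger$ for a unit vector $v$ spanning the fixed subspace; I claim $v$ can be taken to be $\Y^{(\rho)}(\e)$. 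Indeed, \eqref{eq: spherical harmonics} with $R \in H$ gives $\Y^{(\rho)}(\e) = \Y^{(\rho)}(R\e) = \rho(R)\Y^{(\rho)}(\e)$, so $\Y^{(\rho)}(\e)$ is $H$-fixed, and the normalization $\|\Y^{(\rho)}(s)\|_2 = 1$ makes it a unit vector; hence $P_\rho = \Y^{(\rho)}(\e)\,\Y^{(\rho)}(\e)^\dagger$.

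The final step is to assemble: substituting $\rho(R_s) P_\rho = \rho(R_s)\Y^{(\rho)}(\e)\Y^{(\rho)}(\e)^\dagger$ and using \eqref{eq: spherical harmonics} once more, $\rho(R_s)\Y^{(\rho)}(\e) = \Y^{(\rho)}(R_s\e) = \Y^{(\rho)}(s)$, so the integral becomes
\[
    \int_{\SS^{d-1}} f(s)\,\Y^{(\rho)}(s)\,d\sigma(s)\;\Y^{(\rho)}(\e)^\dagger,
\]
as claimed. One should also check that the result is independent of the choice of section $R_s$: replacing $R_s$ by $R_s k$ for $k \in H$ changes $\rho(R_s)P_\rho$ by $\rho(R_s)\rho(k)P_\rho = \rho(R_s)P_\rho$ since $P_\rho$ absorbs $H$, so there is no ambiguity; measurability of the section can be handled on the complement of a measure-zero set, which suffices for the integral.

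I expect the main obstacle to be the rigorous justification of the quotient-integral (disintegration) formula and the interchange of integrals in the first step, together with cleanly invoking the branching rule that pins down \emph{which} irreps have a nonzero $H$-fixed vector. Both are standard but slightly technical; the cleanest route is to push all the representation-theoretic content into citations of the spherical-harmonics appendix (which presumably already establishes that $\{\Y^{(\rho)}\}$ indexed by symmetric traceless $\rho$ form an orthonormal basis of $\mathcal{L}_2(\SS^{d-1})$), and to restrict attention to continuous $f$ first, extending to $\mathcal{L}_2$ by density and boundedness of both sides in the $\mathcal{L}_2$ norm.
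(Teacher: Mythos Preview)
Your proof is correct and takes a genuinely different route from the paper's. The paper expands $f$ in the spherical-harmonic basis of $\mathcal{L}_2(\SS^{d-1})$, substitutes $\Y^{(\rho')}(R\e) = \rho'(R)\Y^{(\rho')}(\e)$ to turn each term into a product of matrix coefficients, and then applies Schur orthogonality (Peter--Weyl, Theorem~\ref{thm: peter thm}) to collapse everything; the vanishing for non-symmetric-traceless $\rho$ falls out because such irreps simply never appear in the expansion. Your argument instead exploits the homogeneous-space structure $\SS^{d-1} \cong \SO(d)/\SO(d-1)$ directly: you disintegrate the Haar integral along the stabilizer fibers, recognize the inner integral as the projection $P_\rho$ onto $\SO(d-1)$-invariants, and invoke the Gelfand-pair/branching fact to identify this projection with $\Y^{(\rho)}(\e)\Y^{(\rho)}(\e)^\dagger$. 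Your route is more geometric and makes the ``otherwise $0$'' case immediate, but it needs the full Weil quotient-integral formula (Lemma~\ref{lemma: sphere integral} only gives the pushforward for functions that already factor through the sphere) and the statement that the $\SO(d-1)$-fixed subspace of $\rho$ has dimension $0$ or $1$, with $1$ exactly for symmetric traceless $\rho$---neither is explicitly in the appendix, so you would need to add or cite them. The paper's proof is more self-contained given what is already on hand (Peter--Weyl and the completeness of the $\Y^{(\rho)}$), at the cost of a slightly more computational argument.
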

The expression inside the parentheses is known as the \emph{spherical harmonic transform} of \(f\) (see Appendix \ref{sec: background spherical harmonics} for more details). This result significantly reduces complexity, as the surface measure on the sphere is far simpler to work with than the Haar measure on \(\SO(d)\). Furthermore, Proposition \ref{prop: SHT} highlights that non-symmetric irreps carry no information about \(\fpatch_\x\) and can be ignored in practice. Interestingly, in two and three dimensions, all irreps are symmetric traceless, so this distinction is not needed. From here on, we will assume the spherical harmonics are normalized such that \(\|\Y^{(\rho)}(s)\|_2 = 1\), for all \(s\in \SS^{d-1}\).

\begin{remark}
    Spherical harmonics are fundamental to the construction of steerable filters. Previous works, such as those by \citet{weiler20183d, thomas2018tensor} in three dimensions and \citet{cesa2022a} in general \(d\) dimension, introduced spherical harmonics as basis functions for designing steerable filters. Proposition \ref{prop: SHT} explains their natural emergence as a direct consequence of applying the Fourier transform on the group \(\SO(d)\), providing a deeper connection to the symmetry properties of the group.
\end{remark}

\subsection{Discretization}\label{sec: discritization}

The discussion in Sections \ref{sec: first layer} and \ref{sec: higher layers} demonstrated that steerable convolution layers involve integration over both the radial and rotation components. As discussed in Section \ref{sec: SHT}, the integration over the rotation group can be simplified to an integration over the \(d\) dimensional sphere $\SS^{d-1}$. In practice, these integrals are approximated numerically through discretization. For the radial part, the interval $(0, h]$ can be partitioned into $n_r$ equally spaced points. To approximate the spherical integration, a discretization of $\SS^{d-1}$ is necessary.

A point on $\SS^{d-1}$ can be parameterized by angular coordinates $\theta \in \Theta^{d-1} := [0,\pi]^{d-2} \times [0,2\pi)$ via the standard mapping
\begin{equation}\label{eq: sphere parameter}
    s_i(\theta) := 
    \begin{cases}
        \left(\prod_{j=1}^{i-2} \sin\theta_j\right)\cos\theta_i & 1 \leq i \leq d-1,\\
        \,\,\,\,\prod_{j=1}^{d-1} \sin\theta_j & i = d.
    \end{cases}
\end{equation}
With this parametrization, the integral of any bounded function $f: \SS^{d-1} \to \CC$ can be expressed as
\begin{align}
    &\int_{\SS^{d-1}} f(s)\, d\sigma(s) = \frac{1}{\mathcal{A}(\SS^{d-1})}\int_{\Theta^{d-1}} f(s(\theta)) \omega(\theta)\, d\theta, \label{eq: sphere integral}\\
    &\omega(\theta) := \prod_{i=1}^{d-2} (\sin \theta_i)^{d-1-i},\label{eq: quadrature}
\end{align}
where \(\sigma\) denotes the normalized surface measure on \(\SS^{d-1}\), and \(\mathcal{A}(\SS^{d-1}) := \int \omega(\theta)d\theta\) is the area of \(\SS^{d-1}\).
We use this parameterization to numerically approximate integrals over the sphere by constructing a uniform angular grid, obtained by dividing the intervals $[0,\pi]$ and $[0,2\pi)$ into $n_a$ equally spaced points. This yields the discretized grid
\begin{equation}\label{eq: sphere grid}
   \Theta_{n_a}^{d-1} := \left\{\left( \tfrac{\pi}{n_a} \left(a_1+\tfrac{1}{2}\right),\, \tfrac{\pi}{n_a} \left(a_2+\tfrac{1}{2}\right),\, \dots,\, \tfrac{\pi}{n_a} \left(a_{d-2}+\tfrac{1}{2}\right),\, \tfrac{2\pi}{n_a} a_{d-1} \right):\; a_i \in \{0,1,\dots,n_a-1\}\right\}.
\end{equation}
The points are shifted from the origin to avoid poles. Using this grid, the the spherical harmonic transform of any \(f\in \mathcal{L}_2(\SS^{d-1})\) can be approximated by a Riemann sum:
\begin{equation}\label{eq: sphere integral approx}
    \int_{\SS^{d-1}} f(s) \Y^{(\rho)}(s)\, d\sigma(s) 
    \approx \frac{2\pi^{d-1}}{n_a^{d-1}\mathcal{A}(\SS^{d-1})} \sum_{\theta \in \Theta_{n_a}^{d-1}} f(s(\theta))\Y^{(\rho)}(s(\theta)) \, \omega(\theta).
\end{equation}

\begin{remark}
    This uniform angular grid can be used to approximate any integral over the sphere, not only for computing spherical harmonic transforms. Nonetheless, the choice of quadrature weights can be optimized to improve the accuracy of spherical harmonic transform approximations. In three dimensions, for example, \citet{DRISCOLL1994202} proposed the quadrature
    \begin{equation}\label{eq: DH quadrature}
        \omega_{\textrm{DH}}(\theta) := \frac{4}{\pi}\sin\theta_1 \sum_{k=0}^{\frac{n_a}{2}-1} \left(\frac{\sin((2k+1)\theta_1)}{2k+1}\right)
    \end{equation}
    for the grid \(\Theta^{2}_{n_a}\), when \(n_a\) is even, that yields the exact spherical harmonic transform exactly for band limited functions. This was used by \citet{CohenSphericalICLR2018} and \citet{SphericalCNNNeurIPS2018} in the context of Spherical CNNs. The quadratures \eqref{eq: quadrature} and \eqref{eq: DH quadrature} are asymptotically equivalent as the summation in \eqref{eq: DH quadrature} converges to \(\frac{\pi}{4}\).
\end{remark}

\subsection{Implementation in general d dimensions}

The input to a steerable convolutional layer is first interpolated onto spherical coordinates at each grid location, followed by a spherical harmonic transformation and finally CG--decomposition (in higher layers). Since all these operations are linear operations, they can be combined into a single ``steerable filter''. This steerable filter, combined with learnable weights, can then be applied to the input to generate the pre-activation function. In the following sections, we formalize these concepts and outline a framework for implementing steerable convolutional layers.

\subsubsection{First Layer}

Let \(\fin: \ZZ^d \to \mathbb{C}\) denote the input function for a single channel. The output of the steerable convolution corresponding to an irrep \(\rho\), consists of \(d_\rho\) columns. For simplicity, in a neural network implementation, these columns are combined into the channel dimension, allowing us to represent the output as \(\h{\fpre}(\cdot, \rho) \in \CC^{d_\rho}\) for a single channel. Although \(\SO(d)\) has infinitely many irreps in theory, we restrict ourselves to a finite subset \(\mathcal{F}_{\SO(d)}\)in computation. With this setup, the theorem below provides a framework for implementing the first layer of a steerable convolutional network. As described in Section~\ref{sec: discritization}, the integrals are evaluated by discretizing \((0,h]\) into \(n_r\) grid points for the radial component, and using the grid \(\Theta_{n_a}^{d-1}\) for the spherical component.
\begin{thm}\label{thm: first layer}
    Suppose \(\fin:\ZZ^d \to \CC\) be an input map. The pre-activation \(\h\fpre\) for the first layer of steerable convolutional network can be computed as
    \begin{equation}\label{eq: implementation first layer}
        \h\fpre(\x, \rho) =  \sum_{\y\in \ZZ^d}\left(\sum_{r=1}^{n_r} w_r^{(\rho)}  M_{r}^{(\rho)}(\y)\right) \fin(\x+\y),
    \end{equation}
    where \({w}_r^{(\rho)} \in \CC\) are learnable weights, \(\rho\in \mathcal{F}_{\SO(d)}\), and \(M_{r}^{(\rho)}\in \CC^{d_\rho}\) is defined as
    \begin{equation}\label{eq: steerable filter basis first layer}
        M_{r}^{(\rho)}(\y) := \frac{r^{d-1}}{n_r^dn_a^{d-1}}\sum_{\theta\in\Theta_{n_a}^{d-1}} \I\left(\frac{rh}{n_r}s(\theta), \y\right)\Y^{(\rho)}(s(\theta)) \,\omega(\theta)
    \end{equation}
    for \(1\leq r\leq n_r\).
\end{thm}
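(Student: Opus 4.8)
The plan is to start from the Fourier-space recipe \eqref{eq: fpre first fourier} for the first layer, turn the two $\SO(d)$-Fourier transforms appearing in it into spherical harmonic transforms via Proposition~\ref{prop: SHT}, discretize the radial and spherical integrals as in Section~\ref{sec: discritization}, and finally use the interpolation kernel to rewrite the off-grid evaluations of $\fin$ as finite weighted sums over $\ZZ^d$; gathering the coefficient of each $\fin(\x+\y)$ then produces \eqref{eq: implementation first layer} with $M_r^{(\rho)}$ as in \eqref{eq: steerable filter basis first layer}.

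First I would apply Proposition~\ref{prop: SHT}. For a fixed radius, both $\fpatch_\x(r,\cdot)$ and $\wbar{w}(r,\cdot)$ are functions on $\SS^{d-1}$ (via $R\mapsto R\e$), so by that proposition their $\SO(d)$-Fourier transforms vanish unless $\rho$ is symmetric traceless --- which is precisely why $\mathcal{F}_{\SO(d)}$ may be taken to consist of only such irreps --- and otherwise have the rank-one form $\h{\fpatch_\x}(r,\rho) = v_r(\x,\rho)\,\Y^{(\rho)}(\e)^\dagger$ with $v_r(\x,\rho) := \int_{\SS^{d-1}}\fin(\x+rs)\,\Y^{(\rho)}(s)\,d\sigma(s)$, and $\h{\wbar{w}}(r,\rho) = \Y^{(\rho)}(\e)\,u_r(\rho)^\dagger$ for some vector $u_r(\rho)\in\CC^{d_\rho}$ that is unconstrained as $w$ ranges over filters. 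Substituting these into \eqref{eq: fpre first fourier} and using the normalization $\Y^{(\rho)}(\e)^\dagger\Y^{(\rho)}(\e) = \|\Y^{(\rho)}(\e)\|_2^2 = 1$ collapses the matrix product to $\h\fpre(\x,\rho) = \int_0^h v_r(\x,\rho)\,u_r(\rho)^\dagger\,r^{d-1}\,dr$. Once the $d_\rho$ output columns are folded into the channel dimension (the single-channel convention of the theorem keeps one such column), the output is identified with $\int_0^h w_r^{(\rho)}\,v_r(\x,\rho)\,r^{d-1}\,dr \in \CC^{d_\rho}$, where the scalars $w_r^{(\rho)}$ --- one per radius, irrep, and output channel --- are the free entries of $u_r(\rho)^\dagger$ and serve as the learnable weights.

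Next I would discretize. The radial variable is restricted to the values $rh/n_r$ for $r=1,\dots,n_r$, with the integral over $(0,h]$ contributing a factor $h/n_r$, while $v_r(\x,\rho)$ is replaced by the quadrature \eqref{eq: sphere integral approx} over the angular grid $\Theta_{n_a}^{d-1}$. Since $\fin$ is defined only on $\ZZ^d$, I interpret $\fin(\x+\tfrac{rh}{n_r}s(\theta))$ as the interpolated value $\I[\fin](\x+\tfrac{rh}{n_r}s(\theta)) = \sum_{\z\in\ZZ^d}\I(\x+\tfrac{rh}{n_r}s(\theta),\z)\,\fin(\z)$ from Definition~\ref{defn: interpolation}(a), and then apply the shift-invariance property~(b) with the integer shift $\x\in\ZZ^d$ to rewrite this as $\sum_{\y\in\ZZ^d}\I(\tfrac{rh}{n_r}s(\theta),\y)\,\fin(\x+\y)$. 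Because $\fin$ has finite support this is a finite sum, hence commutes with the finite sums over $r$ and $\theta$; exchanging the order of summation and collecting the coefficient of $\fin(\x+\y)$ yields $\sum_{\y}\bigl(\sum_{r=1}^{n_r}w_r^{(\rho)}M_r^{(\rho)}(\y)\bigr)\fin(\x+\y)$ with $M_r^{(\rho)}$ exactly as in \eqref{eq: steerable filter basis first layer}, after absorbing the remaining $r$- and $\y$-independent constants ($h^d$ and $2\pi^{d-1}/\mathcal{A}(\SS^{d-1})$) into the weights $w_r^{(\rho)}$.

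The proof is largely careful bookkeeping; the step that needs genuine care is the collapse of the matrix-valued filter coefficients $\h{\wbar{w}}(r,\rho)$ into a single scalar per radius, which uses the rank-one structure from Proposition~\ref{prop: SHT}, the cancellation afforded by the $\|\Y^{(\rho)}(\e)\|_2 = 1$ normalization, and the convention that the $d_\rho$ steerable output columns are folded into channels rather than parametrized independently. A secondary point worth stating explicitly is that, since the radial and spherical integrals are only approximated by their Riemann sums, \eqref{eq: implementation first layer} is best read as the definition of the discretized layer rather than an exact identity with the continuous pre-activation \eqref{eq: nsteer1}; the gap between them is exactly the discretization and interpolation error measured by $\Delta$ in Definition~\ref{defn: interpolation error}.
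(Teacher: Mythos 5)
Your proposal is correct and follows essentially the same route as the paper's proof: start from \eqref{eq: fpre first fourier}, invoke Proposition \ref{prop: SHT} to turn the $\SO(d)$ integral into a spherical harmonic transform, discretize the radial and angular integrals via \eqref{eq: sphere integral approx}, expand the off-grid evaluations with Definition \ref{defn: interpolation}(a)--(b), and absorb the constants $h^d$ and $2\pi^{d-1}/\mathcal{A}(\SS^{d-1})$ together with $\Y^{(\rho)}(\e)^\dagger\h{\wbar{w}}$ into the learnable scalars $w_r^{(\rho)}$. The only cosmetic difference is bookkeeping order: the paper expands the interpolation first and applies Proposition \ref{prop: SHT} to the kernel slice $\I(rs,\y)$, whereas you apply it to the patch and filter (noting the rank-one structure of $\h{\wbar{w}}$, which holds after an inverse-Haar change of variables since $\wbar{w}(r,R)=w(rR^{-1}\e)$) and interpolate only at the quadrature points — both yield the same $M_r^{(\rho)}$, and your closing remark that \eqref{eq: implementation first layer} is an approximation matches the paper's use of ``$\approx$'' at the discretization step.
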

The function \(M_r^{(\rho)}\) is computed during a precomputation phase \citep{cohen2017steerable, weiler20183d, weiler2018learning, cesa2022a}. The forward pass consists of assembling the convolution kernel by combining learned weights with precomputed filters, and then performing convolution for each Fourier component.  All these steps are parallelizable and can be efficiently executed on GPUs.

\subsubsection{Higher Layers}
The higher layers build upon the operations of the first layer by incorporating the CG--decomposition. As this decomposition is also a linear operation, it can similarly be integrated into the steerable filter during the precomputation stage. As a result, the higher layers follow a similar process to the first layer, as established by the following theorem. Recall the CG--matrices \(C^{(\rho, \rho_1, \rho_2)}\) introduced in Remark \ref{remark: CG--matrices}.
\begin{thm}\label{thm: higher layer classical}
    Let \(\fin(\cdot, \rho): \ZZ^d \to \mathbb{C}^{d_{\rho}}\) be the input map corresponding to the irrep \(\rho\in \mathcal{F}_{\SO(d)}\) and single input channel. Then, \(\h\fpre\) for the higher layers of steerable convolutions is given by
        \begin{equation}\label{eq: implementation higher layers}
            \h\fpre(\x, \rho) =  \sum_{\y \in \ZZ^d}  \sum_{\rho_1\in \mathcal{F}_{\SO(d)}}\left( \sum_{\rho_2\in \mathcal{F}_{\SO(d)}} \sum_{r=1}^{n_r} w_r^{(\rho, \rho_1, \rho_2)} \tilde{M}_{r}^{(\rho, \rho_1, \rho_2)}(\y) \right)\fin(\x + \y, \rho_1),
        \end{equation}
    where \(w_r^{(\rho, \rho_1, \rho_2)}\in \mathbb{C}\) are learnable weights, \(\rho\in \mathcal{F}_{\SO(d)}\), and the \(m^{\text{th}}\) row of \(\tilde{M}_{r}^{(\rho, \rho_1, \rho_2)}(\y)\in \CC^{d_\rho\times d_{\rho_1}}\) is defined as
        \begin{equation}\label{eq: steerable filter basis higher layers}
            \left[\tilde{M}_{r}^{(\rho, \rho_1, \rho_2)}(\y)\right]_{m, \cdot} := \frac{1}{|\mathcal{F}_{\SO(d)}|} M_r^{(\rho_2)}(\y) ^\top  \tilde{C}^{(\rho, \rho_1, \rho_2)}_m.
        \end{equation}
    Here, \(\tilde{C}_m^{(\rho, \rho_1, \rho_2)}\in \mathbb{C}^{d_{\rho_2} \times d_{\rho_1}}\) such that \(\left[C^{(\rho, \rho_1, \rho_2)}\right]_{\cdot, m}^* = \vec{\tilde{C}_m^{(\rho, \rho_1, \rho_2)}}\).
\end{thm}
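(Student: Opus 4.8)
The plan is to take the Fourier--space formula \eqref{eq: fpre higher fourier} as the starting point and to unpack its three ingredients — the patch transform \(\h{\fpatch_{\x}}(r,\rho_1,\rho_2)\), the filter transform \(\h{\wbar{w}}(r,\rho_1,\rho_2)\), and the CG--map \(\substack{\textrm{CG}\\ \rho_1,\rho_2\to\rho}(\cdot)\) — one at a time, mirroring the discretization used for Theorem \ref{thm: first layer} but now also tracking the extra rotational argument carried by the input field. First I would rewrite \eqref{eq: fpatch higher fourier}. Since \(\fpatch_{\x}(r,R',R'')=\fin(\x+rR''\e,R')\), integrating over \(R'\) — the slot paired with \(\rho_1\) — produces the \(\rho_1\)--component \(\fin(\x+rR''\e,\rho_1)\) of the field in the channel--folded convention, while the remaining integrand depends on \(R''\), the slot paired with \(\rho_2\), only through \(R''\e\in\SS^{d-1}\); applying Proposition \ref{prop: SHT} entrywise therefore turns the \(R''\)--integral into \(\big(\int_{\SS^{d-1}}\fin(\x+rs,\rho_1)\otimes\Y^{(\rho_2)}(s)\,d\sigma(s)\big)\) post--multiplied by the constant factor \(I\otimes\Y^{(\rho_2)}(\e)^\dagger\), and in particular only symmetric traceless \(\rho_2\) survive. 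Running the same computation on \(\h{\wbar{w}}\), where \(\wbar{w}(r,R',R'')=w(rR'^{-1}\e,R''^{-1})\), reduces the other slot: now \(\rho_1\) is pre--multiplied by a constant \(\Y^{(\rho_1)}(\e)\otimes I\), while the \(\rho_2\)--slot — which corresponds to the genuine \(\SO(d)\)--argument of the filter, a full rotation — remains unreduced. (A minor care here is the conjugations introduced because \(\wbar{w}\) inverts its rotation arguments.)

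The crux is the next step. Proposition \ref{prop: SHT} has forced \(\h{\fpatch_{\x}}(r,\rho_1,\rho_2)\) to factor through \(I\otimes\Y^{(\rho_2)}(\e)^\dagger\) on the right and \(\h{\wbar{w}}(r,\rho_1,\rho_2)\) through \(\Y^{(\rho_1)}(\e)\otimes I\) on the left, so in the product these two meet as \((I\otimes\Y^{(\rho_2)}(\e)^\dagger)(\Y^{(\rho_1)}(\e)\otimes I)=\Y^{(\rho_1)}(\e)\otimes\Y^{(\rho_2)}(\e)^\dagger\), a rank--one matrix. Hence \(\h{\fpatch_{\x}}(r,\rho_1,\rho_2)\,\h{\wbar{w}}(r,\rho_1,\rho_2)\) is a rank--one \(d_{\rho_1}d_{\rho_2}\times d_{\rho_1}d_{\rho_2}\) matrix: a column vector determined by \(\fin\) (essentially \(\int_{\SS^{d-1}}\fin(\x+rs,\rho_1)\otimes\Y^{(\rho_2)}(s)\,d\sigma(s)\)) times a row vector determined by \(w\). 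Under the multiplicity--one hypothesis of Remark \ref{remark: CG--matrices}, the CG--map acts as \(X\mapsto(\text{const})\,C^{(\rho,\rho_1,\rho_2)\dagger}XC^{(\rho,\rho_1,\rho_2)}\), which sends rank--one matrices to rank--one matrices; after folding the \(d_\rho\) columns of the output into the channel dimension, every output channel is therefore a scalar multiple of the single vector \(C^{(\rho,\rho_1,\rho_2)\dagger}(\text{the }\fin\text{--column})\), and the scalar is the only place where the filter \(w\) enters. After the radial integral \(\int_0^h(\cdot)\,r^{d-1}\,dr\) is discretized over the \(n_r\) nodes \(rh/n_r\), this scalar becomes the learnable weight \(w_r^{(\rho,\rho_1,\rho_2)}\in\CC\); this is exactly why one complex parameter per tuple \((r,\rho,\rho_1,\rho_2)\) captures all admissible filters, with no loss or spurious gain of generality.

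It then remains to discretize and to recognize the first--layer objects. I would approximate the sphere integral by the Riemann sum \eqref{eq: sphere integral approx} over the grid \(\Theta_{n_a}^{d-1}\) and replace each evaluation \(\fin(\x+\tfrac{rh}{n_r}s(\theta),\rho_1)\) by the interpolant \(\sum_{\y\in\ZZ^d}\I(\tfrac{rh}{n_r}s(\theta),\y)\,\fin(\x+\y,\rho_1)\) using Definition \ref{defn: interpolation}(a) together with the translation invariance (b). Collecting the \(\theta\)--sum with the radial weight and the normalizing constants yields precisely \(M_r^{(\rho_2)}(\y)\) of \eqref{eq: steerable filter basis first layer}, so the \(\fin\)--column equals \(\sum_{\y}\fin(\x+\y,\rho_1)\otimes M_r^{(\rho_2)}(\y)\) up to a constant absorbed into the weights. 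Finally, applying \(C^{(\rho,\rho_1,\rho_2)\dagger}\) to this Kronecker product and invoking \([C^{(\rho,\rho_1,\rho_2)}]_{\cdot,m}^{*}=\vec{\tilde C_m^{(\rho,\rho_1,\rho_2)}}\) together with the standard vec--Kronecker identity gives \(\big[C^{(\rho,\rho_1,\rho_2)\dagger}(\fin(\x+\y,\rho_1)\otimes M_r^{(\rho_2)}(\y))\big]_m = M_r^{(\rho_2)}(\y)^{\top}\tilde C_m^{(\rho,\rho_1,\rho_2)}\,\fin(\x+\y,\rho_1)\), which is the \(m\)th row of \(\tilde M_r^{(\rho,\rho_1,\rho_2)}(\y)\) from \eqref{eq: steerable filter basis higher layers} applied to \(\fin(\x+\y,\rho_1)\) (the \(1/|\mathcal{F}_{\SO(d)}|\) factor and every remaining constant being absorbed into \(w_r^{(\rho,\rho_1,\rho_2)}\)). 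Summing over \(r\), \(\rho_2\), \(\rho_1\) and \(\y\) produces \eqref{eq: implementation higher layers}.

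The main obstacle is the structural argument of the second paragraph — showing that \(\h{\fpatch_{\x}}\,\h{\wbar{w}}\) is rank one and hence that the filter collapses to a single learnable scalar per tuple. Getting this right requires being careful about which irrep is matched with which physical rotation: \(\fpatch_{\x}\) assigns \(\rho_1\) to the field's rotational argument and \(\rho_2\) to the spatial direction, whereas \(\wbar{w}\) assigns \(\rho_1\) to the filter's spatial direction and \(\rho_2\) to the filter's \(\SO(d)\)--argument, so the two tensor factors play ``crossed'' roles. The index bookkeeping in the final vec--Kronecker step (the ordering convention of the Kronecker product versus the column--stacking convention in \(\vec{\cdot}\)) is likewise delicate. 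By contrast, the analytic points — convergence of the Riemann sums and the legitimacy of interchanging the finite sums with the linear, finite--dimensional CG--map — are routine given the boundedness and H\"older conditions of Definition \ref{defn: interpolation} and the compactness of \(\SO(d)\).
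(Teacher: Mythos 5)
Your proposal is correct and follows essentially the same route as the paper's proof: start from \eqref{eq: fpre higher fourier}, reduce the rotational integrals with Proposition \ref{prop: SHT}, interpolate via Definition \ref{defn: interpolation}(a)--(b), discretize with \eqref{eq: sphere integral approx} to recognize \(M_r^{(\rho_2)}\), and apply the vec--Kronecker identity with \(\tilde C_m^{(\rho,\rho_1,\rho_2)}\) to arrive at \eqref{eq: steerable filter basis higher layers} and \eqref{eq: implementation higher layers}. The only divergence is in how the scalar weight is justified: you additionally reduce \(\h{\wbar{w}}\) through Proposition \ref{prop: SHT} and argue via a rank-one factorization, whereas the paper leaves \(\h{\wbar{w}}\) untouched, defines \(w_r^{(\rho,\rho_1,\rho_2)}\) explicitly as \({C^{(\rho,\rho_1,\rho_2)}}^{\top}\h{\wbar{w}}^{\top}\bigl(I_{d_{\rho_1}}\otimes c\,\Y^{(\rho_2)}(\e)\bigr)\), and collapses it to a scalar under the single-channel assumption -- a bookkeeping difference rather than a different argument.
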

Similar to the first layer, the steerable filter $\tilde{M}_{r}^{(\rho, \rho_1, \rho_2)}$ is computed during the precomputation phase. The first layer can be interpreted as a special case of steerable convolutions in  higher layer, where the input Fourier cutoff \(\rho_1\) is restricted to the constant representation.

\begin{figure}[t]
    \centering
    \includegraphics[width=\linewidth]{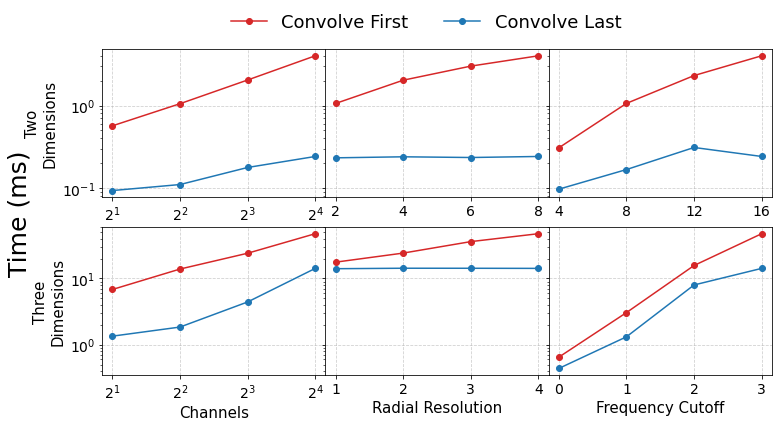}
    \caption{Comparison of the two implementation strategies for steerable convolutions discussed in Remark \ref{remark: time}. Each network consists of two convolutional layers, with runtime measured across varying parameter settings shown on the \(x\)-axis. When one parameter (e.g., “Channels”) was varied, the others (“Radial Resolution” and “Frequency Cutoff”) were fixed at their largest values shown in the plot. Runtimes correspond to inputs of size \(1\times28\times28\) for 2D and \(1\times32\times32\times32\) for 3D, averaged over \(100\) runs (excluding \(5\) burn-ins) and normalized per sample using batchsizes of \(10\) and \(5\) for 2D and 3D, respectively.}
    \label{fig: time}
\end{figure}

\begin{remark}\label{remark: constrained convolution}
    Theorems \ref{thm: first layer} and \ref{thm: higher layer classical} establish that, unlike conventional convolutions, which directly learn a full weight matrix of size \(\CC^{d_\rho \times d_{\rho_1}}\), steerable convolutions parameterize the kernel using a single learnable scalar \(w^{(\rho, \rho_1, \rho_2)}\) that is linearly combined with a fixed matrix \(\tilde{M}_{r}^{(\rho, \rho_1, \rho_2)}(\y) \in \CC^{d_\rho \times d_{\rho_1}}\). Thus steerable convolutions can be viewed as learning convolutional filters constrained to a predefined steerable basis \citep{cohen2017steerable, weiler20183d}.
\end{remark}

\begin{remark}\label{remark: time}
    Because matrix multiplication is associative, the operations in equations \eqref{eq: implementation first layer} and \eqref{eq: implementation higher layers} can be executed in two equivalent ways: either by first convolving the input with the precomputed steerable filters and then applying the learned weights, or by combining the weights with the filters beforehand and convolving the result with the inputs. In practice, the latter approach is both faster and more memory efficient, as demonstrated in Figure \ref{fig: time}, which compares runtimes across various parameter settings. The first method is typically slower because it requires storing large intermediate tensors whose size scales with the radial and angular resolutions, and performing weight multiplications at every spatial location. However, if memory is not a limiting factor and sparse convolution for interpolation and Fast Fourier Transform are used, the first approach has the potential to outperform the second in terms of speed. Implementing these optimizations would require writing custom CUDA kernels, which we leave as an interesting direction for future work.
\end{remark}

\begin{figure}[t]
    \centering
    \includegraphics[width=\linewidth]{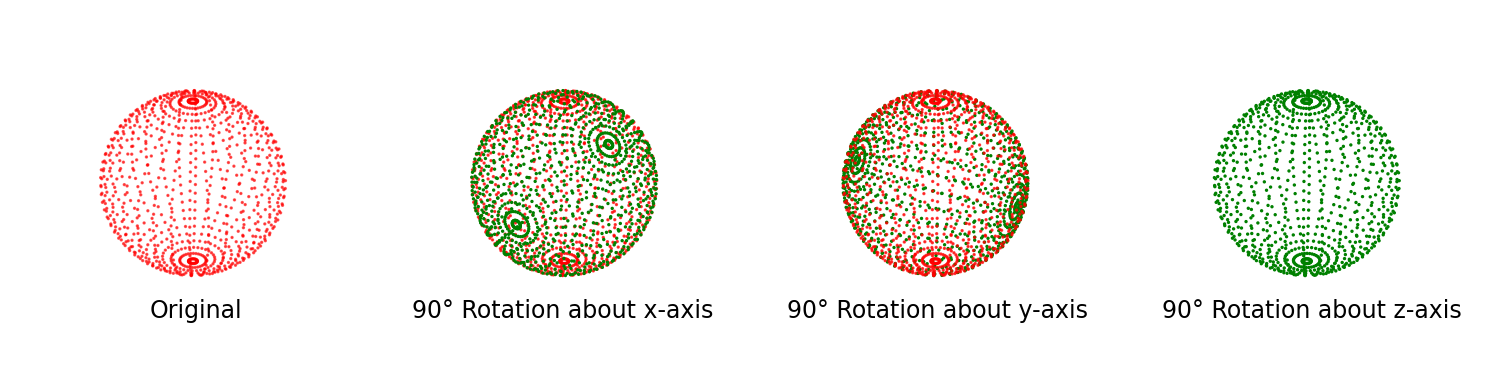}
    \caption{Illustration of a uniform grid on $\SS^2$. The polar and azimuthal angles are uniformly discretized into \(32\) grid points. Only discrete rotations about the $z$ axis preserve the grid structure. Rotations about the $x$ or $y$ axes generally do not map grid points back onto the grid, breaking rotational alignment.
    }
    \label{fig: align}
\end{figure}

\subsection{Loss of Equivariance}\label{sec: loss of equivariance}

While in theory the steerable layers are exactly equivariant, practical implementations deviate from this due to interpolation errors and approximation of integrals by discretization, which result in a loss of equivariance. In this section, we derive bounds that quantify this loss. For brevity, we focus on steerable convolutions in higher layers. However, the results are also applicable to the first layer, as it can be viewed as special cases.

Define the filter \(K^{(\rho,\rho_1)}:\ZZ^d \to \RR^{d_\rho \times d_{\rho_1}}\) as
\begin{equation}\label{eq: steerable filter}
    K^{(\rho,\rho_1)}(\y) := \sum_{\rho_2\in \mathcal{F}_{\SO(d)}}\sum_{r=1}^{n_r} w_r^{(\rho,\rho_1,\rho_2)} \tilde{M}_{r}^{(\rho,\rho_1,\rho_2)}(\y)
\end{equation}
Suppose $\fin(\cdot, \rho): \ZZ^d \to \CC^{d_\rho}$ is an input feature map, corresponding to irrep \(\rho\). According to Theorem \ref{thm: higher layer classical}, the output of the steerable convolution is computed via a discrete cross-correlation, given by
\begin{equation}
    \h{\fpre}(\x, \rho) = \sum_{\rho_1}\sum_{\y \in \ZZ^d} K^{(\rho,\rho_1)}(\y) \fin(\x + \y, \rho_1).
\end{equation}
Since $K^{(\rho,\rho_1)}$ is defined over a discrete grid, it does not strictly satisfy the definition of a steerable filter in the continuous sense (as given in Definition \ref{defn: steerable filter}). However, if a rotation $R \in \SO(d)$ results in exact interpolation ($\Delta(R) = 0$) and maps the spherical grid back onto itself while preserving the quadrature \(\omega\), then $K^{(\rho,\rho_1)}$ can be interpreted as a discrete counterpart to a steerable filter. This relationship is formalized in the following proposition.
\begin{prop}\label{prop: steerable filter}
    Let $K^{(\rho,\rho_1)} : \mathbb{Z}^d \to \mathbb{C}$ be as defined in \eqref{eq: steerable filter}. Suppose $R \in \SO(d)$ satisfies $\Delta(R) = 0$, and the action $s(\theta) \mapsto R\,s(\theta)$ defines a bijection on the discretized angular grid $\Theta_{n_a}^{d-1}$, with the quadrature $\omega$ being invariant under this action. Then, for any $\mathbf{y} \in \mathbb{Z}^d$, the filter satisfies
    \begin{equation*}
            R^{-1} \cdot K^{(\rho,\rho_1)}(\y) = \rho(R) K^{(\rho,\rho_1)}(\y) \rho_1(R)^\dagger,
    \end{equation*}
    where \(R^{-1} \cdot K^{(\rho,\rho_1)}(\y) = \sum_{\mathbf{z} \in \ZZ^d} K^{(\rho,\rho_1)}(\mathbf{z}) \I(R^{-1} \mathbf{z}, \y)\).
\end{prop}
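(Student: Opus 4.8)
The plan is to peel the composite linear operator $K^{(\rho,\rho_1)}$ apart into its elementary pieces and verify the transformation law piece by piece, using the linearity of every map in sight. Since $R^{-1}\cdot(\cdot)$ is linear, since $K^{(\rho,\rho_1)}(\y)$ is by \eqref{eq: steerable filter} the fixed linear combination $\sum_{\rho_2,r}w_r^{(\rho,\rho_1,\rho_2)}\,\tilde M_r^{(\rho,\rho_1,\rho_2)}(\y)$ of the precomputed filter bases, and since $A\mapsto\rho(R)A\rho_1(R)^\dagger$ is linear in $A$, it suffices to prove $R^{-1}\cdot\tilde M_r^{(\rho,\rho_1,\rho_2)}(\y)=\rho(R)\,\tilde M_r^{(\rho,\rho_1,\rho_2)}(\y)\,\rho_1(R)^\dagger$ for each $r$ and each $\rho_2$. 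By \eqref{eq: steerable filter basis higher layers} the rows of $\tilde M_r^{(\rho,\rho_1,\rho_2)}$ are assembled from the first-layer basis $M_r^{(\rho_2)}$ of \eqref{eq: steerable filter basis first layer} and the CG--columns $\tilde C_m^{(\rho,\rho_1,\rho_2)}$, so the whole proposition reduces to two self-contained computations.

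First I would establish the vector identity $R^{-1}\cdot M_r^{(\rho)}(\y)=\rho(R)\,M_r^{(\rho)}(\y)$. Expanding the action and the definition \eqref{eq: steerable filter basis first layer} of $M_r^{(\rho)}$, and interchanging the finite angular sum with the spatial sum (legitimate since $\sum_{\z}|\I(\x,\z)|$ is uniformly bounded by Definition \ref{defn: interpolation}(d)), $R^{-1}\cdot M_r^{(\rho)}(\y)$ becomes, up to the scalar prefactor, a sum over $\theta\in\Theta_{n_a}^{d-1}$ of $\big[\sum_{\z}\I(\tfrac{rh}{n_r}s(\theta),\z)\,\I(R^{-1}\z,\y)\big]\,\Y^{(\rho)}(s(\theta))\,\omega(\theta)$. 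Exactness of the interpolation under $R^{-1}$ collapses the bracketed inner sum to the single term $\I(\tfrac{rh}{n_r}R^{-1}s(\theta),\y)$. Now I would re-index the $\theta$-sum via the bijection $\sigma$ of $\Theta_{n_a}^{d-1}$ determined by $R\,s(\theta)=s(\sigma(\theta))$: writing $\theta=\sigma(\tau)$ replaces $R^{-1}s(\theta)$ by $s(\tau)$, leaves the interpolation factor as $\I(\tfrac{rh}{n_r}s(\tau),\y)$, replaces $\omega(\sigma(\tau))$ by $\omega(\tau)$ (invariance of the quadrature under the action), and replaces $\Y^{(\rho)}(s(\sigma(\tau)))=\Y^{(\rho)}(R\,s(\tau))$ by $\rho(R)\,\Y^{(\rho)}(s(\tau))$ by the spherical-harmonic equivariance \eqref{eq: spherical harmonics}. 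Pulling the constant matrix $\rho(R)$ out of the sum gives exactly $\rho(R)\,M_r^{(\rho)}(\y)$.

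Then I would transfer this to $\tilde M_r^{(\rho,\rho_1,\rho_2)}$. Applying $R^{-1}\cdot(\cdot)$ row by row and using the previous identity with $\rho_2$ in place of $\rho$, the $m$-th row of $R^{-1}\cdot\tilde M_r^{(\rho,\rho_1,\rho_2)}(\y)$ equals $\tfrac{1}{|\mathcal{F}_{\SO(d)}|}\,\big(\rho_2(R)M_r^{(\rho_2)}(\y)\big)^\top\tilde C_m^{(\rho,\rho_1,\rho_2)}=\tfrac{1}{|\mathcal{F}_{\SO(d)}|}\,M_r^{(\rho_2)}(\y)^\top\rho_2(R)^\top\tilde C_m^{(\rho,\rho_1,\rho_2)}$, whereas the $m$-th row of $\rho(R)\tilde M_r^{(\rho,\rho_1,\rho_2)}(\y)\rho_1(R)^\dagger$ equals $\tfrac{1}{|\mathcal{F}_{\SO(d)}|}\,M_r^{(\rho_2)}(\y)^\top\big(\sum_{m'}\rho(R)_{mm'}\tilde C_{m'}^{(\rho,\rho_1,\rho_2)}\big)\rho_1(R)^\dagger$. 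Hence the claim reduces to the matrix identity $\rho_2(R)^\top\tilde C_m^{(\rho,\rho_1,\rho_2)}=\big(\sum_{m'}\rho(R)_{mm'}\tilde C_{m'}^{(\rho,\rho_1,\rho_2)}\big)\rho_1(R)^\dagger$, which I would prove by vectorizing. Using $\vec{AXB}=(B^\top\otimes A)\vec{X}$ and $(\rho_1(R)^\dagger)^\top=\overline{\rho_1(R)}$, the right-hand side becomes $(\overline{\rho_1(R)}\otimes I)\sum_{m'}\rho(R)_{mm'}\vec{\tilde C_{m'}}$. On the other hand, I would conjugate, column by column, the intertwining relation $(\rho_1(g)\otimes\rho_2(g))\,C^{(\rho,\rho_1,\rho_2)}=C^{(\rho,\rho_1,\rho_2)}\rho(g)$ (which is \eqref{eq: CGdef} read columnwise in the multiplicity-one convention of Remark \ref{remark: CG--matrices}) at $g=R^{-1}$; combined with $\vec{\tilde C_m}=\overline{[C^{(\rho,\rho_1,\rho_2)}]_{\cdot,m}}$ and the unitarity of $\rho$ (so $\overline{\rho(R^{-1})_{m'm}}=\rho(R)_{mm'}$), this yields $\sum_{m'}\rho(R)_{mm'}\vec{\tilde C_{m'}}=(\overline{\rho_1(R^{-1})}\otimes\overline{\rho_2(R^{-1})})\vec{\tilde C_m}$. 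Plugging this back and simplifying with $\overline{\rho_1(R)}\,\overline{\rho_1(R^{-1})}=I$ and $\overline{\rho_2(R^{-1})}=\rho_2(R)^\top$ leaves $(I\otimes\rho_2(R)^\top)\vec{\tilde C_m}=\vec{\rho_2(R)^\top\tilde C_m}$, the vectorization of the left-hand side. This establishes the matrix identity, hence $R^{-1}\cdot\tilde M_r^{(\rho,\rho_1,\rho_2)}(\y)=\rho(R)\tilde M_r^{(\rho,\rho_1,\rho_2)}(\y)\rho_1(R)^\dagger$, and summing against the weights $w_r^{(\rho,\rho_1,\rho_2)}$ finishes the proof.

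The hard part will be the index and conjugation bookkeeping in the third paragraph: keeping the transposes and complex conjugates straight, fixing and consistently using a column- versus row-stacking convention for $\operatorname{vec}$ together with the matching Kronecker identity, and correctly reading the intertwining relation $(\rho_1\otimes\rho_2)(g)\,C^{(\rho,\rho_1,\rho_2)}=C^{(\rho,\rho_1,\rho_2)}\rho(g)$ off \eqref{eq: CGdef} under the chosen normalization of the $\tilde C_m$. A secondary subtlety, in the middle paragraph, is that collapsing the nested interpolation sum needs exactness with respect to $R^{-1}$ rather than $R$; this is harmless, since the set of exact rotations is closed under composition and the grid-preserving rotations relevant here have finite order (so $\Delta(R)=0$ forces $\Delta(R^{-1})=0$), but it ought to be stated explicitly rather than glossed over.
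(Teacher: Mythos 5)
Your proposal is correct and follows essentially the same route as the paper: collapse the nested interpolation sum using exactness, re-index the angular sum via the grid bijection with quadrature invariance, apply the spherical-harmonic equivariance \eqref{eq: spherical harmonics}, and then dispose of the Clebsch--Gordan bookkeeping with the $\operatorname{vec}$/Kronecker identities — your matrix identity $\rho_2(R)^\top\tilde C_m^{(\rho,\rho_1,\rho_2)} = \bigl(\sum_{m'}\rho(R)_{mm'}\tilde C_{m'}^{(\rho,\rho_1,\rho_2)}\bigr)\rho_1(R)^\dagger$ is exactly the content of the paper's Lemma \ref{lemma: steerable filter}, merely proved by conjugating the intertwining relation instead of inserting $\rho_1(R)\rho_1(R)^\dagger$. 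Your closing observation is well taken: the collapse step literally requires $\Delta(R^{-1})=0$, and the paper's own proof invokes $\Delta(R)=0$ there without comment, so your argument that exact rotations are closed under composition and that a grid-permuting rotation has finite order (hence $\Delta(R)=0$ implies $\Delta(R^{-1})=0$) is a legitimate and correct tightening rather than a gap in your proof.
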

For general transformations, however, the assumptions of Proposition \ref{prop: steerable filter} are not satisfied. Even in favorable cases, such as rotations by $90^\circ$ about the $x$ or $y$ axes in three dimensions, the transformed angular grid does not align with the original configuration (see Figure \ref{fig: align}). This leads to a loss of equivariance. The following theorem provides an upper bound on this error.
\begin{thm}\label{thm: interpolation error}
    Suppose \(f: \ZZ^d \to \mathbb{C}^{d_{\rho_1}}\) be a feature map such that under the action of any \((\t,R)\in \SE(d)\), it transforms according to
    \begin{equation*}
        f(\x)\mapsto [(\mathbf{t}, R)^{-1} \cdot f](\x) = \rho_1(R)^\dagger \I[f](R \x + \mathbf{t}).
    \end{equation*}
    Let \(K^{(\rho, \rho_1)}\) be as defined in \eqref{eq: steerable filter}, and assume that its weights are uniformly bounded above by some constant. Then, for any \((\mathbf{t}, R) \in \SE(d)\) and \(\x \in \ZZ^d\),
    \begin{equation*}
        \|(\mathbf{t}, R) \cdot [K^{(\rho, \rho_1)} \star (\mathbf{t}, R)^{-1} \cdot f](\x) - [K^{(\rho, \rho_1)} \star f](\x)\|_\infty \leq C  \bigg(\Delta(\mathbf{t}, R) + \Delta(R) + n_a^{-\alpha}\bigg) \sum_{\y\in \ZZ^d}\|f(\y)\|_2
    \end{equation*}
    for some constant \(C > 0\), which only depends on radius \(h\), dimension \(d\), choice of \(\I\) and the bounds on the weights.
\end{thm}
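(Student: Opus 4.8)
The plan is to decompose the total error into three pieces and bound each separately using the tools already developed. Write $\fin^{\star}$ for the error term. The key algebraic identity is
\[
(\mathbf{t},R)\cdot[K\star(\mathbf{t},R)^{-1}\cdot f](\x)-[K\star f](\x)
= A(\x)+B(\x)+C(\x),
\]
where $A$ captures the mismatch coming from interpolating $f$ under the full action $(\mathbf{t},R)$, $B$ captures the mismatch coming from rotating the filter $K$ itself (i.e. replacing $R^{-1}\cdot K$ by $\rho(R)K\rho_1(R)^\dagger$, which would be exact by Proposition~\ref{prop: steerable filter} if $\Delta(R)=0$ \emph{and} the angular grid were $R$-stable), and $C$ is the residual Riemann-sum error incurred because the transformed angular grid $\{R\,s(\theta):\theta\in\Theta_{n_a}^{d-1}\}$ is not the original grid, so the discretized spherical harmonic transform of the rotated integrand differs from the exact integral. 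The strategy is: first bound $A$ by $\Delta(\mathbf{t},R)\sum_\y\|f(\y)\|_2$ using exactly the uniform-interpolation-error estimate displayed right after Definition~\ref{defn: interpolation error} (applied componentwise, then combined via $\|\cdot\|_2$ on $\CC^{d_{\rho_1}}$), absorbing $\|K\|$ and the unitarity of $\rho(R)$, $\rho_1(R)$ into the constant $C$.

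For $B$, I would use the definition of $K^{(\rho,\rho_1)}$ in \eqref{eq: steerable filter} together with the definitions of $\tilde M_r^{(\rho,\rho_1,\rho_2)}$ and $M_r^{(\rho)}$ in Theorem~\ref{thm: higher layer classical} and \eqref{eq: steerable filter basis first layer}: pushing the rotation $R^{-1}$ through the interpolation kernel inside $M_r^{(\rho_2)}$ introduces a term of size $\Delta(R)$ per summand (again via Definition~\ref{defn: interpolation error}), while the remaining factor — after the change of variables $s(\theta)\mapsto R\,s(\theta)$ in the angular sum — would exactly reproduce $\rho(R)K\rho_1(R)^\dagger$ by the spherical harmonic covariance \eqref{eq: spherical harmonics} and the CG structure, \emph{if} the grid were $R$-stable. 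The leftover discrepancy between the sum over $\Theta_{n_a}^{d-1}$ and the sum over its $R$-image is precisely $C(\x)$. To bound $C$, I would invoke the H\"older regularity in Definition~\ref{defn: interpolation}(c): $\I(\cdot,\y)$ is $\alpha$-H\"older and $\Y^{(\rho)}\circ s$ and $\omega$ are Lipschitz on the compact parameter space $\Theta^{d-1}$, so the integrand is $\alpha$-H\"older; a Riemann sum over a grid of mesh $O(n_a^{-1})$ approximates the integral of an $\alpha$-H\"older function with error $O(n_a^{-\alpha})$, and the two sums (over the grid and over its rotated image) each approximate the \emph{same} continuous integral, so their difference is $O(n_a^{-\alpha})$. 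Summing over the finitely many $r\in\{1,\dots,n_r\}$, $\rho_2\in\mathcal{F}_{\SO(d)}$, and $\y$ in the (finite) support of $K$, and using the uniform bound on the weights, gives the $n_a^{-\alpha}$ term with a constant depending only on $h$, $d$, $\I$, and the weight bound.

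The main obstacle I anticipate is making the $B$/$C$ split clean: one has to track how $R^{-1}\cdot(\,\cdot\,)$ — which by Proposition~\ref{prop: steerable filter}'s convention means $\sum_\z K(\z)\I(R^{-1}\z,\y)$ — interacts simultaneously with the discrete spatial grid $\ZZ^d$ (handled by $\Delta(R)$) and the discrete angular grid $\Theta_{n_a}^{d-1}$ (handled by $n_a^{-\alpha}$), without double counting. The cleanest route is probably to introduce an auxiliary \emph{exactly steerable continuous} filter $\tilde K^{(\rho,\rho_1)}$ obtained by replacing the interpolation kernel and the angular Riemann sum by their continuous idealizations, show $\|K^{(\rho,\rho_1)}-\tilde K^{(\rho,\rho_1)}\|\lesssim \Delta(R)+n_a^{-\alpha}$ on the relevant finite set (this is where $B$ and $C$ really live), and then use exact steerability of $\tilde K^{(\rho,\rho_1)}$ to cancel the leading term, leaving only $A$ plus these two perturbations. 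Everything else is bookkeeping: uniform bounds from Definition~\ref{defn: interpolation}(d), finiteness of $\mathcal{F}_{\SO(d)}$ and of the radial grid, compactness of $\Omega$ forcing $K$ to have finite support, and unitarity of the irreps to keep operator norms under control when distributing across the $\rho_1$-sum.
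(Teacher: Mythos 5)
Your decomposition into $A+B+C$ is exactly the telescoping the paper performs (its intermediate terms $T_1$ and $T_2$ separate the $\Delta(\t,R)$ back-and-forth interpolation error, the $\Delta(R)$ error from composing interpolations versus evaluating at the rotated angular point, and the $n_a^{-\alpha}$ Riemann-sum discrepancy between the angular grid and its $R$-image, the last handled by an $\alpha$-H\"older sphere-quadrature lemma just as you describe), with the same supporting ingredients: Definition~\ref{defn: interpolation}(c)--(d), unitarity of the irreps and CG--matrices, boundedness of the weights, and finiteness of the radial and irrep sums. Your proposal is correct and essentially coincides with the paper's proof; only your secondary ``auxiliary continuous filter'' packaging differs cosmetically (and there $\Delta(R)$ really belongs to the rotation step rather than to the distance between $K$ and its idealization), but the primary argument needs no repair.
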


\begin{remark}
    The error $\Delta(\t, R)$ arises from applying the back and forth transformation $\x \mapsto R\x + \mathbf{t}$ to the grid. The term $\Delta(R)$ is introduced by the interpolation performed at each patch location of the rotated feature map. Naturally, if the transformation maps points back onto the integer grid, as in the case of integer translations or rotations by multiples of $90^\circ$ (about \(z\) axis), the terms $\Delta(\t, R)$ and $\Delta(R)$ vanish, and no error is introduced. These observations emphasize that interpolation errors depend on how the transformation aligns with the structure of the underlying grid.
\end{remark}

\begin{remark}

The term $n_a^{-\alpha}$ arises from approximating the spherical harmonic transform using a discrete grid, with $\alpha$ capturing the smoothness of the interpolation kernel. For nearest-neighbor interpolation ($\alpha = 0$), increasing the grid resolution does not reduce the error. In contrast, for linear interpolation ($\alpha = 1$), the approximation error decays at the rate $n_a^{-1}$, allowing this error to be controlled by using a finer grid.
\end{remark}

\subsection{Implementation in 2D and 3D}

In this section we will discuss some specific details about implementing steerable convolutions in two and three dimensions and contrast them to prior works.

\paragraph{2D steerable convolutions:}

Implementing steerable convolutions in two dimensions is relatively straightforward because all spherical harmonics are one dimensional and take the form $\Y^{(k)}(\theta) = e^{\iota k \theta}$, where $k \in \ZZ$. These functions also serve as the irreps $\rho_k(\theta)$ of $\SO(2)$. In this setting, the precomputed filter for the first layer becomes
\begin{equation}\label{eq: filter 2d}
    M_r^{(k)} =  \frac{r}{n_r^2n_a}\sum_{a=1}^{n_a} \I\left(\frac{rh}{n_r}s_a, \y\right) e^{\iota k\frac{2\pi a}{n_a}}, 
    \quad s_a = 
    \begin{bmatrix} 
        \cos\frac{2\pi a}{n_a} \\ 
        \sin\frac{2\pi a}{n_a} 
    \end{bmatrix},
\end{equation}
where $0 \leq k \leq n_a - 1$. Because the product of two irreps $\rho_{k_1}$ and $\rho_{k_2}$ results in $\rho_{k_1 + k_2}$, the CG--matrices are given by $C^{(k, k_1, k_2)} = \mathbbm{1}_{\{k = k_1 + k_2\}}$. As a result, the steerable filters in higher layers simplify to
\begin{align}
      \sum_{k_2} \sum_{r=1}^{n_r} w_r^{(k,k_1,k_2)} \tilde{M}_r^{(k,k_1,k_2)}(\y) 
    =& \frac{1}{|\mathcal{F}_{\SO(2)}|}  \sum_{r=1}^{n_r} \sum_{k_2} w_r^{(k,k_1,k_2)}\mathbbm{1}_{\{k=k_1+k_2\}} M_r^{(k_2)}(\y) \nonumber\\
    =& \frac{1}{|\mathcal{F}_{\SO(2)}|}\sum_{r=1}^{n_r}w_r^{(k,k_1)} M_r^{([k-k_1]_{n_a})}(\y), \label{eq: 2d paramter reduction}
\end{align}
where $w_r^{(k, k_1)} := w_r^{(k, k_1, k - k_1)}$, and $[\cdot]_{n_a}$ denotes modulo $n_a$ operation. Since each pair of irreps of $\SO(2)$ yields exactly one irrep, this structure naturally leads to a reduction in the number of learnable parameters, as seen in \eqref{eq: 2d paramter reduction}. Moreover, if the number of angular discretization points $n_a$ matches the highest Fourier frequency cutoff, the symmetry group can effectively be treated as the cyclic group $\ZZ_{n_a}$. This setting corresponds to the $\ZZ_{n_a}$ steerable filters discussed by \citet{weiler2019general}.

\paragraph{3D steerable convolutions:}
In three dimensions, the implementation becomes more intricate because the spherical harmonics, apart from the constant representation, are no longer one dimensional. Instead, the spherical harmonic take the form \(\Y^{(\ell)} : \mathbb{S}^2 \to \mathbb{C}^{2\ell+1}\), where \(\ell \in \ZZ_{\geq 0}\). Consequently, the precomputed filter for the first layer in three dimensions is given by
\begin{equation}\label{eq: filter 3d}
    M_r^{(\ell)}(\y) = \frac{r^2}{n_r^3n_a^2}  \sum_{a_1, a_2=1}^{n_a} \I\left(\frac{rh}{n_r}s_{a_1, a_2}, \y\right) \Y^{(\ell)}\left(s_{a_1,a_2 }\right)\sin\frac{\pi (a_1+0.5)}{n_a}, 
    \,\, s_{a_1, a_2} = 
    \begin{bmatrix}
        \sin \frac{\pi (a_1+0.5)}{n_a}\cos \frac{2\pi {a_2}}{n_a} \\ 
        \sin \frac{\pi (a_1+0.5)}{n_a} \sin \frac{2\pi a_2}{n_a} \\ 
        \cos \frac{\pi (a_1+0.5)}{n_a}
    \end{bmatrix}.
\end{equation}
Using the quadrature from \citet{DRISCOLL1994202} amounts to replacing the sine with \(\omega_{\text{DH}}\), defined in \eqref{eq: DH quadrature}. Unlike the 2D case, the tensor product of two irreps of \(\SO(3)\) can decompose into multiple irreps, and as a consequence, we no longer have parameter reduction seen in two dimensions. Therefore, the filters in the higher layers of 3D steerable convolutions retain the general structure described in \eqref{eq: steerable filter basis higher layers}.

\begin{remark}\label{remark: delta}
    The implementation of steerable convolutions across various works in the literature primarily differs in the definition of \(M_r^{(\rho)}\). For instance, in the 2D case, \citet{weiler2018learning} employed the filter
    \begin{equation*}
         M_r^{(k)} (\y) = e^{-\frac{(||\y||-r)^2}{2\tau^2}} \mathbbm{1}_{\{\y\neq0\}}\, e^{\iota k \angle\y},
    \end{equation*}
    while in the 3D case, \citet{weiler20183d} used
    \begin{equation*}
        M_r^{(\ell)}(\y) = e^{-\frac{(||\y||-r)^2}{2\tau^2}} \mathbbm{1}_{\{\y\neq0\}}\, \Y^{(\ell)}\left(\frac{\y}{||\y||}\right).
    \end{equation*}
    Here, \(\angle\y\) represents the angle that \(\y\) makes with the \(x\) axis, and \(\tau\) is a hyperparameter. Since these filter are directly defined using the Cartesian coordinate system, we will refer to these as Cartesian filters, to distinguish from the interpolation based filters.
\end{remark}
\begin{figure}
    \centering
    \includegraphics[width=\linewidth]{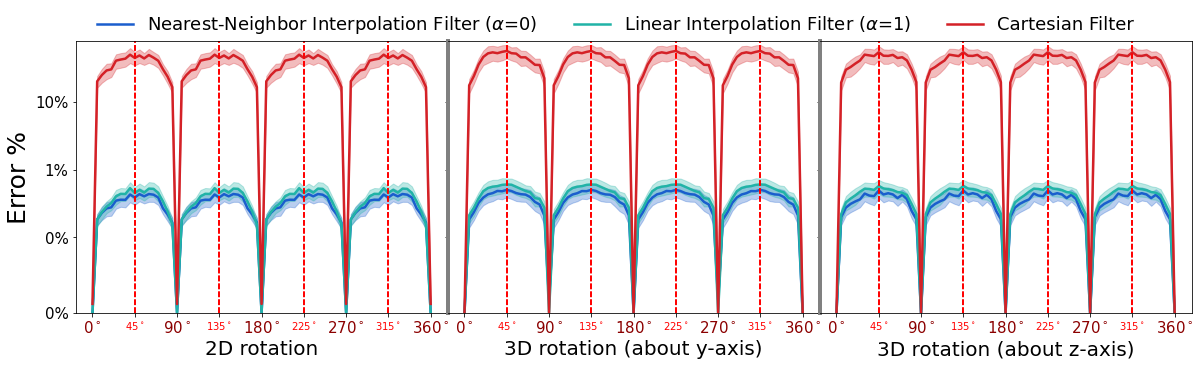}
    \caption{Simulation of equivariance error as a function of rotation in 2D and 3D. Each model consists of two convolutional layers separated by a normalization layer (Appendix \ref{sec: Normalization}) and followed by a flattening layer (Appendix \ref{sec: Flattening}), yielding rotation invariance. The error is defined as $\frac{\|\mathcal{M}(R\cdot f) - \mathcal{M}(f)\|_\infty}{\|f\|_1}$, where $\mathcal{M}$ denotes the model, $f$ the input, and $R$ the rotation. Model weights and inputs were sampled from a Gaussian distribution. Shaded regions indicate \(90\%\) Wald confidence intervals over \(100\) independent runs. A Fourier cutoff of \(4\) was used for the 2D model and \(0\) for the 3D model.}
    \label{fig: error}
\end{figure}

\section{Experiments}\label{sec: experiments}

We investigated the performance difference between Cartesian filters (Remark \ref{remark: delta}) and interpolation based filters in steerable convolutions by benchmarking their performance across four widely used datasets. For each dataset, we systematically varied the Fourier cutoff values to study their effect on model accuracy. Training was carried out both with and without rotation augmentation, and models were subsequently evaluated on rotated test samples to assess generalization to unseen orientations. Each experimental setup was repeated across five independent runs to ensure reliability. To further probe model robustness, we performed a sensitivity analysis by introducing Gaussian noise at multiple intensity levels into the test data and measuring how performance degraded under these perturbations. For each noise level and each trained model, evaluations were repeated over five runs. The following sections describe the datasets, training procedures, and detailed analyses of the experimental findings. 

\subsection{Datasets}

\begin{figure}[t]
    \centering
    \includegraphics[width=\linewidth]{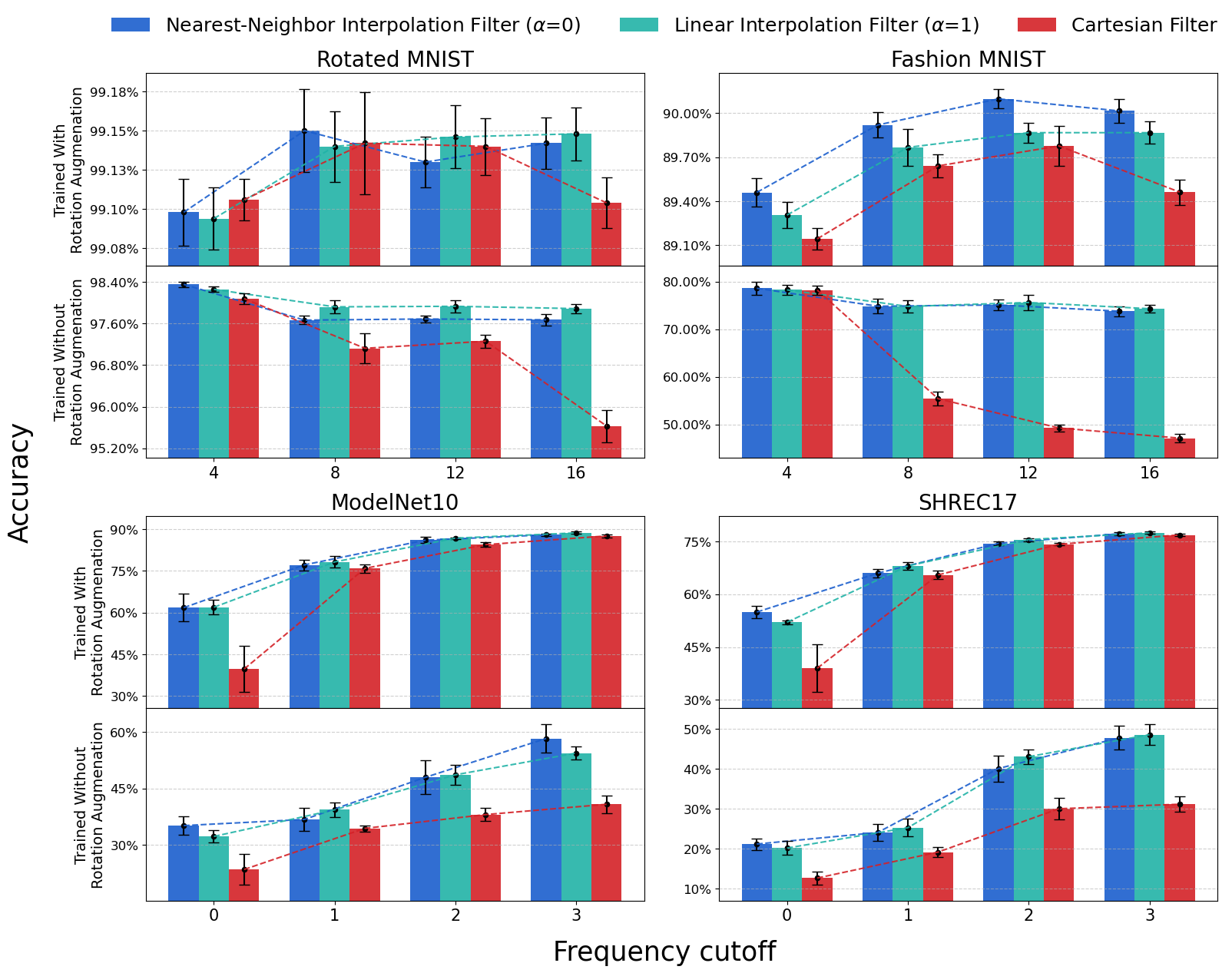}
   \caption{Performance comparison of Nearest Neighbor interpolation, Linear interpolation, and Cartesian filters across four benchmark datasets. Error bars indicate \(90\%\) Wald confidence intervals computed from five independent runs. The values used to generate this plot are provided in Table \ref{table: experiments}.}
    \label{fig: bar comparison}
\end{figure}

\paragraph{Rotated MNIST:}
The rotated MNIST dataset, a variant of the original MNIST dataset \citep{lecun2010mnist}, serves as a popular benchmark for assessing the performance of steerable architectures in 2D. It comprises grayscale images of size \(28 \times 28\) spanning 10 categories of handwritten digits \((0\)-\(9)\), with each image subjected to random rotations uniformly sampled from \(0^\circ\) to \(360^\circ\). This rotated dataset includes \(12,000\) training samples and \(50,000\) test samples. For training without rotation augmentation, we used a random set of \(12,000\) sampled from the original MNIST dataset.

\paragraph{Rotated Fashion MNIST:}

The Fashion MNIST dataset \citep{xiao2017fashion} contains grayscale images of size $28 \times 28$ spanning 10 clothing categories. Following the setup used for Rotated MNIST, we randomly selected $12,000$ samples for training. The remaining $58,000$ images were used to construct the test set, where each image was rotated into 16 orientations uniformly spaced between $0^\circ$ and $360^\circ$, yielding a total of $928,000$ test samples. Rotations were applied using linear interpolation. For the sensitivity analysis, we instead used the original $58,000$ test images, applying a random rotation to each and subsequently adding Gaussian noise.

\begin{figure}[t]
    \centering
    \includegraphics[width=\linewidth]{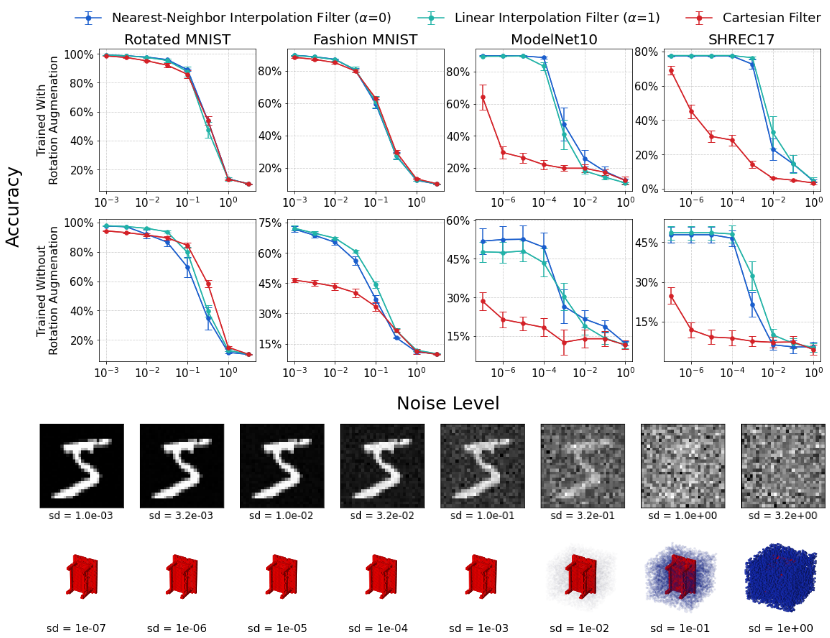}
    \caption{Effect of Gaussian noise perturbations on the performance of the model with highest frequency cutoff. For each noise level, every trained model was evaluated across \(5\) runs, with \(5\) models trained per configuration. Variance was estimated using $\text{Var}(X) = \text{Var}(\mathbb{E}[X|\mathcal{M}]) + \mathbb{E}[\text{Var}(X|\mathcal{M})]$, where $X$ denotes accuracy and $\mathcal{M}$ a trained model. Error bars indicate \(90\%\) Wald confidence intervals based on this variance. See Figure \ref{fig: sensitivity all} for the plots corresponding to all models.}
    \label{fig: sensitivity}
\end{figure}

\paragraph{ModelNet10:}
The ModelNet10 dataset \citep{wu20153d} is a 3D CAD collection spanning 10 object categories, with $3,991$ training samples and $908$ test samples. Each object is represented as a triangular mesh. For our experiments, we utilized the point cloud version of the dataset provided at \url{https://github.com/antao97/PointCloudDatasets}, where $2,048$ points were uniformly sampled from each object surface using the farthest point sampling algorithm. These point clouds were embedded into grids to obtain voxelized representations. To construct the test set, each example was rotated into $512$ orientations ($8$ uniformly spaced orientations between $0^\circ$ and $360^\circ$ about each of $y$ and $z$ axes, under the $y$-$z$-$y$ Euler parameterization of $\SO(3)$), yielding a total of $464,896$ test samples. For the sensitivity analysis, we instead used the original $908$ test objects, applied a uniformly sampled random rotation from $\SO(3)$, and added Gaussian noise.

\paragraph{SHREC17:}
The SHREC17 dataset \citep{shapenet2015} contains $51,162$ 3D models in .obj format, spanning 55 household object categories such as tables and chairs. It is divided into $35,764$ training samples, $5,133$ validation samples, and $10,265$ test samples. For our experiments, models were converted into $32 \times 32 \times 32$ voxel grids using the script available at \url{https://github.com/mariogeiger/obj2voxel}. The dataset defines two categories: a \emph{normal} set with upright objects and a \emph{perturbed} set with randomly rotated objects. We used the normal set for training without rotation augmentation and the perturbed set for testing. For the sensitivity analysis, Gaussian noise was added to the perturbed test samples.

\subsection{Training Details}

We used a single architecture for all 2D experiments and another for all 3D experiments. The 2D architecture was based on the model of \citet{weiler2019general} for Rotated MNIST. Models were trained with a batch size of $100$ for Fourier cutoffs of $4$, $8$, and $12$; for a cutoff of $16$, memory constraints required reducing the batch size to $60$. The 3D architecture followed the same overall design, with adjustments to account for the varying number of channels per Fourier component. Unlike the 2D case, where each Fourier cutoff contains the same number of channels, the channel count in 3D depends on the $\ell$-th Fourier component. For 3D experiments, batch sizes of $100$ were used for Fourier cutoffs of $0$, $1$, and $2$, and $85$ for cutoff $3$. All models were trained for $150$ epochs with the Adam optimizer \citep{adam}, using an initial learning rate of $5 \times 10^{-3}$ decayed by half every 20 epochs, and weight decay of $5 \times 10^{-4}$. For each dataset, models were trained both with and without rotation augmentation to assess performance under different conditions. Consistent with the approach of \citet{weiler20183d}, the number of radial grid points we chose $n_r$ to be $\max_{1 \leq i \leq d} \lfloor s_i / 2 \rfloor$, where $\lfloor \cdot \rfloor$ denotes the floor function and $s_1 \times s_2 \times \dots \times s_d$ specifies the filter size in Cartesian coordinates. For 2D Cartesian filters, we followed \citet{weiler2019general} and set $\tau=0.6$ for all radial profiles except the outermost ring, where $\tau=0.4$ was used. For 3D Cartesian filters, following \citet{weiler20183d}, we set $\tau=0.6$ for all radial profiles. We also drop the constant \(1/|\mathcal{F}(\SO(d))|\) in equation \eqref{eq: steerable filter basis higher layers} for the Cartesian filters to remain consistent with prior formulations in the literature.

\subsection{Results}

To examine equivariance error, we conducted a controlled simulation experiment. We used a minimal network comprising two convolutional layers separated by a normalization layer (Appendix \ref{sec: Normalization}) and followed by a flattening layer (Appendix \ref{sec: Flattening}), for both 2D and 3D steerable convolution settings. Equivariance error was quantified as the maximum difference in the output, normalized by the $\mathcal{L}_1$ norm of the input. Input resolutions were set to $1\times 28\times 28$ for 2D models and $1\times 32\times 32\times 32$ for 3D models, consistent with those used in the main experiments. For each simulation, both inputs and model weights were sampled from a Gaussian distribution. In the 2D case, equivariance error was computed across rotated inputs at increments of $5^\circ$ from $0^\circ$ to $360^\circ$. For the 3D case, rotations were applied separately about the $y$ and $z$ axes. Figure \ref{fig: error} compares the errors for networks using Cartesian versus interpolation based filters. Cartesian filters yielded substantially larger errors than interpolation based filters suggesting that the latter is better suited for generalizing to unseen rotations.

Figure \ref{fig: bar comparison} summarizes the performance of steerable networks using Cartesian and interpolation based filters. Across all four datasets and a range of Fourier cutoffs, interpolation based filters achieved comparable or superior performance. The performance gap was especially pronounced when models were trained without rotation augmentation, reflecting the higher equivariance error of Cartesian filters and their reduced ability to generalize to unseen orientations. Furthermore, as shown in Figure \ref{fig: sensitivity}, networks trained with Cartesian filters exhibited greater susceptibility to Gaussian noise perturbations, whereas interpolation based filters demonstrated increased robustness, with performance degrading more gradually under noise. These results indicate that interpolation based filters provide a more reliable alternative, particularly in noisy domains such as medical imaging. For brevity, Figure \ref{fig: sensitivity} shows the sensitivity analysis only for models with the highest Fourier cutoff. The corresponding plots for all Fourier cutoffs are provided in Figure \ref{fig: sensitivity all}.

Figure \ref{fig: bar comparison} illustrates the relationship between model accuracy and Fourier cutoff. Accuracy increases with cutoff up to a certain point, after which returns diminish, reflecting a bias variance tradeoff in the choice of cutoff. A persistent performance gap remains between models trained with and without rotation augmentation. Although such a gap should not theoretically exist, interpolation introduces equivariance errors that contribute to this disparity. Rotation augmentation enables the model to compensate for these inconsistencies, thereby improving performance (see Section \ref{sec: loss of equivariance}). The effect is more pronounced in 3D datasets, where accuracy declines more sharply.
\section{Conclusion}

In this work, we introduced a novel framework for deriving the equations of steerable convolutional networks, adopting a geometric perspective rather than an algebraic one. We established a general formulation of equivariant linear maps in a broad setting, encompassing existing results in the literature as special cases, and provided a new proof of this characterization. Building on this foundation, we systematically derived the equations of steerable convolutions using Fourier space arguments. This approach offers an intuitive methodology that not only facilitates efficient implementation but also enables the exploration of new architectural variations. The derivations further shed light on the natural emergence of spherical harmonics and the Clebsch--Gordan decomposition in the context of steerable convolutions. Furthermore, it motivated a novel implementation of steerable convolutions using interpolation kernels, which our experiments showed to outperform existing methods and to exhibit greater robustness to Gaussian noise perturbations. Collectively, these contributions advance the theoretical understanding of steerable networks and provide practical guidance for developing more robust and efficient architectures.

\bibliography{references}
\bibliographystyle{Format/tmlr/tmlr}

\clearpage
\appendix

\allowdisplaybreaks
\section{Background}

\subsection{Groups}\label{sec: background groups}
\paragraph{Definition:} A group \m{\G } is a non-empty set together with a binary operation (commonly denoted by ``\m{\cdot}'', that combines any two elements 
\m{a} and \m{b} of \m{\G } to form an element of 
\m{\G }, denoted \m{a\cdot b}, such that the following three requirements, known as group axioms, are satisfied:
\begin{itemize}
    \item \emph{Associativity} : For all \m{a,b,c \in \G }, one has \m{(a\cdot b)\cdot c = a\cdot(b \cdot c)}
    \item \emph{Identity Element} : There exists as element \m{e \in \G } such that for every \m{a \in \G }, one has \m{e\cdot a = a\cdot e = a}. Such an
    element is unique. It is called the \emph{identity element} of the group.
    \item \emph{Inverse} : For each \m{a \in \G }, there exists an element \m{b \in \G } such that \m{a\cdot b = b\cdot a = e}, where \m{e} is the identity element. For each \m{a}, the element \m{b} is unique and it is called the inverse of \m{a} and is commonly denoted by \m{a^{-1}}.
\end{itemize}
Examples include set of integers \m{\ZZ} with the addition operation and the set of non-zero reals \m{\RR \setminus \{0\}} with the multiplication operation.
From here on we will drop ``\m{\cdot}'', for simplicity. The group operation will be clear from the elements of the group concerned.

\paragraph{Group Homomorphism:} Given two groups \m{\G } and \m{H}, a function \m{\phi:\G \to H} is called a group homomorphism if \m{\phi(ab) = \phi(a)\phi(b)} for any \m{a,b\in \G }. If the map \m{\phi} is a bijection, it is called an \emph{isomorphism}. Furthermore, if \m{\G =H}, then an isomorphism is called \emph{automorphism}. The set of all automorphisms of a group \m{\G } with the operation of composition  form a group in itself and is denoted by \m{\textrm{Aut}(\G )}. 

\paragraph{Compact Groups:} A topological group is a topological space that is also a group such that the group operation and the inverse map are continuous. A compact group is a topological group whose topology realizes it as a compact topological space (see \citet{munkres1974topology} for definition of compact topological spaces).
Some classic examples of compact groups are the groups \m{\SO(d)} (the group of all real orthogonal matrices in \m{d} dimensions with determinant 1), \m{\textrm{U}(d)} (the group of all complex unitary matrices in \m{d} dimensions) and \m{\textrm{SU}(d)} (the group of all complex unitary matrices in \m{d} dimensions with determinant 1).

\emph{Special Orthonormal Group \(\SO(d)\):} This group comprises all real orthogonal matrices in \(d\) dimensions with a determinant of \(1\). These groups are associated with rotations in \(d\) dimensions. In two dimensions, the group \(\SO(2)\) can be parametrized by a single angle \(\theta\) corresponding to the rotation matrix
\begin{equation*}
    R(\theta) = 
    \begin{bmatrix}
        \cos(\theta) & -\sin(\theta) \\ 
        \sin(\theta) & \cos(\theta)
    \end{bmatrix}.
\end{equation*}
Here, \(R(\theta)\) signifies anti-clockwise rotation in the \(x\)-\(y\) plane by an angle \(\theta\in [0, 2\pi)\) radian. In three dimensions, the group $\SO(3)$ can be parameterized using the so called \emph{Euler angles}. Consider the rotation matrices $R_z(\alpha)$ and $R_y(\beta)$, defined by
\begin{equation*}
    R_z(\alpha) = 
    \begin{bmatrix}
        \cos(\alpha) & -\sin(\alpha) & 0\\
        \sin(\alpha) & \cos(\alpha) & 0\\
        0 & 0 & 1
    \end{bmatrix}, \quad
    R_y(\beta) = 
    \begin{bmatrix}
        \cos(\beta) & 0 & \sin(\beta)\\
        0 & 1 & 0\\
        -\sin(\beta) & 0 & \cos(\beta)
    \end{bmatrix},
\end{equation*}
which represent anti-clockwise rotations by angles $\alpha$ and $\beta$ about the $z$-axis and $y$-axis, respectively. Using the $z$-$y$-$z$ convention, any rotation matrix $R \in \SO(3)$ can be expressed as $R = R_z(\alpha) R_y(\beta) R_z(\gamma)$, where $\alpha, \gamma \in [0, 2\pi)$ and $\beta \in [0, \pi)$ are the Euler angles.

\paragraph{Semi-direct Product Groups:} Given two groups \m{N} and \m{H} and a group homomorphism \m{\phi : H\to \text{Aut}(N)}, we can construct a new group \m{N\rtimes_\phi H} defined as follows:
\begin{itemize}
    \item The underlying set is the Cartesian product \m{N\times H}.
    \item The group operation is given by \m{(n_1, h_1)(n_2, h_2) = (n_1\phi_{h_1}(n_2), h_1h_2)}.
\end{itemize}

\emph{Special Euclidean Group \m{{\text{SE}(d)}}:} The Special Euclidean group $\text{SE}(d)$ consists of all combinations of rotations and translations in $d$ dimensions. Translations in $\mathbb{R}^d$ form a group isomorphic to $\mathbb{R}^d$ itself, while rotations are represented by the group $\SO(d)$. We define a group homomorphism $\phi : \SO(d) \to \text{Aut}(\mathbb{R}^d)$ by setting $\phi(R) = (\mathbf{t} \mapsto R\mathbf{t})$, which describes how a rotation acts on a translation vector.
Using this, the group $\text{SE}(d)$ can be constructed as the semidirect product $\mathbb{R}^d \rtimes_\phi \SO(d)$. The group operation for two elements $(\mathbf{t}_1, R_1)$ and $(\mathbf{t}_2, R_2)$ in $\SE(d)$ is given by
\begin{equation*}
    (\mathbf{t}_1, R_1)(\mathbf{t}_2, R_2) = (\mathbf{t}_1 + R_1\mathbf{t}_2,\, R_1R_2).
\end{equation*}
The identity element of $\text{SE}(d)$ is $(\mathbf{0}, I)$, and the inverse of an element $(\mathbf{t}, R)$ is given by $(\mathbf{t}, R)^{-1} = (-R^{-1}\mathbf{t}, R^{-1})$.

\subsection{Group Actions}\label{sec: background group actions}
\paragraph{Definition:} If \m{\G } is a group with identity element \m{e}, and \m{X} is a set, then a (left) group action of \m{\G } on \m{X} is a function 
\m{\alpha: \G \times X \to X}, that satisfies the following two axioms for all \m{g, h\in \G } and \m{x\in X}:
\begin{itemize}
    \item \emph{Identity}: \m{\alpha(e, x) = x},
    \item \emph{Compatibility}: \m{\alpha(g, \alpha(h,x)) = \alpha(gh, x)}.
\end{itemize}
The group action is said to be continuous if \(\alpha\) is a continuous map. Often \m{\alpha(g,x)} is shortened to \m{g\cdot x}. Any group \m{\G } acts on itself by the group operation.  If \m{\G } acts on \m{X}, then it also naturally acts on any function \m{f} defined on \m{X}, as \m{(g\cdot f )(x) = f(g^{-1}\cdot x)}.

\emph{Action of \(\SE(d)\) on \(\RR^d\):} The special Euclidean group acts on a vector in \m{\RR^d} by first applying the rotation component followed by translation. For \m{\x \in \RR^d} and \m{(\t, R)\in \SE(d)},
    \begin{equation*}
        (\t, R)\cdot \x = R\x + \t
    \end{equation*}
gives us the action of \m{\SE(d)} on \m{\RR^d}.

\subsection{Group Representations}\label{sec: background group representations}
\paragraph{Definition:} A representation of a group \m{\G } is a group homomorphism from \m{\G } to \m{\text{GL}(\CC^n)} (group of invertible linear maps on \m{\CC^n}).
Here \m{n} is called the dimension of the representation, which can possibly be infinite. A representation is \emph{unitary} if \m{\rho} maps to unitary linear transformation of \m{\CC^n}.

\paragraph{Irreducible Representations:} If we have two representations, $\rho_1$ and $\rho_2$ of dimensions $n_1$ and $n_2$ respectively, then the two can be combined by a direct sum to give another representation of dimension $n_1+n_2$,
    \[\rho_1(g) \oplus \rho_2(g)  = \begin{bmatrix}
        \rho_1(g) & 0 \\ 0 & \rho_2(g)
    \end{bmatrix}.\]

\noindent A representation is said to be \emph{completely reducible} if it can be expressed as a direct sum of other representations after maybe a change of basis, i.e, 
    \[U\rho(g)U^{-1} = \bigoplus_i \rho_i(g)\]
where $U$ is a unitary change of basis matrix and the direct sum extends over some number of representations. 
However, for \emph{every} group there are a some representations which cannot be broken further into a direct sum of other representations. These are called the \emph{irreducible representations} or \emph{irreps} of the group. These irreps are the building blocks of the all other representations of the group, in the sense that any representation can be written as a direct sum of the irreps,
    \begin{equation*}
        \rho(g) = U\left[ \bigoplus_i \rho^{(i)}(g) \right]U^{-1}
    \end{equation*}
where again $U$ is a change of basis matrix and $\rho^{(i)}$ are the irreps. The Peter--Weyl Theorem (Theorem \ref{thm: peter thm}) by \citet{peter1927vollstandigkeit} tells us that for a compact group \m{\G }, any unitary representation \m{\rho} is completely reducible and splits into direct sum of irreducible \emph{finite dimensional unitary} representations of \m{\G }.

\paragraph{Irreducible Representations of \(\SO(2)\) and \(\SO(3)\):}\(\SO(d)\) being a compact group, all its irreps are finite dimensional unitary representations. In the case of $\SO(2)$, every irrep is one-dimensional and indexed by an integer. The group $\SO(2)$ can be parameterized by an angle $\theta \in [0, 2\pi)$, corresponding to an anti-clockwise rotation in the $x$-$y$ plane by $\theta$ radians. Under this parameterization, the irreps of $\SO(2)$ take the form
$$
\rho^{(k)}(\theta) = e^{ik\theta}, \qquad k \in \mathbb{Z}.
$$
The irreps of \(\SO(3)\) are indexed by positive integers $\ell \in \ZZ_{\geq 0}$, where the $\ell$'th representation is of dimension $2\ell+1$ and are given by the so called Wigner D-matrices \citep{wigner1932}:
    \[\rho^{(\ell)}(R) = D^{(\ell)}(R), \qquad \ell\in \ZZ_{\geq 0}.\]

\subsection{Fourier Transform}\label{sec: background fourier transform}

\paragraph{Haar Measure:} Theorem \ref{thm: haar} guarantees the existence of a unique canonical measure on locally compact groups, known as the (left) Haar measure. One key property of the Haar measure is that $\mu(U) > 0$ for every non-empty open subset $U \subseteq \G $. In the special case where $\G $ is compact, the total measure $\mu(\G )$ is finite and strictly positive. This allows us to uniquely determine a Haar measure by imposing the normalization condition $\mu(\G ) = 1$, which is especially useful in probability theory and harmonic analysis, where one often interprets the Haar measure as a uniform distribution over the group. Additionally, for compact groups such as $\SO(d)$, the Haar measure is unimodular, meaning it is invariant under both left and right translations, i.e., $\mu(gA) = \mu(Ag) = \mu(A)$ for all measurable subsets $A \subseteq \G $ and all $g \in \G $. This symmetry simplifies many computations and ensures that convolution and integration operations behave consistently with the group structure.

\paragraph{Fourier Transform on Compact groups:}A notable and useful property of compact groups is that the set of (isomorphism classes of) their irreps is countable \citep{robert1983introduction}. This fact underpins a powerful generalization of classical Fourier analysis to the setting of compact groups, enabling square-integrable functions to be decomposed into frequency components indexed by group representations. Let $f \in \mathcal{L}_2(\G )$, where $\mathcal{L}_2(\G )$ denotes the space of complex-valued, square-integrable functions on $\G $ with respect to the (normalized) Haar measure. For each irrep $\rho$, the Fourier transform of $f$ at $\rho$ is defined as
$$
\h{f}(\rho) = \int_\G f(g)\, \rho(g)\, d\mu(g),
$$
where $\h{f}(\rho)$ is a complex matrix of size $d_\rho \times d_\rho$. The inverse Fourier transform allows one to reconstruct the original function $f$ from its Fourier coefficients via the formula
$$
f(g) = \sum_{\rho } d_\rho\, \mathrm{Tr}\left( \h{f}(\rho)\, \rho(g)^\dagger \right),
$$
where the convergence is in the $\mathcal{L}_2$-sense. This formula is justified by the Peter--Weyl theorem (Theorem \ref{thm: peter thm}), which states that the matrix coefficients of all irreps form a complete orthonormal basis for $\mathcal{L}_2(\G )$. Consequently, any square integrable function on $\G $ can be expressed as a (generalized) Fourier series in terms of these basis functions.

\section{Spherical Harmonics}\label{sec: background spherical harmonics}

Spherical harmonics arise as a natural generalization of trigonometric functions and classical Fourier series to higher dimensional spheres. Just as the functions \(\sin\) and \(\cos\) provide a basis for square integrable functions on the unit circle \(\mathbb{S}^1\), spherical harmonics furnish an orthogonal basis for functions defined on the unit sphere \(\mathbb{S}^{d-1} \subset \mathbb{R}^d\). They play a central role in mathematical physics, harmonic analysis, and approximation theory, serving as the fundamental building blocks for problems with rotational symmetry. Applications range from quantum mechanics and geophysics to computer vision and machine learning, where spherical harmonics allow functions on spheres to be analyzed, expanded, and approximated in terms of simple, structured components.

From a mathematical perspective, spherical harmonics in \(d\) dimensions emerge from the study of harmonic polynomials. These are polynomials that satisfy the Laplace equation and thus capture a notion of ``balanced'' or ``smooth'' behavior in all directions. By restricting homogeneous harmonic polynomials to the unit sphere \(\mathbb{S}^{d-1}\), one obtains the spherical harmonics of degree \(\ell\). In this way, spherical harmonics can be viewed both as natural eigenfunctions of the Laplacian on the sphere and as the higher dimensional analogue of Fourier modes on the circle.

Formally, a polynomial \(P(\x_1, \dots, \x_d)\) is called \emph{homogeneous} of degree \(\ell\) if each of its monomials has total degree \(\ell\), that is,
\begin{equation*}
    P(\lambda \x_1, \dots, \lambda \x_d) = \lambda^\ell P(\x_1, \dots, \x_d), \quad \text{for all } \lambda \in \mathbb{R}.
\end{equation*}
For example, \(P(\x_1, \x_2) = \x_1^2 - \x_2^2\) is homogeneous of degree \(2\). A homogeneous polynomial is called \emph{harmonic} if it satisfies the Laplace equation,
\begin{equation*}
    \Delta P(\x_1, \dots, \x_d) = 0, \quad \Delta = \sum_{i=1}^d \frac{\partial^2}{\partial x_i^2}.
\end{equation*}

Let $\mathcal{H}_d^{(\ell)}$ denote the space of harmonic homogeneous polynomials of degree $\ell$ in $d$ variables. The corresponding spherical harmonics are obtained by restricting the elements of $\mathcal{H}_d^{(\ell)}$ to the unit sphere $\mathbb{S}^{d-1} \subset \mathbb{R}^d$. An orthonormal basis of this space will be denoted by $\Y^{(\ell)}_d$. The dimension of the space of spherical harmonics of degree $\ell$ is given by
\begin{equation*}
\operatorname{dim} \mathcal{H}_d^{(\ell)} = \frac{(2\ell + d - 2)(\ell + d - 3)!}{\ell!(d-2)!}.
\end{equation*}

To obtain explicit formulas for spherical harmonics in $d$ dimensions, it is convenient to use the standard angular parameterization of the unit sphere $\mathbb{S}^{d-1}$. For $\theta \in \Theta^{d-1} := [0,\pi]^{d-2} \times [0,2\pi)$, define $s(\theta) \in \mathbb{S}^{d-1}$ componentwise by
\begin{equation*}
    s_i(\theta) := 
    \begin{cases}
        \left(\prod_{j=1}^{i-2} \sin\theta_j\right)\cos\theta_i & 1 \leq i \leq d-1,\\
        \,\,\,\,\prod_{j=1}^{d-1} \sin\theta_j & i = d.
    \end{cases}
\end{equation*}
In these coordinates, one explicit choice of spherical harmonics $Y_d^{(\ell)}$ is given by
\begin{equation}\label{eq: spherical harmonics d dimensions}
      \left[\Y^{(\ell)}_d(s(\theta))\right]_{\mathbf{m}} 
    = \left[\mathcal{K}^{(\ell)}_{d}\right]_{\mathbf{m}}\frac{e^{\iota \mathbf{m}_{d-1} \theta_{d-1}}}{\sqrt{2\pi}}\prod_{j=1}^{d-2}(\sin \theta_j)^{\mathbf{m}_{j+1}} C_{\mathbf{m}_j - \mathbf{m}_{j+1}}^{\mathbf{m}_{j+1} + (d-j-1)/2}(\cos \theta_j)
\end{equation}
where the index $\mathbf{m} = (\mathbf{m}_1,\ldots,\mathbf{m}_{d-1})$ satisfies the condition $|\mathbf{m}_{d-1}| \leq \mathbf{m}_{d-2} \leq \cdots \leq \mathbf{m}_1 = \ell$, and the normalization constants \(\mathcal{K}^{(\ell)}_{d}\) are given by
\begin{equation*}
    \left[\mathcal{K}^{(\ell)}_{d}\right]_{\mathbf{m}} = \prod_{j=1}^{d-2}\frac{\Gamma\left(\mathbf{m}_{j+1}+\frac{d-j+1}{2}\right)}{2\mathbf{m}_{j+1}+d-j-1}\sqrt{\frac{2^{2\mathbf{m}_{j+1}+d-j-1}(2\mathbf{m}_j+d-j-1)(\mathbf{m}_j-\mathbf{m}_{j+1})!}{\pi(\mathbf{m}_j-\mathbf{m}_{j+1}+d-j-2)!}}.
\end{equation*}
Here, $C_n^{(\alpha)}$ denotes the Gegenbauer polynomial,
\begin{equation*}
    C_{n}^{(\alpha )}(z)=\sum _{k=0}^{\lfloor n/2\rfloor }(-1)^{k}{\frac {\Gamma (n-k+\alpha )}{\Gamma (\alpha )k!(n-2k)!}}(2z)^{n-2k},
\end{equation*}
where \(\alpha>0,\, n\in \ZZ_{\geq 0}\), and $\Gamma$ is the Gamma function \citep{cohl2023gegenbauer}. 

The constants $\mathcal{K}^{(\ell)}_{d}$ are chosen so that each basis coordinate function is normalized in $\mathcal{L}_2$ with respect to the surface measure on $\mathbb{S}^{d-1}$:
\begin{equation*}
    \int_{\Theta^{d-1}} \left\vert\left[\Y^{(\ell)}_d(s(\theta))\right]_{\mathbf{m}}\right\vert^2 \omega(\theta)\, d\theta = 1, \quad \omega(\theta) = \prod_{i=1}^{d-2} (\sin \theta_i)^{d-1-i}.
\end{equation*}
Additionally, with this normalization, the squared Euclidean norm of the vector $Y_d^{(\ell)}(s)$ is constant across the sphere. Specifically, for all $s \in \mathbb{S}^{d-1}$,
\begin{equation*}
    \|\Y^{(\ell)}_d(s)\|_2^2 = \frac{\operatorname{dim} \mathcal{H}_d^{(\ell)}}{\mathcal{A}(\mathbb{S}^{d-1})},
\end{equation*}
where $\mathcal{A}(\mathbb{S}^{d-1}) := \int_{\Theta^{d-1}} \omega(\theta)\, d\theta$ denotes the surface area of the unit sphere $\mathbb{S}^{d-1}$ (see Lemma \ref{lemma: spherical harmonics normalization} for proof).

In the special cases of two and three dimensions, the general expression \eqref{eq: spherical harmonics d dimensions} reduces to the familiar formulas. For $d=2$, corresponding to the unit circle $\mathbb{S}^{1}$, the spherical harmonics coincide with the Fourier modes:
\begin{equation*} 
    \Y^{(k)}_{2}(\theta)=\frac{1}{\sqrt{2\pi}}\,e^{\iota k\theta},\qquad k\in\mathbb{Z}.
\end{equation*}
For $d=3$, corresponding to the unit sphere $\mathbb{S}^{2}$, and using spherical coordinates $(\theta,\phi)$, the spherical harmonics take the classical form
\begin{equation*} 
    \big[\Y^{(\ell)}_{3}\big]_{m}(\theta,\phi)=\sqrt{\frac{2\ell+1}{4\pi}\,\frac{(\ell+m)!}{(\ell-m)!}}\; e^{\iota m\phi}\,P_{\ell}^{-m}(\cos\theta), \quad |m|\le \ell,\, \ell\in\mathbb{Z}_{\ge 0}.
\end{equation*}
where $P_{\ell}^{m}$ denotes the associated Legendre functions \citep{olver2010nist}.

\subsection{Spherical Harmonic Transform}

Every square integrable function $f \in \mathcal{L}_2(\mathbb{S}^{d-1})$ admits a spherical harmonic expansion that converges in the $\mathcal{L}_2$ sense:
\begin{equation*}\label{eq: ISHT background} 
    f(s) = \sum_{\ell=0}^{\infty} \Y^{(\ell)}_d(s)^\dagger \widehat f_{\ell} , \quad s \in \mathbb{S}^{d-1} 
\end{equation*}
Here the expansion coefficients $\widehat{f}_{\ell}$ are obtained as the inner products of $f$ with the basis functions, namely
\begin{equation*}\label{eq: SHT background} 
    \widehat f_{\ell} = \int_{\Theta^{d-1}} f(s(\theta)) {\Y^{(\ell)}_d(s(\theta))} \omega(\theta)\, d\theta. 
\end{equation*}

\subsection{Connection to irreps of SO(d)}

A distinguished subclass of representations of \( \SO(d) \) is realized on the space of symmetric traceless tensors.
For each integer \( \ell \ge 0 \), these spaces are defined as
\begin{equation*}
    V^{(\ell)} := \left\{T_{i_1 \dots i_\ell} \in (\mathbb{C}^d)^{\otimes \ell}\, \bigg\vert \, T_{i_{\sigma(1)} \dots i_{\sigma(\ell)}} = T_{i_1 \dots i_\ell}\ \forall\sigma\in S_\ell;\, \delta_{i_1 i_2} T_{i_1 i_2 i_3 \dots i_\ell} = 0 \right\}
\end{equation*}
where \( \delta_{ij} \) denotes the Kronecker delta. Elements of \( V^{(\ell)} \) are completely symmetric under permutation of indices and traceless under contraction with the Euclidean metric. The group \( \SO(d) \) acts naturally on \( (\mathbb{C}^d)^{\otimes \ell} \) by rotation of tensor indices, and this action preserves both symmetry and tracelessness.
Explicitly, the representation \(\rho^{(\ell)} : \SO(d) \to \GL(V^{(\ell)})\) is given by
\begin{equation*}
    \left[\rho^{(\ell)}[R](T)\right]_{i_1 \dots i_\ell} = \sum_{j_1=1}^{d} \cdots \sum_{j_\ell=1}^{d} R_{i_1 j_1} R_{i_2 j_2} \cdots R_{i_\ell j_\ell} T_{j_1 \dots j_\ell}, \quad R\in \SO(d).
\end{equation*}
Consequently, \( V^{(\ell)} \) is invariant under this action and carries an irrep of \( \SO(d) \). Since its carrier space consists of symmetric traceless tensors, the resulting irrep \( \rho^{(\ell)} \) are referred to as the \emph{symmetric traceless irrep} of degree \( \ell \).

The spaces of spherical harmonics \( \mathcal{H}_d^{(\ell)} \) furnish a functional realization of these representations. Under the natural action of \( \SO(d) \) on functions defined on the sphere,
\begin{equation*}
    [R \cdot f](s) = f(R^{-1}s),
\end{equation*}
the subspace \( \mathcal{H}_d^{(\ell)} \) remains invariant, and its basis functions \( Y_d^{(\ell)}(s) \) transform as
\begin{equation*}
    \Y^{(\ell)}_d(R^{-1}s) = \rho^{(\ell)}(R)^\dagger \Y^{(\ell)}_d(s),
\end{equation*}
Hence, spherical harmonics provide an explicit realization of the symmetric traceless irreps of \( \SO(d) \) as functions on the sphere. Moreover, the tensor product of two such symmetric traceless representations decomposes as a direct sum of symmetric and non-symmetric irreps \citep{georgi2000lie}:
\begin{equation*}
    \mathcal{H}_d^{(\ell_1)} \otimes \mathcal{H}_d^{(\ell_2)} \cong \bigoplus_{k=0}^{\min(\ell_1, \ell_2)} \mathcal{H}_d^{(\ell_1+\ell_2 - 2k)}   \bigoplus \text{ non-symmetric irreps}.
\end{equation*}
Each symmetric traceless irrep appearing in this decomposition occurs with multiplicity exactly one.
\section{Other Steerable Layers}\label{sec: implementataion}

Here we provide the details of other equivariant layers that we used in designing steerable neural networks.

\subsection{Non-linearity}\label{sec: CG nonlinearity}

All the operations we've discussed so far are linear in \(\fin\), but for a neural network layer, introducing nonlinearity is crucial. A common strategy in Fourier space networks is to first apply an inverse Fourier transform to bring \(\h \fpre\) back to the time domain. Here, a standard nonlinear activation function, such as ReLU, can be applied pointwise before transforming it back to Fourier space. This method allows flexibility in choosing any activation function. However, even with the efficiency of FFTs, the repeated forward and backward transforms can be computationally demanding, potentially becoming a bottleneck for the network. Furthermore, for groups such as $\SO(3)$, discretization on a uniform grid can introduce singularities, making FFT-based computations prone to numerical instability and errors. One strategy proposed by \citet{worrall2017harmonic} involves applying standard non-linearities, such as ReLU, to the norm of equivariant feature vectors at each spatial location. Another approach, used by \citet{weiler20183d, weiler2018learning}, first convolving the input with another steerable filter, then extracting the component corresponding to the constant representation, and finally multiplying it pointwise with the input.

In our experiments, we adopt the Clebsch--Gordan non-linearity used by \citet{SphericalCNNNeurIPS2018, anderson2019cormorant}. This involves computing the tensor product of the equivariant vector with itself at each location, followed by a CG--decomposition. In the 2D case, the output \(\h\fout\) at a given location \(\x\) is given by
\begin{equation*}
    \h\fout(\x, k) = \sum_{k'=0}^{k_{\max}} \eta_{k,k'}\widehat{\fpre}(\x, k') \widehat{\fpre}(\x, [k-k']_{n_a}), \quad 0 \leq k \leq k_{\max},
\end{equation*}
and in the 3D case, it is given by
\begin{equation*}
    \h\fout(\x, \ell) = \sum_{\ell_1, \ell_2=0}^{\ell_{\max}} \eta_{\ell, \ell_1, \ell_2} C_{\ell, \ell_1, \ell_2}^\dagger \left[ \widehat{\fpre}(\x, \ell_1) \otimes \widehat{\fpre}(\x, \ell_2) \right] C_{\ell, \ell_1, \ell_2}, \quad 0 \leq \ell \leq \ell_{\max}.
\end{equation*}
Here, \(k_{\max}\), \(\ell_{\max}\) are the respective Fourier frequency cutoffs, and \(\eta_{k,k'}\), \(\eta_{\ell, \ell_1, \ell_2}\) are learnable weights. The tensor product operation on large tensors is the most computationally demanding step in the network, especially for the 3D case. To optimize this, we utilize the \texttt{GElib} library \citep{gelib}, which efficiently implements this operation for the 3D case.

\subsection{Normalization}\label{sec: Normalization}
Under the action of any \((\t,R)\in \SE(d)\), the features extracted from steerable convolutions transform as
\begin{equation*}
    \h\fpre(\x, \rho)\mapsto [(\t,R)^{-1} \h\fpre](\x, \rho) =  \rho(R)^\dagger\h\fpre(R\x+\t, \rho).
\end{equation*}
Since the matrices \(\rho(R)\) are unitary, the \(\mathcal{L}_2\) norm of each Fourier component remains invariant under this transformation. 
Motivated by this property, we normalize each Fourier vector by its norm to achieve an equivariant form of normalization,
\begin{equation*}
    \h \fout(\x, \rho) = \frac{\h\fpre(\x, \rho)}{\sqrt{\sum_{\rho} ||\h\fpre(\x, \rho)||_2^2}}.
\end{equation*}
This normalization not only preserves equivariance but also enhances stability within the network by preventing numerical overflow and facilitating faster convergence during training.

\subsection{Pooling}\label{sec: implementataion pooling}
Feature maps produced by steerable convolutions store, at each location $\x$, an equivariant vector $\h\fpre(\x, \rho)$ corresponding to each irreducible representation $\rho$. Since the group action is inherently a linear operation, multiple such vectors at different locations can be averaged, and the resulting vector will still remain equivariant. Exploiting this property, we use average pooling (instead of the commonly used max pooling) to ensure that equivariance is preserved.
The operation of average pooling is expressed as
\begin{equation*}
    \h\fout(\x, \rho) = \frac{1}{|\mathcal{N}(\x)|} \sum_{\x' \in \mathcal{N}(\x)} \h\fpre(\x', \rho).
\end{equation*}
Here, \(\mathcal{N}(\x)\) represents the neighborhood of \(\x\). This operation aggregates the angular information from neighboring locations while preserving the equivariant properties of the input field.

\subsection{Flattening layer}\label{sec: Flattening}
For the flattening layer, we apply a similar approach to that used for normalization, leveraging the property that the norm of equivariant vectors remains invariant under group actions. Specifically, we compute the mean of the sum of the equivariant vectors across all spatial locations and then calculate its norm
\begin{equation*}
    \sqrt{\sum_{\rho} \left\| \frac{1}{|\mathcal{F}|} \sum_{\x \in \mathcal{F}} \h\fout(\x, \rho) \right\|_2^2},
\end{equation*}
where \(\mathcal{F}\) represents the set of all spatial locations. This operation makes the network \emph{invariant} to rotations and translations of the input by aggregating equivariant features into a single scalar value. The resulting output can then be passed on to subsequent linear layers for further processing.

\section{Supplementary Experimental Results}

In the following, we provide supplementary results to complement the main experiments.

\subsection{Comparison of filters}
We now present the experimental results underlying the bar plot shown in Figure \ref{fig: bar comparison}.

\begin{table}[h]
    \centering
    \setlength{\tabcolsep}{3pt}
    \begin{tabular}{c|c|c|ccc|c|cc}
    \hline
         \multicolumn{2}{c|}{\gray} & \gray Fourier & \multicolumn{3}{c|}{\gray Accuracy} & \gray Params & \multicolumn{2}{c}{\gray Avg Run Time}\\
         \cline{4-6}
         \cline{8-9}
          \multicolumn{2}{c|}{\multirow{-2}{*}{\gray Dataset}}  & \gray Cutoff & \gray NN &\gray Linear & \gray Cartesian & \gray (\(\sim\times 10^6\)) & \gray Training(ms) & \gray Inference(ms)\\
        \hline
        \multirow{16}{*}{\rotatebox{90}{\parbox{5cm}{\thead{Trained With\\Rotation Augmentation}}}}
        &\multirow{8}{*}{\thead{Rot MNIST\\Fashion MNIST}}
        &\multirow{2}{*}{$k=4$}   
                  & \(99.10_{\pm 0.02}\) & \(99.09_{\pm 0.02}\) & \(99.11_{\pm 0.01}\) & \multirow{2}{*}{0.56} & \multirow{2}{*}{1.26} & \multirow{2}{*}{0.45}\\
                 &&& \(89.46_{\pm 0.10}\) & \(89.31_{\pm 0.09}\)  & \(89.14_{\pm 0.07}\) &\\
                \cline{3-9}
        &&\multirow{2}{*}{$k=8$}   
                  & \(99.15_{\pm 0.03}\) & \(99.14_{\pm 0.02}\) & \(99.14_{\pm 0.03}\) & \multirow{2}{*}{2.23} & \multirow{2}{*}{2.63} & \multirow{2}{*}{0.83}\\
                 &&& \(89.92_{\pm 0.09}\) & \(89.77_{\pm 0.12}\)  & \(89.64_{\pm 0.08}\) &\\
                \cline{3-9}
        &&\multirow{2}{*}{$k=12$}   
                  & \(99.13_{\pm 0.02}\) & \(\mathbf{99.15_{\pm 0.02}}\) & \(99.14_{\pm 0.02}\) & \multirow{2}{*}{5.00} & \multirow{2}{*}{4.91} & \multirow{2}{*}{1.60}\\
                 &&& \(\mathbf{90.10_{\pm 0.07}}\) & \(89.87_{\pm 0.07}\)  & \(89.78_{\pm 0.14}\) &\\
                \cline{3-9}
        &&\multirow{2}{*}{$k=16$}   
                  & \(99.14_{\pm 0.02}\) & \(99.15_{\pm 0.02}\) & \(99.10_{\pm 0.02}\) & \multirow{2}{*}{8.88} & \multirow{2}{*}{9.03} & \multirow{2}{*}{2.55}\\
                &&& \(90.01_{\pm 0.08}\) & \(89.87_{\pm 0.08}\)  & \(89.46_{\pm 0.09}\) &\\
   \cline{2-9}
    \noalign{\vskip\doublerulesep
         \vskip-\arrayrulewidth}
    \cline{2-9}
        &\multirow{8}{*}{\thead{ModelNet10 \\ SHREC17}}
        &\multirow{2}{*}{$\ell=0$}  
            & \(61.78_{\pm 4.95}\)  & \(61.93_{\pm 2.62}\)  & \(39.71_{\pm 8.34}\) & \multirow{2}{*}{0.08} & \multirow{2}{*}{1.90} & \multirow{2}{*}{0.90}\\
            &&& \(54.97_{\pm 1.83}\)  & \(52.09_{\pm 0.54}\)  & \(38.99_{\pm 6.82}\) & \\
        \cline{3-9}
        &&\multirow{2}{*}{$\ell=1$}   
             & \(77.18_{\pm 1.93}\)  & \(78.24_{\pm 2.03}\)  & \(75.91_{\pm 1.54}\) & \multirow{2}{*}{0.12} & \multirow{2}{*}{4.21} & \multirow{2}{*}{1.55}\\
             &&& \(66.02_{\pm 1.13}\)  & \(68.10_{\pm 1.14}\)  & \(65.53_{\pm 1.15}\) & \\
        \cline{3-9}
        &&\multirow{2}{*}{$\ell=2$}   
             & \(86.06_{\pm 1.13}\)  & \(86.79_{\pm 0.25}\)  & \(84.46_{\pm 0.90}\) & \multirow{2}{*}{0.28} & \multirow{2}{*}{12.42} & \multirow{2}{*}{4.10}\\
             &&& \(74.51_{\pm 0.57}\)  & \(75.59_{\pm 0.43}\)  & \(74.25_{\pm 0.36}\) & \\
        \cline{3-9}
        &&\multirow{2}{*}{$\ell=3$}   
             & \(87.97_{\pm 0.52}\)  & \(\mathbf{88.67_{\pm 0.39}}\)  & \(87.58_{\pm 0.51}\) & \multirow{2}{*}{0.80} & \multirow{2}{*}{50.75} & \multirow{2}{*}{16.86}\\
             &&& \(77.24_{\pm 0.39}\)  & \(\mathbf{77.52_{\pm 0.28}}\)  & \(76.92_{\pm 0.24}\) & \\

    \hline
        \multicolumn{2}{c|}{\gray} & \gray Fourier & \multicolumn{3}{c|}{\gray Accuracy} & \gray Params  & \multicolumn{2}{c}{\gray Avg Run Time}\\
         \cline{4-6}
         \cline{8-9}
          \multicolumn{2}{c|}{\multirow{-2}{*}{\gray Dataset}}  & \gray Cutoff & \gray NN &\gray Linear & \gray Cartesian & \gray (\(\sim\times 10^6\)) & \gray Training(ms) & \gray Inference(ms)\\
    \hline
    \multirow{16}{*}{\rotatebox{90}{\parbox{5cm}{\thead{Trained Without\\Rotation Augmentation}}}}
        &\multirow{8}{*}{\thead{Rot MNIST\\Fashion MNIST}}
        &\multirow{2}{*}{$k=4$}   
                  & \(\mathbf{98.35_{\pm 0.05}}\) & \(98.26_{\pm 0.05}\) & \(98.08_{\pm 0.10}\) & \multirow{2}{*}{0.56} & \multirow{2}{*}{1.26} & \multirow{2}{*}{0.45}\\
                 &&& \(\mathbf{78.63_{\pm 1.42}}\) & \(78.29_{\pm 0.97}\)  & \(78.26_{\pm 0.99}\) &\\
                \cline{3-9}
        &&\multirow{2}{*}{$k=8$}   
                  & \(97.67_{\pm 0.08}\) & \(97.92_{\pm 0.13}\) & \(97.12_{\pm 0.29}\) & \multirow{2}{*}{2.23} & \multirow{2}{*}{2.63} & \multirow{2}{*}{0.83}\\
                 &&& \(74.86_{\pm 1.55}\) & \(74.82_{\pm 1.22}\)  & \(55.42_{\pm 1.42}\) &\\
                \cline{3-9}
        &&\multirow{2}{*}{$k=12$}   
                  & \(97.69_{\pm 0.07}\) & \(97.93_{\pm 0.12}\) & \(97.26_{\pm 0.13}\) & \multirow{2}{*}{5.00} & \multirow{2}{*}{4.91} & \multirow{2}{*}{1.60}\\
                 &&& \(75.13_{\pm 1.09}\) & \(75.64_{\pm 1.57}\)  & \(49.27_{\pm 0.75}\) &\\
                \cline{3-9}
        &&\multirow{2}{*}{$k=16$}   
                  & \(97.67_{\pm 0.11}\) & \(97.89_{\pm 0.08}\) & \(95.63_{\pm 0.31}\) & \multirow{2}{*}{8.88} & \multirow{2}{*}{9.03} & \multirow{2}{*}{2.55}\\
                &&& \(73.82_{\pm 1.02}\) & \(74.34_{\pm 0.78}\)  & \(47.14_{\pm 0.81}\) &\\
   \cline{2-9}
    \noalign{\vskip\doublerulesep
         \vskip-\arrayrulewidth}
    \cline{2-9}
        &\multirow{8}{*}{\thead{ModelNet10 \\ SHREC17}}
        &\multirow{2}{*}{$\ell=0$}  
            & \(35.13_{\pm 2.42}\)  & \(32.25_{\pm 1.63}\)  & \(23.46_{\pm 4.06}\) & \multirow{2}{*}{0.08} & \multirow{2}{*}{1.90} & \multirow{2}{*}{0.90}\\
            &&& \(21.20_{\pm 1.44}\)  & \(20.19_{\pm 1.77}\)  & \(12.68_{\pm 1.68}\) & \\
        \cline{3-9}
        &&\multirow{2}{*}{$\ell=1$}   
             & \(36.77_{\pm 3.04}\)  & \(39.32_{\pm 2.03}\)  & \(34.39_{\pm 0.84}\) & \multirow{2}{*}{0.12} & \multirow{2}{*}{4.21} & \multirow{2}{*}{1.55}\\
             &&& \(24.09_{\pm 2.15}\)  & \(25.30_{\pm 2.20}\)  & \(19.18_{\pm 1.18}\) & \\
        \cline{3-9}
        &&\multirow{2}{*}{$\ell=2$}   
             & \(47.95_{\pm 4.46}\)  & \(48.63_{\pm 2.67}\)  & \(38.06_{\pm 1.76}\) & \multirow{2}{*}{0.28} & \multirow{2}{*}{12.42} & \multirow{2}{*}{4.10}\\
             &&& \(40.13_{\pm 3.27}\)  & \(43.10_{\pm 1.84}\)  & \(30.05_{\pm 2.72}\) & \\
        \cline{3-9}
        &&\multirow{2}{*}{$\ell=3$}   
             & \(\mathbf{58.28_{\pm 3.76}}\)  & \(54.40_{\pm 1.80}\)  & \(40.79_{\pm 2.29}\) & \multirow{2}{*}{0.80} & \multirow{2}{*}{50.75} & \multirow{2}{*}{16.86}\\
             &&& \(47.84_{\pm 2.93}\)  & \(\mathbf{48.62_{\pm 2.59}}\)  & \(31.18_{\pm 1.91}\) & \\
        \hline
    \end{tabular}
    
    \caption{Comparison of Nearest Neighbor (NN) interpolation, Linear interpolation, and Cartesian filters on four benchmark datasets. Error bars show \(90\%\) Wald confidence intervals over five independent runs. Reported runtimes correspond to the average per-sample cost of one forward and backward pass during training and one forward pass during inference.}
    \label{table: experiments}
\end{table}

\newpage
\subsection{Sensitivity Analysis}
While Figure \ref{fig: sensitivity} presented the sensitivity analysis for models with the highest frequency cutoff, here we report the results for all models. In this expanded view, Figure \ref{fig: sensitivity} corresponds to the final column of the figure.
\begin{figure}[h]
    \centering
    \includegraphics[width=\linewidth]{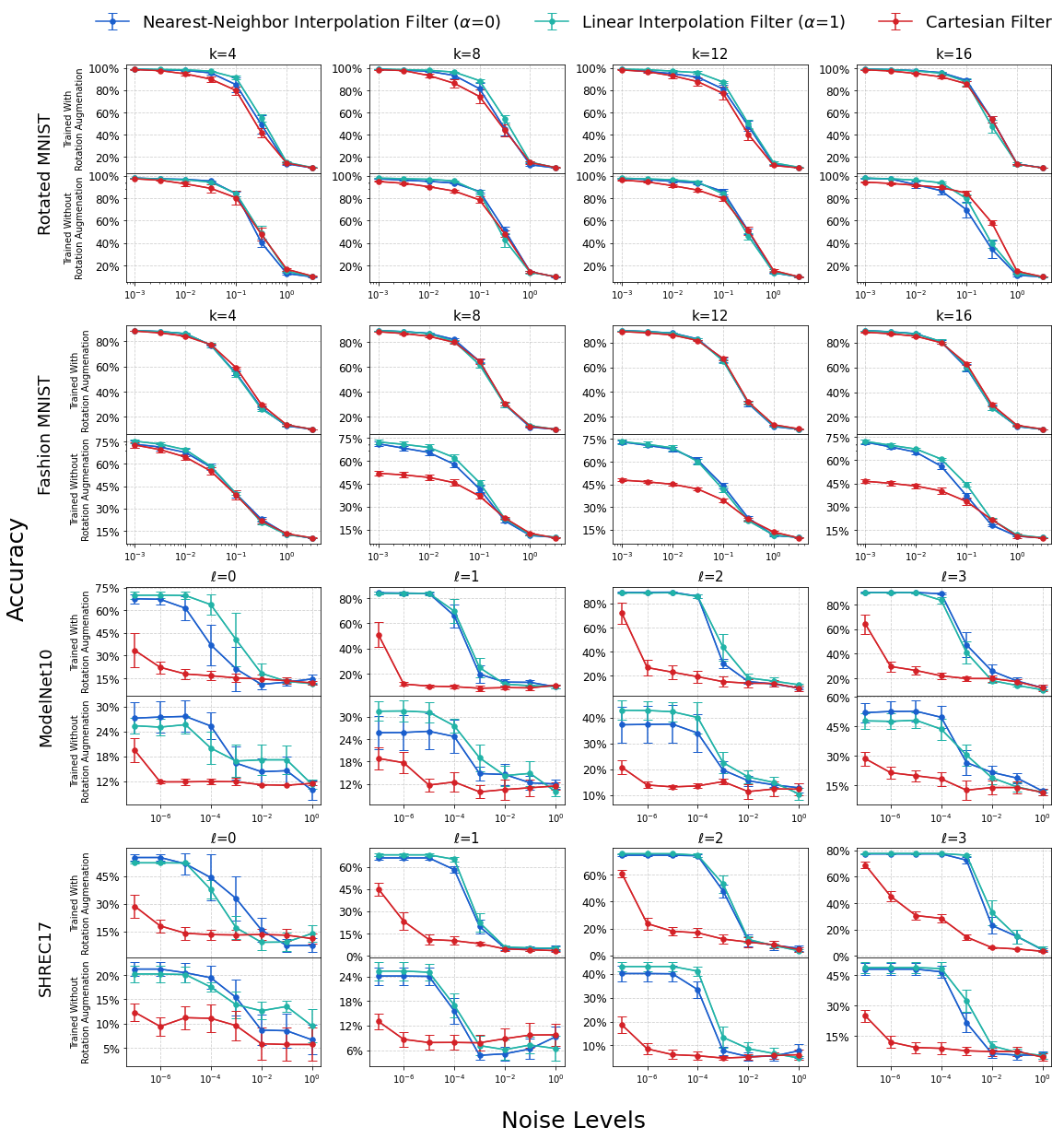}
    \caption{Effect of Gaussian noise perturbations on the performance of the model. For each noise level, every trained model was evaluated across \(5\) runs, with \(5\) models trained per configuration. Variance was estimated using $\text{Var}(X) = \text{Var}(\mathbb{E}[X|\mathcal{M}]) + \mathbb{E}[\text{Var}(X|\mathcal{M})]$, where $X$ denotes accuracy and $\mathcal{M}$ a trained model. Error bars indicate \(90\%\) Wald confidence intervals based on this variance.}
    \label{fig: sensitivity all}
\end{figure}
\section{Proofs}\label{sec: proofs}

\subsection{Proof of Theorem \ref{thme: equivariant linear map}}

\begin{proof} 
    Let \(e\) denote the identity element of \(\G \).  
    Because \(\phi\) is an equivariant, for every \(f\in C_{c}(\mathcal X)\) and \(g\in \G \), we have
    \begin{equation*}
        \phi[f](g)=\phi[f](g\!\cdot\!e)=g^{-1}\!\cdot\!\phi[f](e)=\phi[g^{-1}\!\cdot\!f](e).
    \end{equation*}
    Define \(\Lambda:C_{c}(\mathcal X)\to\CC\) such that \(\Lambda(f):=\phi[f](e)\). Since the operator \(\phi\) is bounded, there is a constant \(C>0\) such that for all \(f\in C_{c}(\mathcal X)\)
    \begin{equation*}
        |\Lambda(f)| \;=\;|\phi[f](e)|\;\le\;C\|f\|_{\infty}
    \end{equation*}
    Thus, \(\Lambda\) is a bounded linear functional on \(C_{c}(\mathcal X)\). Because \(\mathcal X\) is a locally compact Hausdorff space, \(C_{c}(\mathcal X)\) is sup‑norm dense in 
    \(C_{0}(\mathcal X)\), the space of continuous functions in \(\mathcal{X}\) vanishing at infinity. Boundedness therefore extends \(\Lambda\) uniquely to \(C_{0}(\mathcal X)\).  
    By the Riesz--Markov--Kakutani theorem (Theorem \ref{thm: riesz rep thm}), there exists a \emph{unique} complex Radon measure \(\lambda\) on \(\mathcal X\) such that for all \(f\in C_0(\mathcal{X})\), we have
    \begin{equation*}
        \Lambda(f)=\int_{\mathcal X} f\,d\lambda.
    \end{equation*}
    Now suppose \(\mathcal{X}\) is \(\sigma\)-compact and equipped with a \(\sigma\)-finite measure \(\sigma\).
    By the Hahn--Jordan decomposition theorem (Theorem \ref{thm: hahn decomp thm}) there exists four unique non‑negative Radon measures \(\lambda_1, \lambda_2, \lambda_3, \lambda_4\) such that
    \begin{equation*}
        \lambda \;=\;(\lambda_{1}-\lambda_{2}) \;+\; i\,(\lambda_{3}-\lambda_{4}).
    \end{equation*} 
    Since \(\mathcal X\) is \(\sigma\)-compact, and \(\lambda_j\) is finite on compact sets (by definition of Radon measure), each \(\lambda_{j}\) is \(\sigma\)-finite. By Lebesgue decomposition theorem (Theorem \ref{thm: leb decomp thm}), for each \(1\leq j\leq 4\), there exists a unique real valued functions \(w_j\in \mathcal{L}_1(\mathcal{X})\) and a unique measure \(\nu_j\), which are singular with respect to \(\sigma\), such that \(d\lambda_i = w_id\sigma + d\nu_i\). Set  
    \begin{align*}
        &w:=(w_{1}-w_{2})+i(w_{3}-w_{4}),\\
        &\nu:=(\nu_{1}-\nu_{2})+i(\nu_{3}-\nu_{4})
    \end{align*}
    so that \(d\lambda = w\,d\sigma + d\nu\). Now, for any \(f\in C_c(\mathcal{X})\), let \(\Omega(f) \subseteq \mathcal{X}\) denote the compact support of \(f\). For any \(g\in \G \), the support for the translate \(g^{-1}\cdot f\) is given by \(g^{-1}\cdot \Omega(f) \). Since the action of \(\G \) on \(\mathcal{X}\) is continuous, \(g^{-1}\cdot \Omega(f) \) is also compact, as it is the image of a compact set under a continuous map. Hence, for any \(f\in C_{c}(\mathcal X)\) and \(g\in \G \), we have \(g^{-1}\cdot f\in C_{c}(\mathcal X)\). Therefore,
    \begin{equation*}
        \phi[f](g) \;=\; \phi[g^{-1}\!\cdot\!f](e) \;=\;\Lambda(g^{-1}\!\cdot\!f) \;=\;\int_{\mathcal X} g^{-1}\!\cdot\!f\,d\lambda \;=\;\bigl\langle g^{-1}\!\cdot\!f,\;w\bigr\rangle  \;+\;\int_{\mathcal X} g^{-1}\!\cdot\!f\,d\nu.
    \end{equation*}
\end{proof}

\subsection{Proof of Proposition \ref{prop: diagF}}

\begin{proof} 
    Using the inverse Fourier transform on \(\G \times \G \), 
    \begin{equation*}
        f(g,g)\<=\sum_{\rho_1,\rho_2} \fr{d_{\rho_1} d_{\rho_2}}{\mu(\G )^2}\,\tr\sqbbig{\h f(\rho_1,\rho_2)\,\rho_1(g^{-1})\otimes \rho_2(g^{-1})}.
    \end{equation*}
    Since the trace of a matrix is invariant to similarity transformations, we can rewrite the trace term as 
    \begin{multline*}
        \tr\sqbBig{C_{\rho_1,\rho_2}^\dag\, \h f(\rho_1,\rho_2)\,C_{\rho_1,\rho_2}\cdot
        C_{\rho_1,\rho_2}^\dag\rho_1(g^{-1})\otimes \rho_2(g^{-1})C_{\rho_1,\rho_2}}=\\
        \tr\sqbbigg{\brbigg{C_{\rho_1,\rho_2}^\dag\, \h f(\rho_1,\rho_2)\,C_{\rho_1,\rho_2}} 
        \brbigg{\bigoplus_{\rho\in\Rcal} \bigoplus_{i=1}^{\kappa(\rho_1,\rho_2,\rho)} \rho(g^{-1})}}.
    \end{multline*}
    Now, since the second factor under the trace is block diagonal, the first one can be brought to similar form, giving
    \begin{equation*}
        \tr\sqbbigg{ \brbigg{\bigoplus_{\rho\in\Rcal} \bigoplus_{i=1}^{\kappa(\rho_1,\rho_2,\rho)} F_{\rho_1,\rho_2}^{\rho,i}} \brbigg{\bigoplus_{\rho\in\Rcal} \bigoplus_{i=1}^{\kappa(\rho_1,\rho_2,\rho)} \rho(g^{-1})}}= \sum_\rho\sum_{i=1}^{\kappa(\rho_1,\rho_2,\rho)} \tr\sqbbig{ F_{\rho_1,\rho_2}^{\rho,i}\, \rho(g^{-1})},
    \end{equation*}
    where 
    \begin{equation*}
        F_{\rho_1,\rho_2}^{\rho,i}=\sqbBig{C_{\rho_1,\rho_2}^\dag\, \h f(\rho_1,\rho_2)\,C_{\rho_1,\rho_2}}_{\rho,i}.
    \end{equation*}
    In summary, collecting terms indexed by the same \(\rho\)
    \begin{equation*}
        \diagf{f}(g)=f(g,g)=\sum_{\rho} \sum_{\rho_1,\rho_2} \sum_{i=1}^{\kappa(\rho_1,\rho_2,\rho)}\fr{d_{\rho_1} d_{\rho_2}}{\mu(\G )^2}\:
    \tr\sqbbig{ F_{\rho_1,\rho_2}^{\rho,i}\, \rho(g^{-1})}.
    \end{equation*}
    Comparing this to the form of the inverse Fourier transform on the diagonal of \(\G \times \G \), 
    \begin{equation*}
        \diagf{f}(g)=\sum_\rho \fr{d_\rho}{\mu(\G )}\: \tr\sqbbig{\h f(\rho)\,\rho(g^{-1})}
    \end{equation*}
    proves our proposition. 
\end{proof}

\subsection{Proof of Proposition \ref{prop: SHT}}

\begin{proof}
    First we need to show that the integral on the left hand side exists. Fix \(f\in \mathcal{L}_2(\SS^{d-1})\). Then, for any \(1\leq i,j\leq d_\rho\), we have
    \begin{align*}
         \int_{\SO(d)}|f(R\e)\rho_{ij}(R)|\,d\mu(R) &\leq \int_{\SO(d)}|f(R\e)|^2\,d\mu(R)\int_{\SO(d)}|\rho_{ij}(R)|^2\,d\mu(R)\\ &
         =\frac{1}{d_\rho}  \int_{\SO(d)}|f(R\e)|^2\,d\mu(R)= \int_{\SS^{d-1}}|f(s)|^2\,d\sigma(s) <\infty.
    \end{align*}
    The first inequality follows from Cauchy--Schwartz inequality. The next equality follows from the fact that \(\{\sqrt{d_\rho} \rho_{ij}\}\) forms an orthonormal basis of \(\mathcal{L}_2(\SO(d))\), which is a consequence of the Peter--Weyl Theorem (Theorem \ref{thm: peter thm}). Finally, the last equality follows from Lemma \ref{lemma: sphere integral}. Define
    \begin{align}
        \h f(\rho) 
        :=& \int_{\SS^{d-1}} f(s) \frac{\Y^{(\rho)}(s)}{\|\Y^{(\rho)}(s)\|_2}\,d\sigma(s) \nonumber \\
        =& \sqrt{\frac{\mathcal{A}(\SS^{d-1})}{d_\rho}}\int_{\SS^{d-1}} f(s) \Y^{(\rho)}(s)\,d\sigma(s)\nonumber \\
        &(\textit{using Lemma }\ref{lemma: spherical harmonics normalization})\nonumber\\
        =& \frac{1}{\sqrt{\mathcal{A}(\SS^{d-1})d_\rho}}\int_{\SS^{d-1}} f(s(\theta)) \Y^{(\rho)}(s(\theta))\omega(\theta)d\theta\label{eq: SHT proof}\\
        &(\textit{by equation }\eqref{eq: sphere integral})\nonumber
    \end{align}
    Since \(\Y^{(\rho)}\) form an orthonormal basis of \(\mathcal{L}_2(\SS^{d-1})\), we have
    \begin{align}
        f(s(\theta)) 
        =& \sum_{\rho}  \Y^{(\rho)}(s(\theta))^\dagger\left(\int_{\Theta^{d-1}} f(s(\theta')) \Y^{(\rho)}(s(\theta'))\omega(\theta')\,d\theta'\right)\nonumber \\
        &(\textit{by equations }\eqref{eq: ISHT background} \textit{ and }\eqref{eq: SHT background})\nonumber\\
        =&  \sum_{\rho}d_\rho  \sqrt{\frac{\mathcal{A}(\SS^{d-1})}{d_\rho}}\Y^{(\rho)}(s(\theta))^\dagger\left(\frac{1}{\sqrt{\mathcal{A}(\SS^{d-1})d_\rho}}\int_{\Theta^{d-1}} f(s(\theta')) \Y^{(\rho)}(s(\theta'))\omega(\theta')\,d\theta'\right)\nonumber \\
        =&  \sum_{\rho}d_\rho\frac{\Y^{(\rho)}(s(\theta))^\dagger}{\|\Y^{(\rho)}(s(\theta))\|_2}\left(\frac{1}{\sqrt{\mathcal{A}(\SS^{d-1})d_\rho}}\int_{\Theta^{d-1}} f(s(\theta')) \Y^{(\rho)}(s(\theta'))\omega(\theta')\,d\theta'\right)\nonumber \\
        =&  \sum_{\rho} d_\rho \frac{\Y^{(\rho)}(s(\theta))^\dagger}{\|\Y^{(\rho)}(s(\theta))\|_2}\h f(\rho)\label{eq: SHT series}\\
        &(\textit{by equation }\eqref{eq: SHT proof})\nonumber
    \end{align}
    Therefore,
    \begin{align*}
        \int_{\SO(d)}f(R\e)\rho_{ij}(R)\,d\mu(R)
        =& \int_{\SO(d)}\sum_{\rho'}d_{\rho'}\frac{\Y^{(\rho')}(R\e)^\dagger}{\|\Y^{(\rho')}(R\e)\|_2} \h f(\rho')\rho_{ij}(R)\,d\mu(R)\\
         & (\textit{by equation } \eqref{eq: SHT series}) \\
        =& \int_{\SO(d)}\sum_{\rho'} d_{\rho'}\frac{\Y^{(\rho')}(\e)^\dagger}{\|\Y^{(\rho')}(\e)\|_2} \rho'(R)^\dagger \h f(\rho')\rho_{ij}(R)\,d\mu(R) \hspace{1cm} \\
         & (\textit{by equation } \eqref{eq: spherical harmonics}) \\
        =& \int_{\SO(d)}\sum_{\rho'}d_{\rho'}\sum_{k,l=1}^{d_{\rho'}} \frac{\Y^{(\rho')}_l(\e)^*}{\|\Y^{(\rho')}(\e)\|_2}\h f_k(\rho')\rho'_{kl}(R)^*\rho_{ij}(R)\,d\mu(R)\\
        =& \sum_{\rho'} d_{\rho'}\sum_{k,l=1}^{d_{\rho'}}\frac{\Y^{(\rho')}_l(\e)^*}{\|\Y^{(\rho')}(\e)\|_2}\h f_k(\rho')\int_{\SO(d)}\rho'_{kl}(R)^*\rho_{ij}(R)\,d\mu(R)\\
        =& \sum_{\rho'}d_{\rho'}\sum_{k,l=1}^{d_{\rho'}} \frac{\Y^{(\rho')}_l(\e)^*}{\|\Y^{(\rho')}(\e)\|_2} \h f_k(\rho')  \frac{\delta_{\rho\rho'}\delta_{ik}\delta_{jl}}{d_\rho}\\
         & (since \textit{\(\{\sqrt{d_\rho} \rho_{ij}\}\) forms an orthonormal basis of \(\mathcal{L}_2(\SO(d))\)}) \\
        =&  \h f_i(\rho) \frac{\Y^{(\rho)}_j(\e)^*}{\|\Y^{(\rho)}(\e)\|_2}.
    \end{align*}
    The penultimate equality follows from the fact that \(\{\sqrt{d_\rho} \rho_{ij}\}\) forms a basis of \(\mathcal{L}_2(\SO(d))\). This completes the proof.
\end{proof}

\subsection{Proof of Theorem \ref{thm: first layer}}

\begin{proof} 
    Note that, for any \(\x\in \ZZ^d\), we have
    \begin{align}
         & \h \fpre(\x, \rho) \nonumber\\
        =& \int_0^h \h{\fpatch_\x}(r,\,\rho){\h{\wbar{w}}(r,\rho)} r^{d-1}\, dr \nonumber \\
         & (\textit{by equation } \eqref{eq: fpre first fourier})\nonumber
         \\
        =& \int_0^h \int_{\SO(d)}\fpatch_\x(r,R)\rho(R)\,d\mu(R){\h{\wbar{w}}(r,\rho)} r^{d-1}\, dr \nonumber \\
         & (\textit{by equation }\eqref{eq: fpatch first fourier}) \nonumber
         \\
        =& \int_0^h \int_{\SO(d)}\I[\fin](\x+rR\e)\rho(R)\,d\mu(R){\h{\wbar{w}}(r,\rho)} r^{d-1}\, dr \nonumber \\ 
         & (\textit{using definition of }\fpatch_\x)\nonumber
         \\
        =& \int_0^h \int_{\SO(d)}\sum_{\y\in \ZZ^d}\fin(\y)\I(rR\e + \x,\y)\rho(R)\,d\mu(R){\h{\wbar{w}}(r,\rho)} r^{d-1}\, dr \nonumber \\ 
         & (\textit{by Definition }\ref{defn: interpolation}(a))\nonumber
         \\
        =& \int_0^h \int_{\SO(d)}\sum_{\y\in \ZZ^d}\fin(\y)\I(rR\e,\y-\x)\rho(R)\,d\mu(R){\h{\wbar{w}}(r,\rho)} r^{d-1}\, dr \nonumber\\ 
         & (\textit{by Definition }\ref{defn: interpolation}(b))\nonumber
         \\
        =& \int_0^h \int_{\SO(d)}\sum_{\y\in \ZZ^d}\fin(\x+\y)\I(rR\e,\y)\rho(R)\,d\mu(R){\h{\wbar{w}}(r,\rho)} r^{d-1}\, dr\nonumber \\
         & (\textit{using the transformation }\y\mapsto\y+\x) \nonumber
         \\
        =& \sum_{\y\in \ZZ^d}\fin(\x+\y)\left[\int_0^h r^{d-1}\left(\int_{\SO(d)}\I(rR\e,\y)\rho(R)\,d\mu(R)\right){\h{\wbar{w}}(r,\rho)}\, dr\right]  \nonumber
        \\
        =& \sum_{\y\in \ZZ^d}\fin(\x+\y)\left[\int_0^h r^{d-1}\left(\int_{\mathbb{S}^{d-1}}\I(rs,\y)\Y^{(\rho)}(s)\,d\sigma(s)\right)\Y^{(\rho)}(\e)^\dagger {\h{\wbar{w}}(r,\rho)}\, dr\right] \nonumber\\
         & (\textit{using Proposition }\ref{prop: SHT} \textit{ and Definition }\ref{defn: interpolation}(c))\nonumber
         \\
         \approx & \sum_{\y\in \ZZ^d}\hspace{-4pt}\fin(\x+\y)\hspace{-4pt}\left[\frac{h}{n_r}\frac{2\pi^{d-1}}{n_a^{d-1}\mathcal{A}(\SS^{d-1})}\sum_{r=1}^{n_r} \left(\frac{rh}{n_r}\right)^{d-1}\hspace{-4pt}\left(\sum_{\theta\in\Theta_{n_a}^{d-1}}\I\left(\frac{rh}{n_r}s(\theta),\y\right)\hspace{-0.8pt}\Y^{(\rho)}(s(\theta))\,\omega(\theta)\right)\hspace{-3pt}\Y^{(\rho)}(\e)^\dagger {\h{\wbar{w}}\left(\frac{rh}{n_r},\rho\right)}\hspace{-3pt}\right] \nonumber\\
         & (\textit{by discretizing the radial component into \(n_r\) grid points and equation } \eqref{eq: sphere integral approx})\nonumber
         \\
         =& \sum_{\y\in \ZZ^d}\fin(\x+\y)\left[\sum_{r=1}^{n_r} \left(\frac{r^{d-1}}{n_r^dn_a^{d-1}}\sum_{\theta\in\Theta_{n_a}^{d-1}}\I\left(\frac{rh}{n_r}s(\theta),\y\right)\Y^{(\rho)}(s(\theta))\,\omega(\theta)\right)\frac{2\pi^{d-1}h^d}{\mathcal{A}(\SS^{d-1})}\Y^{(\rho)}(\e)^\dagger {\h{\wbar{w}}\left(\frac{rh}{n_r},\rho\right)}\right] \nonumber
         \\
         =& \sum_{\y\in \ZZ^d}\fin(\x+\y)\left[\sum_{r=1}^{n_r}M_{r}^{(\rho)}(\y) \left(\frac{2\pi^{d-1}h^d}{\mathcal{A}(\SS^{d-1})}\Y^{(\rho)}(\e)^\dagger {\h{\wbar{w}}\left(\frac{rh}{n_r},\rho\right)}\right)\right]. \label{eq: fpre first proof 1}\\
         & (\textit{using definition of \(M_r^{(\rho)}\) in equation }\eqref{eq: steerable filter basis first layer})\nonumber
    \end{align}
    Define \(w_r^{(\rho)} := \frac{2\pi^{d-1}h^d}{\mathcal{A}(\SS^{d-1})}\Y^{(\rho)}(\e)^\dagger {\h{\wbar{w}}\left(\frac{rh}{n_r},\rho\right)}\), for \(1\leq r\leq n_r\). Since \(\h{\wbar{w}}(\cdot,\rho)\in \CC^{d_\rho\times d_\rho}\) and \(\Y^{(\rho)}(\e)\in \CC^{d_\rho}\), it follows that \(w_r^{(\rho)}\in \CC^{1\times d_\rho}\). Now, assuming one input and output channel, \(w_r^{(\rho)}\) reduces to a complex scalar, and therefore, \eqref{eq: fpre first proof 1} becomes
    \begin{equation*}
        \h \fpre(\x, \rho)= \sum_{\y\in \ZZ^d}\left[\sum_{r=1}^{n_r}w_r^{(\rho)} M_{r}^{(\rho)}(\y)\right]\fin(\x+\y).
    \end{equation*}
\end{proof}

\subsection{Proof of Theorem \ref{thm: higher layer classical}}

\begin{proof}
    Note that, for any \(\x\in \ZZ^d\), we have
    \begin{align}
         &\h \fpre(\x, \rho) \nonumber
         \\
        =& \int_0^h \substack{\textrm{CG} \\ \rho_1, \rho_2\to \rho}\left(\h{\fpatch_\x}(r,\,\rho_1, \rho_2){\h{\wbar{w}}(r, \rho_1, \rho_2)}\right) r^{d-1}\, dr\nonumber\\
         &(\textit{by equation }\eqref{eq: fpre higher fourier}) \nonumber 
         \\
        =& \int_0^h \substack{\textrm{CG} \\ \rho_1, \rho_2\to \rho}\left(\left[\int_{\SO(d)}\int_{\SO(d)}\fpatch_\x(r,R,R')(\rho_1(R)\otimes \rho_2(R'))\,d\mu(R)d\mu(R')\right]{\h{\wbar{w}}(r, \rho_1, \rho_2)}\right) r^{d-1}\, dr \nonumber\\
        &(\textit{by equation }\eqref{eq: fpatch higher fourier}) \nonumber
        \\
        =& \int_0^h \substack{\textrm{CG} \\ \rho_1, \rho_2\to \rho}\left(\left[\int_{\SO(d)}\left(\int_{\SO(d)}\fpatch_\x(r,R,R')\rho_1(R)\,d\mu(R)\right)\otimes \rho_2(R')d\mu(R')\right]{\h{\wbar{w}}(r, \rho_1, \rho_2)}\right) r^{d-1}\, dr \nonumber
        \\
        =& \int_0^h \substack{\textrm{CG} \\ \rho_1, \rho_2\to \rho}\left(\left[\int_{\SO(d)}\I[\fin](\x+rR'\e, \rho_1) \otimes \rho_2(R')d\mu(R')\right]{\h{\wbar{w}}(r, \rho_1, \rho_2)}\right) r^{d-1}\, dr \nonumber\\
        &(\textit{using definition of }\fpatch_\x) \nonumber
        \\
        =& \int_0^h \substack{\textrm{CG} \\ \rho_1, \rho_2\to \rho}\left(\left[\int_{\SO(d)}\sum_{\y\in \ZZ^d}\fin(\y, \rho_1)\I(\x+rR'\e,\y)\otimes \rho_2(R')d\mu(R')\right]{\h{\wbar{w}}(r, \rho_1, \rho_2)}\right) r^{d-1}\, dr \nonumber\\
        &(\textit{by Definition }\ref{defn: interpolation}(a)) \nonumber
        \\
        =& \int_0^h \substack{\textrm{CG} \\ \rho_1, \rho_2\to \rho}\left(\left[\int_{\SO(d)}\sum_{\y\in \ZZ^d}\fin(\y, \rho_1)\I(rR'\e,\y-\x)\otimes \rho_2(R')d\mu(R')\right]{\h{\wbar{w}}(r, \rho_1, \rho_2)}\right) r^{d-1}\, dr \nonumber\\
        &(\textit{by Definition }\ref{defn: interpolation}(b)) \nonumber
         \\
        =& \int_0^h \substack{\textrm{CG} \\ \rho_1, \rho_2\to \rho}\left(\left[\int_{\SO(d)}\sum_{\y\in \ZZ^d}\fin(\x+\y, \rho_1)\I(rR'\e,\y)\otimes \rho_2(R')d\mu(R')\right]{\h{\wbar{w}}(r, \rho_1, \rho_2)}\right) r^{d-1}\, dr\nonumber \\
         & (\textit{using the transformation }\y\mapsto\y+\x) \nonumber 
        \\
        =& \int_0^h \substack{\textrm{CG} \\ \rho_1, \rho_2\to \rho}\left(\left[\sum_{\y\in\ZZ^d}\fin(\x+\y, \rho_1)\otimes r^{d-1}\int_{\SO(d)} \I(rR'\e,\y)\rho_2(R')d\mu(R') \right]{\h{\wbar{w}}(r, \rho_1, \rho_2)}\right) \, dr\nonumber
        \\
        =& \int_0^h \substack{\textrm{CG} \\ \rho_1, \rho_2\to \rho}\left(\left[\sum_{\y\in \ZZ^d}\fin(\x+\y, \rho_1)\otimes r^{d-1}\int_{\mathbb{S}^{d-1}}\I(rs,\y)\Y^{(\rho_2)}(s)\Y^{(\rho_2)}(\e)^\dagger \,d\sigma(s)\right]{\h{\wbar{w}}(r, \rho_1, \rho_2)}\right) \, dr \nonumber\\
         & (\textit{using Proposition }\ref{prop: SHT} \textit{ and Definition }\ref{defn: interpolation}(c))\nonumber
        \\
        \approx& \frac{h}{n_r}\sum_{r=1}^{n_r} \substack{\textrm{CG} \\ \rho_1, \rho_2\to \rho}\left(\left[\sum_{\y\in \ZZ^d}\fin(\x+\y, \rho_1)\otimes \left(\left(\frac{rh}{n_r}\right)^{d-1}\frac{2\pi^{d-1}}{n_a^{d-1}\mathcal{A}(\SS^{d-1})}\sum_{\theta\in\Theta_{n_a}^{d-1}} \I\left(\frac{rh}{n_r}s(\theta),\y\right)\Y^{(\rho_2)}(s(\theta)) \omega(\theta)\right)\right.\right.\nonumber\\
         &\hspace{10cm}\left.\left.\Y^{(\rho_2)}(\e)^\dagger\right]{\h{\wbar{w}}\left(\frac{rh}{n_r}, \rho_1, \rho_2\right)}\right)  \nonumber\\
         & (\textit{by discretizing the radial component into \(n_r\) grid points and equation } \eqref{eq: sphere integral approx})\nonumber
        \\
        =& \sum_{r=1}^{n_r} \substack{\textrm{CG} \\ \rho_1, \rho_2\to \rho}\left(\left[\sum_{\y\in \ZZ^d}\fin(\x+\y, \rho_1)\otimes \left(\frac{r^{d-1}}{n_r^dn_a^{d-1}}\sum_{\theta\in\Theta_{n_a}^{d-1}} \I\left(\frac{rh}{n_r}s(\theta),\y\right)\Y^{(\rho_2)}(s(\theta)) \omega(\theta)\right)\right.\right.\nonumber\\
         &\hspace{10cm}\left.\left.\frac{2\pi^{d-1}h^{d}}{\mathcal{A}(\SS^{d-1})}\Y^{(\rho_2)}(\e)^\dagger\right]{\h{\wbar{w}}\left(\frac{rh}{n_r}, \rho_1, \rho_2\right)}\right)  \nonumber\\
        =& \sum_{r=1}^{n_r} \substack{\textrm{CG} \\ \rho_1, \rho_2\to \rho}\left(\left[\sum_{\y\in\ZZ^d}\fin(\x+\y, \rho_1)\otimes M_r^{(\rho_2)}(\y) \frac{2\pi^{d-1}h^d|\mathcal{F}_{\SO(d)}|}{\mathcal{A}(\SS^{d-1})}\Y^{(\rho_2)}(\e)^\dagger \right]\h{\wbar{w}}\left(\frac{rh}{n_r}, \rho_1, \rho_2\right)\right) \nonumber\\
         & (\textit{using definition of \(M_r^{(\rho)}\) in equation } \eqref{eq: steerable filter basis first layer}) \nonumber
        \\
        =& \sum_{r=1}^{n_r} \sum_{\y\in\ZZ^d}\sum_{\rho_1, \rho_2}\frac{d_{\rho_1}d_{\rho_2}}{d_\rho}{C^{(\rho,\rho_1, \rho_2)}}^\dagger \Bigg(\fin(\x+\y, \rho_1)\otimes \nonumber \\ 
        &\hspace{3cm}\left.M^{(\rho_2)}_r(\y)\frac{2\pi^{d-1}h^d|\mathcal{F}_{\SO(d)}|}{\mathcal{A}(\SS^{d-1})}\Y^{(\rho_2)}(\e)^\dagger \right) \h{\wbar{w}}\left(\frac{rh}{n_r}, \rho_1, \rho_2\right) C^{(\rho,\rho_1, \rho_2)} \nonumber\\
         & (\textit{using equation } \eqref{eq: CG--decomposition} \textit{ and Remark } \ref{remark: CG--matrices}) \nonumber 
        \\
        =& \sum_{r=1}^{n_r} \sum_{\y\in\ZZ^d}\sum_{\rho_1, \rho_2}{C^{(\rho,\rho_1, \rho_2)}}^\dagger \left(\fin(\x+\y, \rho_1)\otimes M^{(\rho_2)}_r(\y) \right) \Bigg(I_{d_{\rho_1}}\otimes \nonumber \\ 
        &\hspace{3cm}\left. \frac{2\pi^{d-1}h^d|\mathcal{F}_{\SO(d)}|d_{\rho_1}d_{\rho_2}}{\mathcal{A}(\SS^{d-1})d_{\rho}}\Y^{(\rho_2)}(\e)^\dagger\right) \h{\wbar{w}}\left(\frac{rh}{n_r}, \rho_1, \rho_2\right) C^{(\rho,\rho_1, \rho_2)} \nonumber
        \\
        =& \sum_{\y\in \ZZ^d} \sum_{\rho_1, \rho_2 }\sum_{r=1}^{n_r}\left[\left(\fin(\x+\y, \rho_1)^\top \otimes M^{(\rho_2)}_r(\y)^\top  \right){C^{(\rho,\rho_1, \rho_2)}}^*\right]^\top \nonumber 
        \\ 
        & \hspace{3cm}\left[{C^{(\rho,\rho_1, \rho_2)}}^\top  \h{\wbar{w}}\left(\frac{rh}{n_r}, \rho_1, \rho_2\right)^\top  \left(I_{d_{\rho_1}}\otimes \frac{2\pi^{d-1}h^d|\mathcal{F}_{\SO(d)}|d_{\rho_1}d_{\rho_2}}{\mathcal{A}(\SS^{d-1})d_{\rho}}\Y^{(\rho_2)}(\e)^*\right) \right]^\top  \nonumber\\
        \approx& \sum_{\y\in \ZZ^d}\sum_{\rho_1, \rho_2\in \mathcal{F}_{\SO(d)}}\sum_{r=1}^{n_r}\left[\left(\fin(\x+\y, \rho_1)^\top \otimes M^{(\rho_2)}_r(\y)^\top  \right){C^{(\rho,\rho_1, \rho_2)}}^*\right]^\top \nonumber 
        \\ 
        & \hspace{3cm}\left[{C^{(\rho,\rho_1, \rho_2)}}^\top  \h{\wbar{w}}\left(\frac{rh}{n_r}, \rho_1, \rho_2\right)^\top  \left(I_{d_{\rho_1}}\otimes \frac{2\pi^{d-1}h^d|\mathcal{F}_{\SO(d)}|d_{\rho_1}d_{\rho_2}}{\mathcal{A}(\SS^{d-1})d_{\rho}}\Y^{(\rho_2)}(\e)^*\right) \right]^\top . \label{eq: fpre higher proof 1}\\
        &(\textit{truncating the sum over irreps to a finite set \(\mathcal{F}_{\SO(d)}\)})\nonumber
    \end{align}
    The penultimate  equality follows from the identity ${(\mathbf{A} \otimes \mathbf{B})(\mathbf{C} \otimes \mathbf{D}) = (\mathbf{AC}) \otimes (\mathbf{BD})}$, and the final equality follows from ${(\mathbf{A} \otimes \mathbf{B})^{T} = \mathbf{A}^{T} \otimes \mathbf{B}^{T}}$. Given that \({C^{(\rho,\rho_1, \rho_2)}_{\cdot,m}}^* = \vec{\tilde{C}^{(\rho,\rho_1,\rho_2)}_m}\) and using the identity, 
    \({\displaystyle \left(\mathbf {A} \otimes \mathbf {B} \right)\operatorname {vec} \left(\mathbf {V} \right)=\operatorname {vec} (\mathbf {B} \mathbf {V} \mathbf {A} ^{T})}\), for any  \(1\leq m\leq d_\rho\) and \(1\leq n \leq d_{\rho_1}\), we have
    \begin{align}
         &\left([\fin(\x+\y, \rho_1)]_{\cdot,n}^\top \otimes M^{(\rho_2)}_r(\y)^\top \right){C^{(\rho,\rho_1, \rho_2)}_{\cdot,m}}^* \nonumber\\
        =& \left([\fin(\x+\y, \rho_1)]_{\cdot,n}^\top \otimes M^{(\rho_2)}_r(\y)^\top \right)\vec{\tilde{C}^{(\rho,\rho_1,\rho_2)}_m}\nonumber\\
        =& \vec{ M^{(\rho_2)}_r(\y)^\top \tilde{C}^{(\rho,\rho_1,\rho_2)}_m [\fin(\x+\y, \rho_1)]_{\cdot,n}}\nonumber\\
        =& M^{(\rho_2)}_r(\y)^\top \tilde{C}^{(\rho,\rho_1,\rho_2)}_m [\fin(\x+\y, \rho_1)]_{\cdot,n}\nonumber\\
        =& \left[\tilde{M}_r^{(\rho,\rho_1,\rho_2)}\right]_{m,\cdot} [\fin(\x+\y, \rho_1)]_{\cdot,n}.\label{eq: fpre higher proof 2}\\
        &(\textit{using definition of \(\tilde{M}_r^{(\rho,\rho_1,\rho_2)}\) in equation }\eqref{eq: steerable filter basis higher layers})\nonumber
    \end{align}
    The last equality follows from the definition of \(\tilde{M}_r^{(\rho,\rho_1,\rho_2)}\). Define
    \begin{equation}\label{eq: fpre higher proof 3}
         {w_r^{(\rho,\rho_1,\rho_2)}} := {C^{(\rho,\rho_1, \rho_2)}}^\top  \h{\wbar{w}}\left(\frac{rh}{n_r}, \rho_1, \rho_2\right)^\top  \left(I_{d_{\rho_1}}\otimes \frac{2\pi^{d-1}h^d|\mathcal{F}_{\SO(d)}|d_{\rho_1}d_{\rho_2}}{\mathcal{A}(\SS^{d-1})d_{\rho}}\Y^{(\rho_2)}(\e)\right)\in \CC^{d_\rho\times d_{\rho_1}},
    \end{equation}
    where \(1\leq r\leq n_r\). Using equations \eqref{eq: fpre higher proof 1}, \eqref{eq: fpre higher proof 2} and \eqref{eq: fpre higher proof 3}, for \(1\leq m\leq d_\rho\), the \(m^{\text{th}}\) row of \(\h \fpre\) reduces to
    \begin{align}
         &\left[\h \fpre(\x, \rho)^\top  \right]_{\cdot, m}\nonumber
         \\
        =& \sum_{\y\in \ZZ^d} \sum_{\rho_1, \rho_2 \in \mathcal{F}_{\SO(d)}}\sum_{r=1}^{n_r}\sum_{n=1}^{d_{\rho_1}}\left[{C^{(\rho,\rho_1, \rho_2)}}^\top  \h{\wbar{w}}\left(\frac{rh}{n_r}, \rho_1, \rho_2\right)^\top  \left(I_{d_{\rho_1}}\otimes \frac{2\pi^{d-1}h^d|\mathcal{F}_{\SO(d)}|d_{\rho_1}d_{\rho_2}}{\mathcal{A}(\SS^{d-1})d_{\rho}}\Y^{(\rho_2)}(\e)\right) \right]_{\cdot, n}\nonumber\\
         &\hspace{7cm}\left[\left(\fin(\x+\y, \rho_1)^\top \otimes M^{(\rho_2)}_r(\y)^\top  \right){C^{(\rho,\rho_1, \rho_2)}}^*\right]_{n, m} \nonumber
        \\
        =& \sum_{\y\in \ZZ^d} \sum_{\rho_1, \rho_2 \in \mathcal{F}_{\SO(d)}}\sum_{r=1}^{n_r}\sum_{n=1}^{d_{\rho_1}}\left[{C^{(\rho,\rho_1, \rho_2)}}^\top  \h{\wbar{w}}\left(\frac{rh}{n_r}, \rho_1, \rho_2\right)^\top  \left(I_{d_{\rho_1}}\otimes \frac{2\pi^{d-1}h^d|\mathcal{F}_{\SO(d)}|d_{\rho_1}d_{\rho_2}}{\mathcal{A}(\SS^{d-1})d_{\rho}}\Y^{(\rho_2)}(\e)\right) \right]_{\cdot, n}\nonumber\\
         &\hspace{7cm}\left[\left([\fin(\x+\y, \rho_1)]^T_{\cdot, n}\otimes M^{(\rho_2)}_r(\y)^\top  \right){C^{(\rho,\rho_1, \rho_2)}}^*_{\cdot, m}\right] \nonumber
         \\
        =& \sum_{\y\in \ZZ^d} \sum_{\rho_1, \rho_2 \in \mathcal{F}_{\SO(d)}}\sum_{r=1}^{n_r}\sum_{n=1}^{d_{\rho_1}} \left[w_r^{(\rho,\rho_1,\rho_2)}\right]_{\cdot,n} \left[\tilde{M}_r^{(\rho,\rho_1,\rho_2)}\right]_{m,\cdot} [\fin(\x+\y, \rho_1)]_{\cdot,n}  .\label{eq: fpre higher proof 4}
    \end{align}
    Now if we assume one input and one output channel, the sum over \(n\) collapses and we can assume \(w_r^{(\rho,\rho_1,\rho_2)}\in \CC\) and \(\fin(\x+\y, \rho_1)\in \CC^{d_{\rho_1}}\). Then, equation \eqref{eq: fpre higher proof 4} reduces to 
    \begin{equation*}
        \h \fpre(\x, \rho) = \sum_{\y\in \ZZ^d} \sum_{\rho_1\in \mathcal{F}_{\SO(d)}}\left(\sum_{\rho_2 \in \mathcal{F}_{\SO(d)}}\sum_{r=1}^{n_r} w_r^{(\rho,\rho_1,\rho_2)} {\tilde{M}_r^{(\rho,\rho_1,\rho_2)}}(\y)\right) \fin(\x+\y, \rho_1).
    \end{equation*}
\end{proof}

\subsection{Proof of Proposition \ref{prop: steerable filter}}

\begin{proof}
    Fix \(\y\in \ZZ^d\). Note that, for any \(1\leq m\leq d_\rho\), we have
    \begin{align}
         &\left[R^{-1}\cdot K^{(\rho, \rho_1)}(\y)\right]_{m, \cdot} \nonumber \\
        =& \sum_{\z\in \ZZ^d} \left[K^{(\rho, \rho_1)}(\z)\right]_{m,\cdot}\I(R^{-1}\z, \y)\nonumber\\
        =& \sum_{\z\in \ZZ^d}\sum_{\rho_2 \in \mathcal{F}_{\SO(d)}} \sum_{r=1}^{n_r} w^{(\rho, \rho_1,\rho_2)}_r \left[{\tilde{M}}^{(\rho, \rho_1,\rho_2)}_{r} (\z)\right]_{m,\cdot} \I(R^{-1}\z, \y)  \nonumber\\
         & (\textit{by equation \eqref{eq: steerable filter}}) \nonumber \\
        =& \frac{1}{|\mathcal{F}_{\SO(d)}|}\sum_{\z\in \ZZ^d}\sum_{\rho_2 \in \mathcal{F}_{\SO(d)}} \sum_{r=1}^{n_r} w^{(\rho, \rho_1,\rho_2)}_r  M^{(\rho_2)}_r(\z)^\top  {\tilde{C}}^{(\rho, \rho_1,\rho_2)}_{m}\I(R^{-1}\z, \y)\nonumber\\
         & (\textit{using definition of } \tilde{M}^{(\rho, \rho_1,\rho_2)}_{r} \textit{ in equation } \eqref{eq: steerable filter basis first layer}) \nonumber \\
        =& \frac{1}{|\mathcal{F}_{\SO(d)}|}\sum_{\z\in \ZZ^d}\sum_{\rho_2 \in \mathcal{F}_{\SO(d)}} \sum_{r=1}^{n_r} w^{(\rho, \rho_1,\rho_2)}_r  \sum_{\theta \in \Theta_{n_a}^{d-1}} \frac{r^{d-1}}{n_r^dn_a^{d-1}} \Y^{(\rho_2)}(s(\theta))^\top  \I\left(\frac{rh}{n_r}s(\theta), \z\right) \omega(\theta)\I(R^{-1}\z, \y){\tilde{C}}^{(\rho, \rho_1,\rho_2)}_{m}\nonumber\\
        & (\textit{using definition of } M^{(\rho_2)}_r \textit{ in equation } \eqref{eq: steerable filter basis higher layers}) \nonumber \\
        =& \frac{1}{|\mathcal{F}_{\SO(d)}|}\sum_{\rho_2 \in \mathcal{F}_{\SO(d)}}\sum_{r=1}^{n_r} w^{(\rho, \rho_1,\rho_2)}_r \hspace{-3pt}\sum_{\theta \in \Theta_{n_a}^{d-1}}\hspace{-1.02pt} \frac{r^{d-1}}{n_r^dn_a^{d-1}} \Y^{(\rho_2)}(s(\theta))^\top  \hspace{-3pt} \left(\sum_{\z\in \ZZ^d}  \I\left(\frac{rh}{n_r}s(\theta), \z\right) \I(R^{-1}\z, \y)\right) \omega(\theta){\tilde{C}}^{(\rho, \rho_1,\rho_2)}_{m}.\label{eq: steerable kernel proof 1}
    \end{align}
    Using the fact that \(\Delta(R) = 0\), we conclude
    \begin{align}
        \sum_{\z\in \ZZ^d}  \I\left(\frac{rh}{n_r}s(\theta), \z\right) \I(R^{-1}\z, \y) 
        = \I\left(\frac{rh}{n_r}R^{-1}s(\theta), \y\right).\label{eq: steerable kernel proof 2}
    \end{align}
    Plugging \eqref{eq: steerable kernel proof 2} into \eqref{eq: steerable kernel proof 1}, we get
    \begin{align*}
         &\left[R^{-1}\cdot K^{(\rho, \rho_1)}(\y)\right]_{m,\cdot}\\
        =& \frac{1}{|\mathcal{F}_{\SO(d)}|}\sum_{\rho_2 \in \mathcal{F}_{\SO(d)}} \sum_{r=1}^{n_r}  w^{(\rho, \rho_1,\rho_2)}_r  \sum_{\theta \in \Theta_{n_a}^{d-1}} \frac{r^{d-1}}{n_r^dn_a^{d-1}}\Y^{(\rho_2)}(s(\theta))^\top  \ \I\left(\frac{rh}{n_r}R^{-1}s(\theta), \y\right) \omega(\theta){\tilde{C}}^{(\rho, \rho_1,\rho_2)}_{m} \\
        =& \frac{1}{|\mathcal{F}_{\SO(d)}|}\sum_{\rho_2 \in \mathcal{F}_{\SO(d)}} \sum_{r=1}^{n_r} w^{(\rho, \rho_1,\rho_2)}_r  \sum_{\theta \in \Theta_{n_a}^{d-1}} \frac{r^{d-1}}{n_r^dn_a^{d-1}} \Y^{(\rho_2)}(Rs(\theta))^\top  \I\left(\frac{rh}{n_r}s(\theta), \y\right) \omega(\theta){\tilde{C}}^{(\rho, \rho_1,\rho_2)}_{m} \\
         & (\textit{since }s(\theta)\mapsto Rs(\theta) \textit{ is a bijection on }\Theta_{n_a}^{d-1} \textit{ and \(\omega\) is unchanged by this action})  \\
        =& \frac{1}{|\mathcal{F}_{\SO(d)}|}\sum_{\rho_2 \in \mathcal{F}_{\SO(d)}} \sum_{r=1}^{n_r}   w^{(\rho, \rho_1,\rho_2)}_r   \sum_{\theta \in \Theta_{n_a}^{d-1}} \frac{r^{d-1}}{n_r^dn_a^{d-1}} \Y^{(\rho_2)}(s(\theta))^\top  \rho_2(R)^\top   \I\left(\frac{rh}{n_r}s(\theta), \y\right) \omega(\theta){\tilde{C}}^{(\rho, \rho_1,\rho_2)}_{m}   \\
         & (\textit{by equation }\eqref{eq: spherical harmonics}) \\
        =& \frac{1}{|\mathcal{F}_{\SO(d)}|}\sum_{\rho_2 \in \mathcal{F}_{\SO(d)}} \sum_{r=1}^{n_r}  w^{(\rho, \rho_1,\rho_2)}_r  M^{(\rho_2)}_r(\y)^\top \rho_2(R)^\top {\tilde{C}}^{(\rho, \rho_1,\rho_2)}_{m}  \\
         & (\textit{using definition of } M^{(\rho_2)}_r \textit{ in equation } \eqref{eq: steerable filter basis first layer}) \\
        =& \left[\rho(R)K^{(\rho, \rho_1)}(\y)\rho_1(R)^\dagger\right]_{m,\cdot}  \\
         & (\textit{using Lemma }\ref{lemma: steerable filter})
    \end{align*}
    This completes the proof.
\end{proof}

\subsection{Proof of Theorem \ref{thm: interpolation error}}

\begin{proof}
    Note that, for any \(\x\in \ZZ^d\), we have
    \begin{align}
         &(\t,R) \cdot[K^{(\rho, \rho_1)}\star (\t,R)^{-1}\cdot f](\x)\nonumber\\
        =& \rho(R) \I[K^{(\rho, \rho_1)}\star f]((\t,R)^{-1}\cdot \x ) \nonumber\\
        =& \rho(R) \sum_{\z\in \ZZ^d} \I((\t,R)^{-1}\cdot \x , \z)[K^{(\rho, \rho_1)}\star (\t,R)^{-1}\cdot f](\z) \nonumber\\
         & (\textit{by Definition }\ref{defn: interpolation}(a))\nonumber \\
        =& \rho(R) \sum_{\y, \z\in \ZZ^d}\I((\t,R)^{-1}\cdot \x , \z) K^{(\rho, \rho_1)}(\y) [(\t,R)^{-1}\cdot f](\y+\z)\nonumber\\
         & (\textit{using definition of cross-correlation on }\ZZ^d) \nonumber \\
        =& \rho(R) \sum_{\y, \z\in \ZZ^d}\I((\t,R)^{-1}\cdot \x , \z) K^{(\rho, \rho_1)}(\y) \rho_1(R)^\dagger \I[f]((\t,R)\cdot (\y+\z))\nonumber\\
        =& \rho(R) \sum_{\k, \y, \z\in \ZZ^d}\I((\t,R)^{-1}\cdot \x , \z) K^{(\rho, \rho_1)}(\y) \rho_1(R)^\dagger \I((\t,R)\cdot (\y+\z), \k)f(\k)\nonumber\\
         & (\textit{by Definition }\ref{defn: interpolation}(a))\nonumber \\
        =& \rho(R) \sum_{\k, \y, \z\in \ZZ^d}\I((\t,R)^{-1}\cdot \x , \z-\y) K^{(\rho, \rho_1)}(\y) \rho_1(R)^\dagger \I((\t,R)\cdot \z, \k)f(\k)\nonumber\\
         & (\textit{using the transformation } \z\mapsto \z-\y) \nonumber \\
        =& \rho(R) \sum_{\k, \y, \z\in \ZZ^d}\I((\t,R)^{-1}\cdot \x+\y , \z) K^{(\rho, \rho_1)}(\y) \rho_1(R)^\dagger \I((\t,R)\cdot \z, \k)f(\k)\nonumber\\
        & (\textit{by Definition }\ref{defn: interpolation}(b))\nonumber \\
        =& \sum_{\y\in \ZZ^d}  \rho(R)K^{(\rho, \rho_1)}(\y) \rho_1(R)^\dagger \left(\sum_{\k, \z\in \ZZ^d}\I((\t,R)^{-1}\cdot \x+\y , \z)\I((\t,R)\cdot \z, \k)f(\k)\right)\nonumber\\
        =& \sum_{\y\in \ZZ^d}  \rho(R)K^{(\rho, \rho_1)}(\y) \rho_1(R)^\dagger \tilde{f}(\x,\y), \label{eq: difference 1}
    \end{align}
    where \(\tilde{f}(\x,\y)\) is defined to be the expression inside the brackets. Using Lemma \ref{lemma: steerable filter}, we have
        \begin{equation}
            [\rho(R)K^{(\rho, \rho_1)}(\y) \rho_1(R)^\dagger]_{m, \cdot} =  \frac{1}{|\mathcal{F}_{\SO(d)}|}\sum_{\rho_2 \in \mathcal{F}_{\SO(d)}} \sum_{r=1}^{n_r}w_r^{(\rho,\rho_1,\rho_2)}M^{(\rho_2)}_r(\y)^\top \rho_2(R)^\top \tilde{C}^{(\rho,\rho_1,\rho_2)}_m.  \label{eq: steered kenel}
        \end{equation}
    Suppose the weights \(w_r^{(\rho, \rho_1, \rho_2)}\) are uniformly bounded above by \(C_w>0\). Now, using \eqref{eq: difference 1} and \eqref{eq: steered kenel}, we have
    \begin{align}
         &\left\vert \left\vert (\t,R)\cdot[K\star (\t,R)^{-1}\cdot f](\x) - [K\star f](\x)\right\vert\right\vert_\infty\nonumber
         \\
        =& \sup_{1\leq m\leq d_\rho}\frac{1}{|\mathcal{F}_{\SO(d)}|}\left\vert  \sum_{\y\in \ZZ^d}  \sum_{\rho_2 \in             \mathcal{F}_{\SO(d)}}\sum_{r=1}^{n_r}w_r^{(\rho,\rho_1,\rho_2)}\left(M_{r}^{(\rho_2)}(\y)^\top \rho_2(R)^\top \tilde{C}^{(\rho,\rho_1,\rho_2)}_m \tilde{f}(\x,\y) \right.\right. \nonumber\\
        & \hspace{5cm}\left.- M^{(\rho_2)}_r(\y)^\top \tilde{C}^{(\rho,\rho_1,\rho_2)}_m f(\x+\y)\right) \Bigg\vert\nonumber
        \\
        \leq & \sup_{1\leq m\leq d_\rho}\frac{1}{|\mathcal{F}_{\SO(d)}|}\sum_{\rho_2 \in \mathcal{F}_{\SO(d)}}\sum_{r=1}^{n_r}|w_r^{(\rho,\rho_1,\rho_2)}| \left\vert \sum_{\y\in \ZZ^d} M_{r}^{(\rho_2)}(\y)^\top \rho_2(R)^\top \tilde{C}^{(\rho,\rho_1,\rho_2)}_m \tilde{f}(\x,\y)  \right. \nonumber\\
        & \hspace{5cm} - \sum_{\y\in \ZZ^d} M^{(\rho_2)}_r(\y)^\top \tilde{C}^{(\rho,\rho_1,\rho_2)}_m f(\x+\y)\Bigg\vert \nonumber\\
        & (\textit{using triangle inequality}) \nonumber 
        \\
        \leq & \sup_{1\leq m\leq d_\rho}\frac{C_w}{|\mathcal{F}_{\SO(d)}|}\sum_{\rho_2 \in \mathcal{F}_{\SO(d)}}\sum_{r=1}^{n_r}\left\vert \sum_{\y\in \ZZ^d} M_{r}^{(\rho_2)}(\y)^\top \rho_2(R)^\top  \tilde{C}^{(\rho,\rho_1,\rho_2)}_m\tilde{f}(\x,\y)\right. \nonumber\\
        & \hspace{5cm} - \sum_{\y\in \ZZ^d} M^{(\rho_2)}_r(\y)^\top \tilde{C}^{(\rho,\rho_1,\rho_2)}_m f(\x+\y)\Bigg\vert .\label{eq: bound}\\
        &(\textit{since \(|w_r^{(\rho,\rho_1,\rho_2)}|\leq C_w\)})\nonumber
    \end{align}
    The first term of the difference in \eqref{eq: bound} can be further written as
    \begin{align}
          &  M_{r}^{(\rho_2)}(\y)^\top \rho_2(R)^\top \tilde{C}^{(\rho,\rho_1,\rho_2)}_m\tilde{f}(\x,\y)\nonumber\\
         =& \frac{r^{d-1}}{n_r^dn_a^{d-1}}\sum_{\y\in \ZZ^d}\sum_{\theta\in\Theta_{n_a}^{d-1}} \Y^{(\rho_2)}(s(\theta))^\top  \rho_2(R)^\top  \tilde{C}^{(\rho,\rho_1,\rho_2)}_m\I\left(\frac{rh}{n_r}s(\theta), y\right)\tilde{f}(\x,\y)\,\omega(\theta)\nonumber\\
          & (\textit{using definition of } M^{(\rho_2)}_r \textit{ in equation } \eqref{eq: steerable filter basis first layer})\nonumber
          \\
         =& \frac{r^{d-1}}{n_r^dn_a^{d-1}}\sum_{\y\in \ZZ^d}\sum_{\theta\in\Theta_{n_a}^{d-1}} \left(\rho_2(R) \Y^{(\rho_2)}(s(\theta))\right)^\top  \tilde{C}^{(\rho,\rho_1,\rho_2)}_m \I\left(\frac{rh}{n_r}s(\theta), y\right)\tilde{f}(\x,\y)\,\omega(\theta)\nonumber
         \\
         =& \frac{r^{d-1}}{n_r^dn_a^{d-1}}\sum_{\y\in \ZZ^d}\sum_{\theta\in\Theta_{n_a}^{d-1}} \Y^{(\rho_2)}(Rs(\theta))^\top  \tilde{C}^{(\rho,\rho_1,\rho_2)}_m\I\left(\frac{rh}{n_r}s(\theta), y\right)\tilde{f}(\x,\y) \,\omega(\theta).\label{eq: difference 2}\\
          & (\textit{by equation }\eqref{eq: spherical harmonics}).\nonumber 
    \end{align}
    Define intermediate terms
    \begin{align*}
       & T_1 :=  \frac{r^{d-1}}{n_r^dn_a^{d-1}}\sum_{\theta\in\Theta_{n_a}^{d-1}} \Y^{(\rho_2)}(Rs(\theta))^\top  \tilde{C}^{(\rho,\rho_1,\rho_2)}_m \left(\sum_{\y,\k\in \ZZ^d} \I\left(\frac{rh}{n_r}s(\theta), \y\right) \I(\x+R\y, \k)f(\k)\right) \,\omega(\theta),\\
       & T_2 :=  \frac{r^{d-1}}{n_r^dn_a^{d-1}}\sum_{\theta\in\Theta_{n_a}^{d-1}} \Y^{(\rho_2)}(Rs(\theta))^\top \tilde{C}^{(\rho,\rho_1,\rho_2)}_m  \left(\sum_{\k\in \ZZ^d} \I\left(\frac{rh}{n_r}Rs(\theta), \k\right) f(\x+\k)\right) \,\omega(\theta).
    \end{align*}
    Using the expression in \eqref{eq: difference 2}, the difference between first term of the difference in \eqref{eq: bound} and \(T_1\) can be bounded by
    \begin{align}
         & \left\vert M_{r}^{(\rho_2)}(\y)^\top \rho_2(R)^\top \tilde{C}^{(\rho,\rho_1,\rho_2)}_m\tilde{f}(\x,\y) -  T_1\right\vert\nonumber
         \\
        =& \left\vert \frac{r^{d-1}}{n_r^dn_a^{d-1}}\sum_{\theta\in\Theta_{n_a}^{d-1}} \Y^{(\rho_2)}(Rs(\theta))^\top  \tilde{C}^{(\rho,\rho_1,\rho_2)}_m\sum_{\y\in \ZZ^d} \I\left(\frac{rh}{n_r}s(\theta), \y\right)  \right.\nonumber\\
        & \hspace{3cm}\left(\sum_{\k, \z\in \ZZ^d}\I((\t,R)^{-1}\cdot \x+\y , \z)\I((\t,R)\cdot \z, \k)f(\k) - \mathcal{I}(\x+R\y, \k)\right)f(\k) \, \omega(\theta) \bigg\vert \nonumber 
        \\
        \leq& \frac{r^{d-1}}{n_r^dn_a^{d-1}}\sum_{\theta\in\Theta_{n_a}^{d-1}} \|\Y^{(\rho_2)}(Rs(\theta))\|_2 \|\tilde{C}^{(\rho,\rho_1,\rho_2)}_m\|_2\left\vert \left\vert \sum_{\y\in \ZZ^d} \I\left(\frac{rh}{n_r}s(\theta), \y\right)  \right.\right.\nonumber\\
        & \hspace{3cm} \left(\sum_{\k, \z\in \ZZ^d}\I((\t,R)^{-1}\cdot \x+\y , \z)\I((\t,R)\cdot \z, \k)f(\k) - \mathcal{I}(\x+R\y, \k)\right)f(\k) \bigg\vert \bigg\vert_2 |\omega(\theta)|  \nonumber 
        \\
        & (\textit{using triangle inequality}) \nonumber \\
        =& \frac{r^{d-1}}{n_r^dn_a^{d-1}}\sum_{\theta\in\Theta_{n_a}^{d-1}} \left\vert \left\vert \sum_{\y\in \ZZ^d} \I\left(\frac{rh}{n_r}s(\theta), \y\right) \left(\sum_{\k, \z\in \ZZ^d}\I((\t,R)^{-1}\cdot \x+\y , \z)\I((\t,R)\cdot \z, \k)f(\k)\right.\right.\right. \nonumber\\ 
        &\hspace{10cm} \left.\left.\left.- \mathcal{I}(\x+R\y, \k)\right)f(\k) \right\vert \right\vert_2 |\omega(\theta)|  \nonumber \\
        &(\textit{since \(\Y^{(\rho_2)}\) is normalized to have norm \(1\), and \(C^{(\rho,\rho_1, \rho_2)}\) is a unitary matrix})\nonumber
        \\
        \leq& \frac{r^{d-1}}{n_r^d}\sup_{\theta\in\Theta_{n_a}^{d-1}}  \left\vert \left\vert \sum_{\y\in \ZZ^d} \I\left(\frac{rh}{n_r}s(\theta), \y\right) \sum_{\k\in \ZZ^d}\left(\sum_{\z\in \ZZ^d}\I((\t,R)^{-1}\cdot \x+\y , \z)\I((\t,R)\cdot \z, \k)-  \mathcal{I}(\x+R\y, \k)\right)f(\k) \right\vert \right\vert_2   \nonumber \\
        &(\textit{since \(|\omega(\theta)|\leq 1\), and \(|\Theta_{n_a}^{d-1}| = n_a^{d-1}\)})\nonumber
        \\
        \leq& \frac{r^{d-1}}{n_r^d}\left(\sup_{\theta\in\Theta_{n_a}^{d-1}} \sum_{\y\in \ZZ^d}\left\vert \I\left(\frac{rh}{n_r}s(\theta), \y\right)\right\vert \right)\nonumber \\
        &\hspace{4cm} \left\vert \left\vert \sum_{\k\in \ZZ^d}\left(\sum_{\z\in \ZZ^d}\I((\t,R)^{-1}\cdot \x+\y , \z)\I((\t,R)\cdot \z, \k)-  \mathcal{I}(\x+R\y, \k)\right)f(\k) \right\vert \right\vert_2    \nonumber\\
        & (\textit{using triangle inequality}) \nonumber
        \\
        \leq& \frac{r^{d-1}}{n_r^d}\left(\sup_{\theta\in\Theta_{n_a}^{d-1}} \sum_{\y\in \ZZ^d}\left\vert \I\left(\frac{rh}{n_r}s(\theta), \y\right)\right\vert \right)\nonumber \\
        &\hspace{3cm}\left(\sup_{\k\in \ZZ^d}\left\vert  \sum_{\z\in \ZZ^d}\I((\t,R)^{-1}\cdot \x+\y , \z)\I((\t,R)\cdot \z, \k)-  \mathcal{I}(\x+R\y, \k)\right\vert \right)\sum_{\k\in \ZZ^d}\|f(\k)\|_2     \nonumber \\
        & (\textit{using triangle inequality}) \nonumber 
        \\
        \leq& \frac{r^{d-1}}{n_r^d}\left(\sup_{\theta\in\Theta_{n_a}^{d-1}} \sum_{\y\in \ZZ^d}\left\vert \I\left(\frac{rh}{n_r}s(\theta), \y\right)\right\vert\right)  \Delta(\t,R)\sum_{\k\in \ZZ^d}\|f(\k)\|_2   \nonumber \\
        & (\textit{by Definition }\ref{defn: interpolation error})  \nonumber
        \\
        \leq& \frac{r^{d-1}}{n_r^d}\,C_\I  \Delta(\t,R)\sum_{\k\in \ZZ^d}\|f(\k)\|_2.   \label{eq: bound 1} \\
        &(\textit{for some constant \(C_\I>0\), by Definition }\ref{defn: interpolation}(d))\nonumber
    \end{align}
    Note that, we can simplify \(T_1\) as
    \begin{align}
        T_1
        =&  \frac{r^{d-1}}{n_r^dn_a^{d-1}}\sum_{\theta\in\Theta_{n_a}^{d-1}} \Y^{(\rho_2)}(Rs(\theta))^\top  \tilde{C}^{(\rho,\rho_1,\rho_2)}_m \sum_{\k, \y\in \ZZ^d} \I\left(\frac{rh}{n_r}s(\theta), \y\right)\I(\x+R\y, \k) f(\k) \,\omega(\theta) \nonumber
        \\
        =&  \frac{r^{d-1}}{n_r^dn_a^{d-1}}\sum_{\theta\in\Theta_{n_a}^{d-1}} \Y^{(\rho_2)}(Rs(\theta))^\top  \tilde{C}^{(\rho,\rho_1,\rho_2)}_m\sum_{\k,\y\in \ZZ^d} \I\left(\frac{rh}{n_r}s(\theta), \y\right)\I(R\y, \k-\x) f(\k) \,\omega(\theta) \nonumber\\
        & (\textit{by Definition } \ref{defn: interpolation}(b)) \nonumber 
        \\
        =&  \frac{r^{d-1}}{n_r^dn_a^{d-1}}\sum_{\theta\in\Theta_{n_a}^{d-1}} \Y^{(\rho_2)}(Rs(\theta))^\top \tilde{C}^{(\rho,\rho_1,\rho_2)}_m \sum_{\k,\y\in \ZZ^d} \I\left(\frac{rh}{n_r}s(\theta), \y\right)\I(R\y, \k)f(\x+\k) \,\omega(\theta). \label{eq: difference 3}\\
        & (\textit{using the transformation } \k\mapsto \k+\x) \nonumber
    \end{align}
    Using the expression in \eqref{eq: difference 3}, the difference between \(T_1\) and \(T_2\) can be bounded by
    \begin{align}
        &|T_1-T_2| \nonumber
        \\
        =& \left\vert \frac{r^{d-1}}{n_r^dn_a^{d-1}}\sum_{\theta\in\Theta_{n_a}^{d-1}} \Y^{(\rho_2)}(Rs(\theta))^\top  \tilde{C}^{(\rho,\rho_1,\rho_2)}_m\sum_{\k\in \ZZ^d}\left(\sum_{\y\in \ZZ^d} \I\left(\frac{rh}{n_r}s(\theta), \y\right)\I(R\y, \k) - \right.\right. \nonumber\\
        &\hspace{10cm} \left.\left.  \I\left(\frac{rh}{n_r}Rs(\theta), \k\right)\right)f(\x+\k)\omega(\theta)\right\vert\nonumber
        \\
        \leq& \frac{r^{d-1}}{n_r^dn_a^{d-1}}\sum_{\theta\in\Theta_{n_a}^{d-1}}\| \Y^{(\rho_2)}(Rs(\theta))\|_2 \|\tilde{C}^{(\rho,\rho_1,\rho_2)}_m\|_2\left\vert \left\vert \sum_{\k\in \ZZ^d}\left(\sum_{\y\in \ZZ^d} \I\left(\frac{rh}{n_r}s(\theta), \y\right)\I(R\y, \k)\right.\right.\right. \nonumber\\
        &\hspace{10cm} \left.\left.\left.-  \I\left(\frac{rh}{n_r}Rs(\theta),  \k\right)\right)f(\x+\k)\right\vert\right\vert_2\nonumber |\omega(\theta)|\\
        & (\textit{using triangle inequality}) \nonumber
        \\
        =& \frac{r^{d-1}}{n_r^dn_a^{d-1}}\sum_{\theta\in\Theta_{n_a}^{d-1}}\left\vert \left\vert \sum_{\k\in \ZZ^d}\left(\sum_{\y\in \ZZ^d} \I\left(\frac{rh}{n_r}s(\theta), \y\right)\I(R\y, \k) -  \I\left(\frac{rh}{n_r}Rs(\theta),  \k\right)\right)f(\x+\k)\right\vert\right\vert_2\nonumber |\omega(\theta)|\\
        &(\textit{since \(\Y^{(\rho_2)}\) is normalized to have norm \(1\), and \(C^{(\rho,\rho_1, \rho_2)}\) is a unitary matrix})\nonumber
        \\
        \leq& \frac{r^{d-1}}{n_r^d}\sup_{\theta\in\Theta_{n_a}^{d-1}} \left\vert \left\vert \sum_{\k\in \ZZ^d}\left(\sum_{\y\in \ZZ^d} \I\left(\frac{rh}{n_r}s(\theta), \y\right)\I(R\y, \k) -  \I\left(\frac{rh}{n_r}Rs(\theta), \k\right)\right)f(\x+\k)\right\vert\right\vert_2\nonumber\\
        &(\textit{since \(|\omega(\theta)|\leq 1\), and \(|\Theta_{n_a}^{d-1}| = n_a^{d-1}\)})\nonumber
        \\
        \leq& \frac{r^{d-1}}{n_r^d}\sup_{\theta\in\Theta_{n_a}^{d-1}} \sup_{\k\in \ZZ^d}\left\vert \sum_{\y\in \ZZ^d} \I\left(\frac{rh}{n_r}s(\theta), \y\right)\I(R\y, \k) -  \I\left(\frac{rh}{n_r}Rs(\theta), \k\right)\right\vert \sum_{\k\in \ZZ^d}\|f(\x+\k)\|_2\nonumber\\
        & (\textit{using triangle inequality}) \nonumber\\
         \leq& \frac{r^{d-1}}{n_r^d}\sup_{\theta\in\Theta_{n_a}^{d-1}} \sup_{\k\in \ZZ^d}\left\vert \sum_{\y\in \ZZ^d} \I\left(\frac{rh}{n_r}s(\theta), \y\right)\I(R\y, \k) -  \I\left(\frac{rh}{n_r}Rs(\theta), \k\right)\right\vert\sum_{\k\in \ZZ^d}\|f(\k)\|_2\nonumber\\
        &(\textit{using the transformation }\k\mapsto \k-\x)\nonumber\\
        \leq& \frac{r^{d-1}}{n_r^d}\Delta(R)\sum_{\k\in \ZZ^d}\|f(\k)\|_2.\label{eq: bound 2}\\
        & (\textit{by Definition } \ref{defn: interpolation error}) \nonumber
    \end{align}
    Finally, the difference between \(T_2\) and the second term of the difference in \eqref{eq: bound} can be bounded by
    \begin{align}
          & \left\vert T_2 -  \sum_{\y\in \ZZ^d} M^{(\rho_2)}_r(\y)^Tf(\x+\y)\right\vert\nonumber
          \\
        = & \left\vert \frac{r^{d-1}}{n_r^dn_a^{d-1}}\sum_{\theta\in\Theta_{n_a}^{d-1}} \Y^{(\rho_2)}(Rs(\theta))^\top  \tilde{C}^{(\rho,\rho_1,\rho_2)}_m\left(\sum_{\y\in \ZZ^d} \I\left(\frac{rh}{n_r}Rs(\theta), \y\right) f(\x+\y)\right) \,\omega(\theta) \right. \nonumber\\
        &\hspace{10cm} \left. - \sum_{\y\in \ZZ^d} M^{(\rho_2)}_r(\y)^\top \tilde{C}^{(\rho,\rho_1,\rho_2)}_mf(\x+\y)\right\vert\nonumber
        \\
        = & \left\vert \frac{r^{d-1}}{n_r^dn_a^{d-1}} \sum_{\y\in \ZZ^d}\left( \sum_{\theta\in\Theta_{n_a}^{d-1}}\Y^{(\rho_2)}(Rs(\theta))^\top \I\left(\frac{rh}{n_r}Rs(\theta), \y\right)\omega(\theta) \right. \right. \nonumber\\
        &\hspace{7cm} \left.\left. - \Y^{(\rho_2)}(s(\theta))^\top \I\left(\frac{rh}{n_r}s(\theta),\y\right)\omega(\theta) \right)\tilde{C}^{(\rho,\rho_1,\rho_2)}_mf(\x+\y) \,\right\vert\nonumber
        \\
        \leq& \frac{r^{d-1}}{n_r^dn_a^{d-1}} \sup_{\y\in \ZZ^d}\left\vert\left\vert \sum_{\theta\in\Theta_{n_a}^{d-1}}\Y^{(\rho_2)}(Rs(\theta))^\top \I\left(\frac{rh}{n_r}Rs(\theta), \y\right)\omega(\theta) \right. \right. \nonumber\\
        &\hspace{5.5cm} \left.\left. - \Y^{(\rho_2)}(s(\theta))^\top \I\left(\frac{rh}{n_r}s(\theta),\y\right)\omega(\theta) \right\vert\right\vert_2 \|\tilde{C}^{(\rho,\rho_1,\rho_2)}_m\|_2\sum_{\y\in \ZZ^d}\|f(\x+\y)\|_2\nonumber\\
        & (\textit{using triangle inequality}) \nonumber
        \\
        =& \frac{r^{d-1}}{n_r^dn_a^{d-1}} \sup_{\y\in \ZZ^d}\left\vert\left\vert \sum_{\theta\in\Theta_{n_a}^{d-1}}\Y^{(\rho_2)}(Rs(\theta))^\top \I\left(\frac{rh}{n_r}Rs(\theta), \y\right)\omega(\theta)\right. \right. \nonumber\\
        &\hspace{7cm} \left.\left. - \Y^{(\rho_2)}(s(\theta))^\top \I\left(\frac{rh}{n_r}s(\theta),\y\right)\omega(\theta) \right\vert\right\vert_2\sum_{\y\in \ZZ^d}\|f(\x+\y)\|_2\nonumber\\
        &(\textit{since \(C^{(\rho,\rho_1, \rho_2)}\) is a unitary matrix})\nonumber
        \\
        =& \frac{r^{d-1}}{n_r^dn_a^{d-1}} \sup_{\y\in \ZZ^d}\left\vert\left\vert \sum_{\theta\in\Theta_{n_a}^{d-1}}\Y^{(\rho_2)}(Rs(\theta))^\top \I\left(\frac{rh}{n_r}Rs(\theta), \y\right)\omega(\theta)\right. \right. \nonumber\\
        &\hspace{7cm} \left.\left. - \Y^{(\rho_2)}(s(\theta))^\top \I\left(\frac{rh}{n_r}s(\theta),\y\right)\omega(\theta) \right\vert\right\vert_2\sum_{\y\in \ZZ^d}\|f(\y)\|_2.\label{eq: difference 4}\\
        &(\textit{using the transformation }\y\mapsto \y-\x)\nonumber
    \end{align}
    Since \(\Y^{(\rho_2)}\) is differentiable, it is \(1\)-H\"older. For any fixed \(\y\in \ZZ^d\), \(\I(\cdot, \y)\) is assumed to be \(\alpha\)-H\"older. Therefore, by Lemma \ref{lemma: Holder}, \(\Y^{(\rho_2)}(\cdot)\I(\cdot, \y)\) for any \(\y \in \ZZ^d\) is \(\alpha\)-H\"older. Using Lemma \ref{lemma: sphere epsilon net}, for any \(\y\in\ZZ^d\), we have
    \begin{equation*}
        \frac{1}{n_a^{d-1}}\left\vert\left\vert \sum_{\theta\in\Theta_{n_a}^{d-1}}\Y^{(\rho_2)}(Rs(\theta))^\top \I\left(\frac{rh}{n_r}Rs(\theta), \y\right)\omega(\theta) - \sum_{\theta\in\Theta_{n_a}^{d-1}}\Y^{(\rho_2)}(s(\theta))^\top \I\left(\frac{rh}{n_r}s(\theta),\y\right)\omega(\theta) \right\vert\right\vert_2 \leq C'_{\I}\left(\frac{rh}{n_r}\right)^\alpha d^{\alpha/2}n_a^{-\alpha},
    \end{equation*}
    for some \(C'_{\I}>0\) depending only on the choice of interpolation kernel \(\I\). Therefore, from \eqref{eq: difference 4}, we have
    \begin{align}
        & \left\vert \left\vert T_2 -  \sum_{\y\in \ZZ^d} M^{(\rho_2)}_r(\y)^\top \tilde{C}^{(\rho,\rho_1,\rho_2)}_m f(\x+\y)\right\vert\right\vert_2\nonumber
        \\
        \leq& \frac{r^{d-1}}{n_r^dn_a^{d-1}} \sup_{\y\in \ZZ^d}\left\vert\left\vert \sum_{\theta\in\Theta_{n_a}^{d-1}}\Y^{(\rho_2)}(Rs(\theta))^\top \I\left(\frac{rh}{n_r}Rs(\theta), \y\right)\omega(\theta) \right.\right.\nonumber\\
        &\hspace{5cm} \left.\left.- \Y^{(\rho_2)}(s(\theta))^\top \I\left(\frac{rh}{n_r}s(\theta),\y\right)\omega(\theta) \right\vert\right\vert_2 \sum_{\y\in \ZZ^d}\|f(\y)\|_2\nonumber
        \\
        \leq & \frac{r^{d-1}}{n_r^d} C'_{\I}\left(\frac{rh}{n_r}\right)^\alpha d^{\alpha/2}n_a^{-\alpha}\sum_{\y\in \ZZ^d}\|f(\y)\|_2\nonumber
        \\
        \leq & \frac{r^{d-1}}{n_r^d} C'_{\I}h^{\alpha}n_a^{-\alpha}\sum_{\y\in \ZZ^d}\|f(\y)\|_2.\label{eq: bound 3}
    \end{align}
    Plugging \eqref{eq: bound 1}, \eqref{eq: bound 2} and \eqref{eq: bound 3} into \eqref{eq: bound}, we get
    \begin{align*}
        &\left\vert \left\vert  (\t,R)^{-1}\cdot[K^{(\rho,\rho_1)}\star(\t,R)\cdot f](\x) -  [K^{(\rho,\rho_1)}\star f](\x)\right\vert\right\vert_\infty
        \\
        \leq&  \sup_{1\leq m\leq d_\rho}\frac{C_w}{|\mathcal{F}_{\SO(d)}|}\sum_{\rho_2 \in \mathcal{F}_{\SO(d)}}\sum_{r=1}^{n_r}\left\vert \sum_{\y\in \ZZ^d} M_{r}^{(\rho_2)}(\y)^\top \rho_2(R)^\top  \tilde{C}^{(\rho,\rho_1,\rho_2)}_m\tilde{f}(\x,\y) - \sum_{\y\in \ZZ^d} M^{(\rho_2)}_r(\y)^\top \tilde{C}^{(\rho,\rho_1,\rho_2)}_m f(\x+\y)\right\vert \nonumber
        \\
         \leq&  \sup_{1\leq m\leq d_\rho}\frac{C_w}{|\mathcal{F}_{\SO(d)}|}\sum_{\rho_2 \in \mathcal{F}_{\SO(d)}}\sum_{r=1}^{n_r}\left\vert \sum_{\y\in \ZZ^d} M_{r}^{(\rho_2)}(\y)^\top \rho_2(R)^\top  \tilde{C}^{(\rho,\rho_1,\rho_2)}_m\tilde{f}(\x,\y)- T_1\right\vert + |T_1-T_2|\\
         &\hspace{8cm} + \left\vert T_2 -\sum_{\y\in \ZZ^d} M^{(\rho_2)}_r(\y)^\top \tilde{C}^{(\rho,\rho_1,\rho_2)}_m f(\x+\y)\right\vert \nonumber
         \\
        \leq& \frac{C_w}{|\mathcal{F}_{\SO(d)}|}\sum_{\rho_2 \in \mathcal{F}_{\SO(d)}}\sum_{r=1}^{n_r} \frac{r^{d-1}}{n_r^d}\bigg(C_\I\Delta(\t,R) + \Delta(R) + C'_\I  h^{\alpha} d^{\alpha/2} n_a^{-\alpha} \bigg)\sum_{\y\in \ZZ^d}\|f(\y)\|_2\nonumber
        \\
        \leq& C_w \left( \frac{1}{n_r}\sum_{r=1}^{n_r} \left(\frac{r}{n_r}\right)^{d-1}  \right)\max\left(C_\I,1,C'_{\I} h^{\alpha} d^{\alpha/2}\right)\bigg(\Delta(\t,R) + \Delta(R) + n_a^{-\alpha} \bigg)\sum_{\y\in \ZZ^d}\|f(\y)\|_2
        \\
        \leq& C_w \left( \frac{1}{n_r}\sum_{r=1}^{n_r} 1  \right)\max\left(C_\I,1,C'_{\I} h^{\alpha} d^{\alpha/2}\right)\bigg(\Delta(\t,R) + \Delta(R) + n_a^{-\alpha} \bigg)\sum_{\y\in \ZZ^d}\|f(\y)\|_2
        \\
        =& C_w  \max\left(C_\I,1,C'_{\I}h^{\alpha} d^{\alpha/2}\right)\bigg(\Delta(\t,R) + \Delta(R) + n_a^{-\alpha} \bigg)\sum_{\y\in \ZZ^d}\|f(\y)\|_2.
    \end{align*}
    By setting
    \begin{equation*}
        C :=  C_w \max\left(C_\I,1,C'_{\I}h^{\alpha} d^{\alpha/2}\right),
    \end{equation*}
    for any \(\x \in \ZZ^d\), we have
    \begin{equation*}
        \left\vert \left\vert  (\t,R)\cdot[K^{(\rho,\rho_1)}\star(\t,R)^{-1}\cdot f](\x) -  [K^{(\rho,\rho_1)}\star f](\x)\right\vert\right\vert_\infty \leq C \bigg(\Delta(\t,R)+\Delta(R) + n_a^{-\alpha}\bigg)\, \sum_{\y\in \ZZ^d}\|f(\y)\|_2.
    \end{equation*}
    Taking the supremum over all \(\x \in \ZZ^d\) completes the proof.
\end{proof}
\section{Technical Lemmas}\label{sec: technical}

\begin{lem}\label{lemma: sphere integral}
    Let \(f:\SS^{d-1}\to \CC\) be an integrable function with respect to the surface measure \(\sigma\). Then,
    \begin{equation*}
        \int_{\SS^{d-1}} f(s) \, d\sigma(s) = \int_{\SO(d)} f(R\e)d\mu_{\SO(d)}(R)
    \end{equation*}
    where \(\e\) is a fixed vector and \(\mu_{\SO(d)}\) is the Haar measure on \(\SO(d)\).
\end{lem}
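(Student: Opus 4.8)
The plan is to exhibit the normalized surface measure $\sigma$ as the pushforward of the normalized Haar measure $\mu$ on $\SO(d)$ under the orbit map, and then invoke uniqueness of the rotation-invariant probability measure on the sphere. Write $\e\in\SS^{d-1}$ for the fixed unit vector and let $T\colon\SO(d)\to\SS^{d-1}$ be the continuous map $T(R)=R\e$; since $\SO(d)$ is compact, the pushforward $\nu:=T_*\mu$ is a Radon probability measure on $\SS^{d-1}$, and the change-of-variables formula for pushforwards gives
\begin{equation*}
    \int_{\SS^{d-1}} f\,d\nu=\int_{\SO(d)} f(R\e)\,d\mu(R)
\end{equation*}
for every $\sigma$-integrable $f$. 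Hence it suffices to prove $\nu=\sigma$.

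First I would verify that $\nu$ is $\SO(d)$-invariant. For $Q\in\SO(d)$ and a Borel set $A\subseteq\SS^{d-1}$,
\begin{equation*}
    \nu(QA)=\mu\bigl(\{R:R\e\in QA\}\bigr)=\mu\bigl(\{R:Q^{-1}R\e\in A\}\bigr)=\mu\bigl(Q\{R':R'\e\in A\}\bigr)=\mu\bigl(\{R':R'\e\in A\}\bigr)=\nu(A),
\end{equation*}
the penultimate equality using left-invariance of $\mu$. The surface measure $\sigma$ is $\SO(d)$-invariant by construction (equivalently, it is inherited from the rotation invariance of Lebesgue measure on $\RR^d$), and both $\nu$ and $\sigma$ are probability measures.

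Next I would show that an $\SO(d)$-invariant Borel probability measure on $\SS^{d-1}$ is unique, which then forces $\nu=\sigma$. Let $\nu_1,\nu_2$ be two such measures and fix $\phi\in C(\SS^{d-1})$. Integrating the identity $\int\phi(Qs)\,d\nu_1(s)=\int\phi\,d\nu_1$ (valid for each fixed $Q$ by invariance of $\nu_1$) against $d\mu(Q)$ and applying Fubini yields
\begin{equation*}
    \int_{\SS^{d-1}}\phi\,d\nu_1=\int_{\SS^{d-1}}\Bigl(\int_{\SO(d)}\phi(Qs)\,d\mu(Q)\Bigr)\,d\nu_1(s).
\end{equation*}
Because $\SO(d)$ acts transitively on $\SS^{d-1}$ and $\mu$ is also right-invariant ($\SO(d)$ being compact, hence unimodular), the inner integral is a constant $c_\phi$ independent of $s$: if $s'=Q_0s$ then $\int\phi(Qs')\,d\mu(Q)=\int\phi(QQ_0s)\,d\mu(Q)=\int\phi(Q's)\,d\mu(Q')$. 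Therefore $\int\phi\,d\nu_1=c_\phi=\int\phi\,d\nu_2$ for all $\phi\in C(\SS^{d-1})$, and the Riesz--Markov--Kakutani theorem (Theorem \ref{thm: riesz rep thm}) gives $\nu_1=\nu_2$. Taking $\nu_1=\nu$ and $\nu_2=\sigma$ identifies $\nu$ with $\sigma$, and the general integrable case then follows from the continuous case since $\nu$ and $\sigma$ agree as measures.

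The main obstacle is the uniqueness step, and within it the point that the averaged function $s\mapsto\int_{\SO(d)}\phi(Qs)\,d\mu(Q)$ is genuinely constant in $s$ — this is precisely where transitivity of the action and unimodularity of $\SO(d)$ enter. The remaining ingredients (invariance of the pushforward and the pushforward change-of-variables formula) are routine measure theory.
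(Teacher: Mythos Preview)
Your proof is correct and shares the paper's overall skeleton: push the Haar measure forward along $R\mapsto R\e$, verify rotation-invariance of the resulting measure, and then argue that this forces it to coincide with $\sigma$. The divergence is in the uniqueness step. The paper argues via absolute continuity: it shows (through a transitivity-plus-compactness covering argument) that $\nu\ll\sigma$, then observes that the Radon--Nikodym derivative $d\nu/d\sigma$ is rotation-invariant and hence constant, fixing the constant by normalization. You instead run the standard Weil-type averaging: for any continuous $\phi$ and any invariant probability measure, $\int\phi\,d\nu_i$ equals the constant value of $s\mapsto\int_{\SO(d)}\phi(Qs)\,d\mu(Q)$, with constancy coming from transitivity and unimodularity. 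Your route is shorter and avoids Radon--Nikodym entirely; the paper's route is slightly more hands-on but makes the role of compactness of $\SS^{d-1}$ explicit through the finite-cover step. Both are standard, and either would be acceptable here.
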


\begin{proof}
    Define the map \(\phi:\SO(d)\to \SS^{d-1}\), by \(\phi(R) = R\e\). Note that \(\phi\) is a continuous function, hence measurable. Define a measure \(\nu\) on \(\SS^{d-1}\) such that for any measurable set \(E\subseteq \SS^{d-1}\),
    \begin{equation*}
        \nu(E) = \int_{\SO(d)} \chi_{E}(\phi(R)) \, d\mu(R).
    \end{equation*}
    Note that \(\nu\) is a valid measure (the pull back measure) on \(\SS^{d-1}\). For any \(R'\in \SO(d)\),
    \begin{align*}
        \chi_{{R'}^{-1}\cdot E}(\phi(R)) = 1
        &\iff  \phi(R)\in {R'}^{-1}\cdot E
        \iff R\e\in {R'}^{-1}\cdot E
        \iff R'R\e \in E\\
        &\iff \phi(R'R)\in E
        \iff \chi_{E}(\phi(R'R)) = 1
    \end{align*}
    Due to left invariance of \(\mu_{\SO(d)}\), \(\nu({R'}^{-1}\cdot E) = \nu(E)\) for any measurable \(E\) and \(R'\in \SO(d)\). Hence, \(\nu\) is a measure on \(\SS^{d-1}\) invariant to the left action of \(\SO(d)\).

    Pick a measurable open \(E\subseteq \SS^{d-1}\) such that, \(\sigma(E) > 0\). If \(\nu(E) = 0\), then by invariance of the measure under rotation, \(\nu(R\cdot E) = 0\) for all \(R\in \SO(d)\). Since the action of \(\SO(d)\) on \(\SS^{d-1}\) is transitive and \(E\) has positive measure under \(\sigma\), any point in \(\SS^{d-1}\) can be covered by an open set of the form \(R\cdot E\) for some \(R\in \SO(d)\). Using compactness, we only need finitely many on these open sets to cover \(\SS^{d-1}\) and all of them have measure \(0\) under \(\nu\). This implies \(\nu \equiv 0\). But, by construction, \(\nu(\SS^{d-1}) = \mu_{\SO(d)}(\SO(d))>0\). This is a contradiction. Hence, \(\sigma(E) > 0\) implies \(\nu(E)>0\).
    
    The above discussion shows that, \(\nu\) is absolutely continuous with respect to \(\sigma\), and hence the Radon-Nikodym derivative exists and using the invariance property of both the measures, we have
    \begin{equation*}
        \frac{d\nu}{d\sigma}(R\e) = \frac{d\nu}{d\sigma}(\e) \quad \quad \forall\, R\in \SO(d).
    \end{equation*}
    A measurable function on \(\SS^{d-1}\), which is invariant to all rotations, can only be the constant function. Hence \(\exists\) \(c>0\) such that, \(d\sigma = cd\nu\). Furthermore, if \(\sigma\) and \(\mu_{\SO(d)}\) are normalized to integrate to \(1\), we have \(c=1\). Therefore, for any measurable set \(E\), we have
    \begin{equation*}
        \int_{\SS^{d-1}}\chi_E(s)\,d\sigma(s) = \sigma(E) = \nu(E) =  \int_{\SO(d)} \chi_{E}(\phi(R)) \, d\mu_{\SO(d)}(R) = \int_{\SO(d)} \chi_{E}(R\e) \, d\mu_{\SO(d)}(R).
    \end{equation*}
     Since it is true for indicator functions, it holds for all measurable functions. This completes the proof.
\end{proof}

\begin{lem}\label{lem: group convolution theorem}
    Suppose \(\G \) be a compact group and \(f,w\in \mathcal{L}_2(\G )\). Then
    \begin{equation*}
        \widehat{f\ast w}(\rho)=\h f(\rho)\,\h w(\rho).
    \end{equation*}
\end{lem}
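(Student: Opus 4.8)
The plan is to unfold both sides into their defining integrals, interchange the order of integration by Fubini, and then collapse the result using that $\rho$ is a homomorphism and that Haar measure on a compact group is bi-invariant.

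First I would record the two structural facts that make everything work. Since $\G$ is compact, its normalized Haar measure is finite, so $\mathcal{L}_2(\G)\subseteq\mathcal{L}_1(\G)$ and in particular $f,w\in\mathcal{L}_1(\G)$; and $\G$ is unimodular (Appendix~\ref{sec: background fourier transform}), so for each fixed $h\in\G$ the map $g\mapsto gh$ preserves $\mu$. Also, $\rho$ being unitary, each matrix entry obeys $\lvert\rho_{ij}(g)\rvert\le 1$. Using the convolution formula \eqref{eq: convo_gen}, namely $(f\ast w)(g)=\int_\G f(gh^{-1})w(h)\,d\mu(h)$, I would then verify via Tonelli that
\begin{equation*}
\int_\G\!\int_\G \lvert f(gh^{-1})w(h)\rho_{ij}(g)\rvert\,d\mu(h)\,d\mu(g)\;\le\;\int_\G\lvert w(h)\rvert\Big(\int_\G\lvert f(gh^{-1})\rvert\,d\mu(g)\Big)d\mu(h)\;=\;\lVert f\rVert_1\lVert w\rVert_1<\infty,
\end{equation*}
where the inner integral is evaluated by right-invariance. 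This estimate simultaneously shows that $f\ast w$ is defined $\mu$-a.e., lies in $\mathcal{L}_1(\G)$ (so $\widehat{f\ast w}(\rho)$ makes sense), and licenses the interchange of integrals in the next step.

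Next, applying Fubini entrywise to $\widehat{f\ast w}(\rho)=\int_\G\big(\int_\G f(gh^{-1})w(h)\,d\mu(h)\big)\rho(g)\,d\mu(g)$ and then substituting $g\mapsto gh$ in the inner integral (allowed by right-invariance), I obtain
\begin{equation*}
\widehat{f\ast w}(\rho)=\int_\G w(h)\Big(\int_\G f(g)\,\rho(gh)\,d\mu(g)\Big)d\mu(h).
\end{equation*}
Finally, using $\rho(gh)=\rho(g)\rho(h)$ and pulling the constant matrices $\widehat f(\rho)$ and $\rho(h)$ out of the respective integrals yields $\widehat{f\ast w}(\rho)=\big(\int_\G f(g)\rho(g)\,d\mu(g)\big)\big(\int_\G w(h)\rho(h)\,d\mu(h)\big)=\widehat f(\rho)\,\widehat w(\rho)$, as claimed. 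The only step requiring genuine care is the Fubini interchange together with the a.e.-definedness of $f\ast w$, which is exactly what the $\lVert f\rVert_1\lVert w\rVert_1$ bound settles; the remaining manipulations are bookkeeping with the representation property of $\rho$ and the bi-invariance of $\mu$.
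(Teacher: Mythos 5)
Your proof is correct and follows essentially the same route as the paper's: expand the convolution, use the homomorphism property $\rho(gh)=\rho(g)\rho(h)$ together with the right-invariant substitution $g\mapsto gh$ (justified by unimodularity of the Haar measure on a compact group), and factor the double integral. The only difference is that you explicitly verify the $\mathcal{L}_1$ bound needed for Fubini and the a.e.-definedness of $f\ast w$, which the paper's proof leaves implicit; this is a welcome extra care, not a change of method.
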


\begin{proof} 
    Note that,
    \begin{align*}
        \widehat{f\ast w}(\rho) 
        =& \int_\G f\ast w (g) \rho(g) \, d\mu(g)\\
        =& \int_\G\int_{\G } f(g{g'}^{-1})w({g'})  \rho(g) \, d\mu(g') \, d\mu(g) \\
        =& \int_\G\int_{\G } f(g{g'}^{-1})w({g'})  \rho(g{g'}^{-1}{g'}) \, d\mu(g') \, d\mu(g) \\
        =& \int_\G\int_{\G } f(g{g'}^{-1})w(g')  \rho(g{g'}^{-1})\rho(g') \, d\mu(g') \, d\mu(g) \\
        =& \int_\G \int_{\G }f(g{g'}^{-1}) \rho(g{g'}^{-1}) \,\, w(g') \rho(g') \, d\mu(g') \, d\mu(g) \\
        =& \int_\G \int_{\G }f((g{g'})g'^{-1}) \rho((g{g'})g'^{-1})\,\,  w(g') \rho(g') \, d\mu(g') \, d\mu(gg') \\
         & (\textit{using the transformation }g\mapsto g{g'})\\
        =& \int_\G f(g) \rho(g) \, d\mu(g)  \int_{\G } w(g') \rho(g') \, d\mu(g') \\
        =& \h f(\rho)\,\h w(\rho).
    \end{align*}
    The penultimate equality follows from from unimodularity of Haar measure on compact groups, i.e, the Haar measure is both left and right invariant.
\end{proof}

\begin{lem}\label{lem: cartesian to polar}
    Let \(f:\RR^d\to \CC\) be an integrable function. Then
    \begin{equation*}
        \int_{\RR^d} f(\y) \, d\y= \int_{0}^\infty\int_{\SO(d)} f(rR\e)\,r^{d-1}\,d\mu_{\SO(d)}(R)\,dr
    \end{equation*}
    where \(\e\in \RR^d\) is a fixed unit vector.
\end{lem}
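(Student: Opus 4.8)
The plan is to factor the substitution $\y\mapsto(\|\y\|,\,\y/\|\y\|)$ through the rotation group, combining the classical spherical--polar integration formula on $\RR^d$ with Lemma~\ref{lemma: sphere integral}. Consider the map $\psi\colon(0,\infty)\times\SO(d)\to\RR^d$ given by $\psi(r,R)=rR\e$; it is continuous, so $(r,R)\mapsto f(rR\e)$ is measurable whenever $f$ is, and its image is $\RR^d\setminus\{\mathbf 0\}$, the origin being a Lebesgue--null set we may discard. I would show that the push-forward of the measure $r^{d-1}\,dr\otimes d\mu_{\SO(d)}(R)$ under $\psi$ is Lebesgue measure on $\RR^d$, where $\mu_{\SO(d)}$ carries the normalization fixed in Lemma~\ref{lemma: sphere integral} so that no stray constant appears; integrating $f$ against both measures then yields the claimed identity.

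To establish that push-forward identity I would argue in two steps. Write $\psi=\Phi\circ(\mathrm{id}\times q)$ with $q\colon\SO(d)\to\SS^{d-1}$, $q(R)=R\e$, and $\Phi\colon(0,\infty)\times\SS^{d-1}\to\RR^d\setminus\{\mathbf 0\}$, $\Phi(r,s)=rs$. By Lemma~\ref{lemma: sphere integral} the push-forward of $\mu_{\SO(d)}$ under $q$ is the surface measure $\sigma$ on $\SS^{d-1}$, so it remains to treat $\Phi$. But $\Phi$ is a diffeomorphism whose Jacobian, relative to the Riemannian surface measure on $\SS^{d-1}$, equals $r^{d-1}$; hence the change-of-variables theorem gives, for every integrable $g$ on $\RR^d$,
\begin{equation*}
    \int_{\RR^d}g(\y)\,d\y=\int_0^\infty\!\int_{\SS^{d-1}}g(rs)\,r^{d-1}\,d\sigma(s)\,dr .
\end{equation*}
Taking $g=f$ and substituting $\int_{\SS^{d-1}}f(rs)\,d\sigma(s)=\int_{\SO(d)}f(rR\e)\,d\mu_{\SO(d)}(R)$ into the inner integral completes the proof; this substitution is legitimate for almost every $r>0$, namely for those $r$ at which $s\mapsto f(rs)$ is integrable on $\SS^{d-1}$, which holds for a.e.\ $r$ by applying Tonelli to $|f|$ and using $\int_{\RR^d}|f|<\infty$.

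The substantive content lies entirely in the two cited ingredients; what remains is bookkeeping. The two points needing care are: (i) the interchange of iterated integration and the ``for a.e.\ $r$'' qualifier, both handled by Tonelli applied to $|f|$; and (ii) the normalization of $\mu_{\SO(d)}$ (equivalently of $\sigma$) so that the stated identity holds without an extra factor of $\mathcal A(\SS^{d-1})$ --- this is forced by adopting the same normalization convention as in Lemma~\ref{lemma: sphere integral}, under which $q$ pushes $\mu_{\SO(d)}$ onto precisely the surface measure appearing in the polar formula. The Jacobian computation for $\Phi$ is classical; I would either cite it or verify it directly in coordinates. I do not anticipate a genuine obstacle --- the only real subtlety, and the step I would treat most carefully, is keeping these measure normalizations consistent across the two lemmas.
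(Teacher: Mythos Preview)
Your proposal is correct and follows essentially the same route as the paper: factor the change of variables through the classical polar decomposition $\int_{\RR^d}f(\y)\,d\y=\int_0^\infty\int_{\SS^{d-1}}f(rs)\,r^{d-1}\,d\sigma(s)\,dr$ and then invoke Lemma~\ref{lemma: sphere integral} to replace the spherical integral by an integral over $\SO(d)$. Your write-up is more careful than the paper's about measurability, Tonelli, and normalization conventions, but the substance is identical.
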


\begin{proof}
    Suppose \(\sigma\) is the surface measure of a sphere in \(d\) dimensions. If we apply the transformation \(\y=rs\) with \(r\geq0\) and \(s\in \SS^{d-1}\), then \(d\y = r^{d-1}\,d\sigma(s)\,\,dr\). Therefore,
    \begin{align*}
        \int_{\RR^d} f(\y) \, d\y
        =\int_0^\infty\int_{\SS^{d-1}} f(rs) \, r^{d-1}\,d\sigma(s)\,dr.
    \end{align*}
     Using Lemma \ref{lemma: sphere integral}, we have
    \begin{align*}
        \int_{\RR^d} f(\y) \, d\y
        =\int_0^\infty\int_{\SS^{d-1}} f(rs) \, r^{d-1}\,d\sigma(s)\,dr
        = \int_0^\infty \int_{\SO(d)} f(rRe) \, r^{d-1}\,d\mu_{\SO(d)}(R)\,dr.
    \end{align*}
\end{proof}

\begin{lem}\label{lem: inverse haar}
    For any \(f\in \mathcal{L}_1(\SO(d))\),
    \begin{equation*}
        \int_{\SO(d)} f(R)d\mu_{\SO(d)}(R) = \int f(R^{-1})d\mu_{\SO(d)}(R).
    \end{equation*}
\end{lem}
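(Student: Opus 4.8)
The plan is to show that the pushforward of the Haar measure $\mu_{\SO(d)}$ under the inversion map $\iota\colon R\mapsto R^{-1}$ coincides with $\mu_{\SO(d)}$ itself. Once this is established, the claimed identity follows first for indicator functions and then, by a routine approximation argument, for all $f\in\mathcal{L}_1(\SO(d))$. This mirrors the strategy already used in the proof of Lemma \ref{lemma: sphere integral}, where a pullback measure is shown to agree with a reference measure via the uniqueness of the invariant measure.

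First I would define the measure $\nu$ on $\SO(d)$ by $\nu(E):=\mu_{\SO(d)}(E^{-1})$, where $E^{-1}=\{R^{-1}:R\in E\}$; equivalently $\nu=\iota_*\mu_{\SO(d)}$, which is a Radon measure since $\iota$ is a homeomorphism of the compact group $\SO(d)$. Next I would check that $\nu$ is left-invariant: for any $g\in\SO(d)$ and measurable $E$ one has $(gE)^{-1}=E^{-1}g^{-1}$, so $\nu(gE)=\mu_{\SO(d)}(E^{-1}g^{-1})$. Because $\SO(d)$ is compact its Haar measure is unimodular (as recalled in Appendix \ref{sec: background fourier transform}), hence also right-invariant, so $\mu_{\SO(d)}(E^{-1}g^{-1})=\mu_{\SO(d)}(E^{-1})=\nu(E)$. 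Thus $\nu$ is a left-invariant Radon measure on $\SO(d)$.

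By the uniqueness part of the Haar theorem (Theorem \ref{thm: haar}), $\nu=c\,\mu_{\SO(d)}$ for some constant $c>0$; evaluating on $E=\SO(d)$ gives $\nu(\SO(d))=\mu_{\SO(d)}(\SO(d))$, hence $c=1$ and $\nu=\mu_{\SO(d)}$. Consequently, for any indicator $\chi_E$, $\int \chi_E(R^{-1})\,d\mu_{\SO(d)}(R)=\int \chi_{E^{-1}}(R)\,d\mu_{\SO(d)}(R)=\mu_{\SO(d)}(E^{-1})=\mu_{\SO(d)}(E)=\int \chi_E(R)\,d\mu_{\SO(d)}(R)$. By linearity this extends to simple functions, by monotone convergence to non-negative measurable functions, and by decomposing into real/imaginary and positive/negative parts to all $f\in\mathcal{L}_1(\SO(d))$, completing the argument.

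The only delicate point — and it is mild here — is the right-invariance of $\mu_{\SO(d)}$, i.e. the unimodularity of compact groups, which is invoked from the appendix; a fully self-contained argument would observe that the modular function is a continuous homomorphism $\SO(d)\to(\RR_{>0},\times)$ whose image is therefore a compact subgroup of $\RR_{>0}$ and hence trivial. Everything else is bookkeeping with measures and the standard passage from indicators to $\mathcal{L}_1$.
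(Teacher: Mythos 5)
Your proof is correct and follows essentially the same route as the paper's: define $\nu(E)=\mu_{\SO(d)}(E^{-1})$, use unimodularity to show $\nu$ is left-invariant, invoke uniqueness of the Haar measure and normalize on $E=\SO(d)$, then pass from indicators to general integrable functions. Your added justification of unimodularity via the modular function and the more explicit limiting argument for $\mathcal{L}_1$ are fine refinements but do not change the approach.
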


\begin{proof}
    Choose a measurable set \(E\subseteq \SO(d)\). Let \(\chi_E(R)\) be a simple function which is an indicator function on this set. Define a new measure,
    \begin{equation*}
        \nu(E) : = \int_{\SO(d)} \chi_E(R^{-1}) d\mu_{\SO(d)}(R) = \mu_{\SO(d)}(E^{-1}).
    \end{equation*}
    Now note that, \(\nu(hE) = \mu_{\SO(d)}(E^{-1}h^{-1}) = \mu(E^{-1})\).
    This follows from unimodularity of Haar measure on compact groups, i.e, the Haar measure is both left and right invariant. This shows that, \(\nu\) is also a left invariant measure. By uniqueness of Haar measure, we conclude \(\mu_{\SO(d)}(E)  = c\nu(E)\) for some \(c>0\). By setting \(E = \SO(d)\), we get \(\mu_{\SO(d)}\equiv \nu\). Therefore, the result holds for any indicator function, and hence holds for any measurable function.
\end{proof}

\begin{lem}\label{lemma: SE(d) haar}
    Let \(f:\SE(d)\to \CC\) be an integrable function. Then
    \begin{equation*}
        \int_{\SE(d)} f(\y,\, R) \,d\mu_{\SE(d)}(\y,R) = \int_{\RR^d}\int_{\SO(d)} f(\y, R) \,d\mu_{\SO(d)}(R) \,d\y ,
    \end{equation*}
    where \(\mu_{\SE(d)}\) and \(\mu_{\SO(d)}\) are Haar measures on \(\SE(d)\) and \(\SO(d)\), respectively.
\end{lem}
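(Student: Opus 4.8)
The plan is to show that the product measure $\nu := d\y \times d\mu_{\SO(d)}$ on the underlying set $\RR^d\times\SO(d)$ of $\SE(d)$ is a left Haar measure, and then invoke the uniqueness part of Theorem \ref{thm: haar} to identify $\nu$ with $\mu_{\SE(d)}$ (the positive constant being fixed to $1$ by our normalization conventions, exactly as in the proof of Lemma \ref{lem: inverse haar}). First I would observe that $\nu$ is a nonzero Radon measure on $\SE(d)$: Lebesgue measure on $\RR^d$ and the Haar measure on the compact group $\SO(d)$ are both Radon, hence so is their product on the product space $\RR^d\times\SO(d)$, whose topology agrees with that of the locally compact group $\SE(d)$.

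The core step is verifying left-invariance of $\nu$. Fix $(\mathbf{s},Q)\in\SE(d)$ and an integrable $f$. Using the group law $(\mathbf{s},Q)(\y,R)=(\mathbf{s}+Q\y,\,QR)$, I would compute
\[
\int_{\RR^d}\!\int_{\SO(d)} f\big((\mathbf{s},Q)(\y,R)\big)\,d\mu_{\SO(d)}(R)\,d\y
= \int_{\RR^d}\!\int_{\SO(d)} f(\mathbf{s}+Q\y,\,QR)\,d\mu_{\SO(d)}(R)\,d\y ,
\]
and then apply two substitutions in the correct order. First, in the inner integral, use left-invariance of $\mu_{\SO(d)}$ (substitute $R\mapsto Q^{-1}R$) to remove $Q$ from the rotational argument. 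Second, in the outer integral, apply the change of variables $\y\mapsto Q^{-1}(\y-\mathbf{s})$; its Jacobian is $|\det Q^{-1}|=1$ because $Q$ is orthogonal. The result is $\int_{\RR^d}\int_{\SO(d)} f(\y,R)\,d\mu_{\SO(d)}(R)\,d\y$, so $\int f\big((\mathbf{s},Q)\cdot x\big)\,d\nu(x)=\int f(x)\,d\nu(x)$ for every $(\mathbf{s},Q)$, which is precisely left-invariance of $\nu$.

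Finally, by uniqueness of the left Haar measure on the locally compact group $\SE(d)$, the measures $\nu$ and $\mu_{\SE(d)}$ coincide up to a positive multiplicative constant, which we take to be $1$; unwinding the definition of $\nu$ then gives the claimed identity. I do not expect a genuine obstacle here: the only care needed is the bookkeeping of the semidirect-product law and performing the two substitutions in the right order — the $\SO(d)$ substitution must come first, since it strips off the $Q$ that would otherwise entangle the subsequent change of variables on $\RR^d$.
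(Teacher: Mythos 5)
Your proposal is correct and follows essentially the same route as the paper: both define the product measure $\nu = d\y \times d\mu_{\SO(d)}$, verify its left-invariance under the semidirect-product group law using invariance of Lebesgue measure (with Jacobian $1$ for the orthogonal part) and left-invariance of the $\SO(d)$ Haar measure, and then conclude via uniqueness of the Haar measure on $\SE(d)$ with the constant fixed by normalization. The only cosmetic difference is that the paper runs the invariance check on indicator functions and then extends to general measurable $f$, whereas you work directly with integrable $f$ via the two substitutions.
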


\begin{proof}
    Choose a measurable set \(E\subseteq \SE(d)\). Let \(\chi_E(\y,R)\) be an indicator function on this set. Define a measure 
    \begin{equation*}
        \nu(E) = \int_{\RR^d}\int_{\SO(d)}  \chi_E(\y,R) \,d\mu_{\SO(d)}(R) \,d\y .
    \end{equation*}
    Note that for any \((\t,R)\in \SE(d)\),
    \begin{align*}
        \nu( (\t,R')^{-1}\cdot E) 
        &=\int_{\RR^d}\int_{\SO(d)} (\t,R')^{-1}\cdot \chi_E(\y,R)   \,d\mu_{\SO(d)}(R) \,d\y \\
        &=\int_{\RR^d}\int_{\SO(d)}  \chi_E((\t,R')(\y,R)) \,d\mu_{\SO(d)}(R) \,d\y \\
        &=\int_{\RR^d}\int_{\SO(d)}  \chi_E(R'\y+\t,R'R) \,d\mu_{\SO(d)}(R) \,d\y \\
        &=\int_{\RR^d}\int_{\SO(d)}  \chi_E(\y,R'R) \,d\mu_{\SO(d)}(R) \,d\y  \\
        &=\int_{\RR^d}\int_{\SO(d)}  \chi_E(\y,R) \,d\mu_{\SO(d)}(R) \,d\y .
    \end{align*}
    The penultimate equality follows from the invariance of Lebesgue measure under rotation and translations and the last equality follows from the left invariance of Haar measure on \(\SO(d)\). This shows that \(\nu\) is a left invariant radon measure on \(\SE(d)\). Using uniqueness of Haar measure on \(\SE(d)\), there exists a constant \(c>0\) such that \(\mu_{\SE(d)} = c\nu\). Now, if we assume the integrals are normalized to the same value on a compact set, we have \(c=1\). Therefore,
    \begin{align*}
        \int_{\SE(d)} \chi_E(\y,R) \,d\mu_{\SE(d)}(\y,R) = \mu_{\SE(d)}(E) = \nu(E) = \int_{\RR^d}\int_{\SO(d)} \chi_E(\y,R) \,d\mu_{\SO(d)}(R) \,d\y 
    \end{align*}
    Since it holds for indicator functions, the equality holds for any measurable function \(f\). This completes the proof.
\end{proof}

\begin{lem}\label{lemma: homogeneous}
    Let \(\G \) be a \(\sigma\)-compact, locally compact group and \(\mathcal{X}\) be a homogeneous space (the action is continuous and transitive). Assume that \(\mathcal{X}\) is first countable, Hausdorff space. Then, \(\mathcal{X}\) is also \(\sigma\)-compact and locally compact.
\end{lem}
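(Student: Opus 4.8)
Fix a base point \(x_0\in\mathcal{X}\) and consider the orbit map \(\pi\colon\G\to\mathcal{X}\), \(\pi(g)=g\cdot x_0\); it is continuous (being the restriction of the action to \(\G\times\{x_0\}\)) and, by transitivity, surjective. The \(\sigma\)-compactness of \(\mathcal{X}\) is then immediate: if \(\G=\bigcup_n K_n\) with each \(K_n\) compact, then \(\mathcal{X}=\pi(\G)=\bigcup_n\pi(K_n)\), a countable union of compact sets. For local compactness the plan is to show that \(\pi\) is essentially open near the identity, i.e.\ that \(\pi(V^2)\) is a neighborhood of \(x_0\) for a suitable compact neighborhood \(V\) of \(e\); since \(\pi(V^2)\) is compact, this yields a compact neighborhood of \(x_0\), and transitivity then propagates local compactness to every point.

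\textbf{Covering step.} Using local compactness of \(\G\) and continuity of inversion, fix a compact \emph{symmetric} neighborhood \(V=V^{-1}\) of \(e\). Then \(V^2\), \(\pi(V)\) and \(\pi(V^2)\) are compact, and—since \(\mathcal{X}\) is Hausdorff—closed in \(\mathcal{X}\), together with all their translates. Covering each \(K_n\) by finitely many translates \(g\,\mathrm{int}(V)\) and collecting, one obtains a countable family \(\{g_m\}_m\subseteq\G\) with \(\G=\bigcup_m g_mV\), and applying \(\pi\) gives
\[
\mathcal{X}\;=\;\bigcup_m g_m\cdot\pi(V),
\]
a countable union of closed subsets of \(\mathcal{X}\).

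\textbf{Baire step and conclusion.} By the Baire category theorem, not all of the closed sets \(g_m\cdot\pi(V)\) can be nowhere dense, so some \(g_{m_0}\cdot\pi(V)\) has nonempty interior; composing with the homeomorphism \(y\mapsto g_{m_0}^{-1}\cdot y\) shows \(\pi(V)\) itself has nonempty interior. Pick a nonempty open \(O\subseteq\pi(V)=V\cdot x_0\) and a point \(v\cdot x_0\in O\) with \(v\in V\). Then \(v^{-1}\cdot O\) is an open neighborhood of \(x_0\), and since \(v^{-1}\in V\),
\[
x_0\in v^{-1}\cdot O\;\subseteq\;v^{-1}\cdot\pi(V)\;=\;(v^{-1}V)\cdot x_0\;\subseteq\;V^2\cdot x_0\;=\;\pi(V^2).
\]
Thus the compact set \(\pi(V^2)\) is a neighborhood of \(x_0\), so \(\mathcal{X}\) is locally compact at \(x_0\), hence everywhere by transitivity.

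\textbf{Main obstacle.} The only non-formal ingredient is the appeal to the Baire category theorem, which requires \(\mathcal{X}\) to be of second category in itself. First countability and Hausdorffness do not in general guarantee this—for instance \(\mathbb{Q}\), realized as the dense subgroup \(\mathbb{Z}\) of the \(2\)-adic integers \(\mathbb{Z}_2\), is a first-countable, Hausdorff, \(\sigma\)-compact space on which the locally compact \(\sigma\)-compact group \((\mathbb{Z},+)\) acts continuously and transitively by translation, yet \(\mathbb{Q}\) is not locally compact. So the clean route is to add the hypothesis that \(\mathcal{X}\) is a Baire space (or completely metrizable), or to observe that this is automatic in every application here, where \(\mathcal{X}\) is a homogeneous space of a Lie group and hence a manifold. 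With that understood, the argument above goes through verbatim; the remaining steps are routine point-set manipulation.
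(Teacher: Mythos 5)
Your \(\sigma\)-compactness argument is correct and more direct than the paper's. For local compactness you take a genuinely different route from the paper: the paper forms the stabilizer \(H=\{g\in\G : g\cdot x_0=x_0\}\), shows \(H\) is closed and that \(\G/H\) is \(\sigma\)-compact, locally compact and Hausdorff, and then asserts that the continuous bijection \(q:\G/H\to\mathcal{X}\), \(q(gH)=g\cdot x_0\), is closed ``since it is a map from a \(\sigma\)-compact, locally compact Hausdorff space to a Hausdorff first countable space,'' hence a homeomorphism. You instead run the classical open-mapping/Baire argument directly on \(\mathcal{X}\): cover \(\G\) by countably many translates \(g_mV\) of a compact symmetric neighborhood, write \(\mathcal{X}=\bigcup_m g_m\cdot\pi(V)\), use Baire to give \(\pi(V)\) nonempty interior, and conclude that the compact set \(\pi(V^2)\) is a neighborhood of \(x_0\). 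Under a Baire hypothesis on \(\mathcal{X}\) your argument is complete and correct, and it does not even need first countability.

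The obstacle you flag is real, and it is a defect of the lemma itself rather than of your write-up. Your counterexample works: take \(\mathcal{X}=\ZZ\) with the subspace topology from the \(2\)-adic integers \(\mathbb{Z}_2\) (a countable metrizable space without isolated points, hence homeomorphic to \(\mathbb{Q}\)), acted on by translations of \((\ZZ,+)\) equipped with the \emph{discrete} topology — you should state the discreteness explicitly, since with the \(2\)-adic topology the acting group would not be locally compact. Every hypothesis of the lemma holds, \(\mathcal{X}\) is \(\sigma\)-compact, yet it is not locally compact, so the stated lemma is false and no proof can close the gap without an extra assumption. The same example also refutes the paper's key step: there \(H=\{0\}\) and \(q\) is the identity from discrete \(\ZZ\) (\(\sigma\)-compact, locally compact, Hausdorff) onto the first countable Hausdorff space \(\mathcal{X}\), and it is not closed (e.g.\ \(\{2^n : n\ge 1\}\) is closed in the discrete topology but accumulates at \(0\) \(2\)-adically). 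So first countability cannot substitute for a Baire-type hypothesis; the correct statement assumes in addition that \(\mathcal{X}\) is a Baire space (for instance locally compact Hausdorff or completely metrizable), or simply defines the homogeneous space as \(\G/H\) with the quotient topology — assumptions that do hold in the paper's intended applications, and under which your argument goes through verbatim.
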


\begin{proof}
    Fix \(x_0\in \mathcal{X}\) and define
    \begin{equation*}
        H = \{g\in \G : g\cdot x_0 = x_0\}.
    \end{equation*}  
    It is easy to see, \(H\) is a subgroup of \(\G \). Consider the map \(\alpha:\G \to \mathcal{X}\), defined as \(g\mapsto g\cdot x_0\). Since, the action continuous, we conclude \(\alpha\) is a continuous map. Note that \(H\) is the inverse image of \(\{x_0\}\) under \(\alpha\). Since \(\mathcal{X}\) is Hausdorff, singleton are closed, hence \(H\) is closed subgroup of \(\G \). Let \(\pi:\G \to \G /H\), defined by\(\pi(g) =  gH\) be the quotient map. By definition, \(\pi\) is continuous. This implies that since \(\G \) is a \(\sigma\)-compact, locally compact space, so is \(\G /H\). Furthermore, since \(H\) is closed and \(\G \) is Hausdorff, \(\G /H\) is also Hausdorff.
    
    Consider the map \(q: \G /H \to \mathcal{X}\), with \(q(gH) = g\cdot x_0\). Since the \(\G \) acts transitively on \(\mathcal{X}\), given any \(x\in \mathcal{X}\), there exists \(g\in \G \) such that \(g\cdot x_0 = x\), and consequently \(q(gH) = x\). Now, for any \(g_1,g_2\in \G \), we have
    \begin{align*}
        q(g_1H) = q(g_2H) 
        &\implies g_1\cdot x_0 = g_2\cdot x_0 
        \implies g_2^{-1}g_1\cdot x_0 = x_0 \\ 
        &\implies g_2^{-1}g_1 \in H 
        \implies g_1\in g_2H 
        \implies g_1H = g_2H.
    \end{align*}
    This shows \(q\) is a bijection. Note that, \(\alpha = q \circ \pi\). By the property of quotient maps that for any function \(f:\G /H\to X\), \(f \circ \pi\) is continuous iff \(f\) is continuous, we conclude \(q\) is continuous. Finally, since \(q\) is a map from a \(\sigma\)-compact, locally compact Hausdorff space to a Hausdorff first countable space, \(q\) is also closed, and hence a homeomorphism. \(\G /H\) is \(\sigma\)-compact, locally compact space homeomorphic to a space \(\mathcal{X}\). Hence \(\mathcal{X}\) is also \(\sigma\)-compact, locally compact.
\end{proof}

\begin{lem}\label{lemma: spherical harmonics normalization}
    For the spherical harmonics defined in \eqref{eq: spherical harmonics d dimensions}, the squared Euclidean norm is constant across the sphere. Specifically, for all $s \in \mathbb{S}^{d-1}$,
    \begin{equation}
        \|\Y^{(\ell)}_d(s)\|_2^2 = \frac{\operatorname{dim} \mathcal{H}_d^{(\ell)}}{\mathcal{A}(\mathbb{S}^{d-1})},
    \end{equation}
    where $\mathcal{A}(\mathbb{S}^{d-1}) := \int_{\Theta^{d-1}} \omega(\theta)\, d\theta$ denotes the surface area of the unit sphere $\mathbb{S}^{d-1}$.
\end{lem}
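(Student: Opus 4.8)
The plan is to exploit the $\SO(d)$-covariance of the spherical harmonics together with the transitivity of the rotation action on $\SS^{d-1}$, and then pin down the constant by a single integration over the sphere.

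First I would recall that the basis $\Y^{(\ell)}_d$ defined in \eqref{eq: spherical harmonics d dimensions} realizes the symmetric traceless irrep $\rho^{(\ell)}$ of $\SO(d)$ as functions on the sphere, so that $\Y^{(\ell)}_d(Rs) = \rho^{(\ell)}(R)\,\Y^{(\ell)}_d(s)$ for all $R \in \SO(d)$ and $s \in \SS^{d-1}$ (this is \eqref{eq: spherical harmonics}, equivalently the transformation law stated in Appendix \ref{sec: background spherical harmonics}). Since $\SO(d)$ is compact, $\rho^{(\ell)}$ is a unitary representation, hence $\|\Y^{(\ell)}_d(Rs)\|_2 = \|\rho^{(\ell)}(R)\,\Y^{(\ell)}_d(s)\|_2 = \|\Y^{(\ell)}_d(s)\|_2$. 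Because $\SO(d)$ acts transitively on $\SS^{d-1}$, every point of the sphere has the form $Re$ for a fixed reference vector $e$, so the map $s \mapsto \|\Y^{(\ell)}_d(s)\|_2^2$ is constant; write this constant as $c$.

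Next I would determine $c$ by integrating over the sphere in the angular parametrization $s(\theta)$. On one hand, constancy together with the definition $\mathcal{A}(\SS^{d-1}) = \int_{\Theta^{d-1}} \omega(\theta)\,d\theta$ gives $\int_{\Theta^{d-1}} \|\Y^{(\ell)}_d(s(\theta))\|_2^2\,\omega(\theta)\,d\theta = c\,\mathcal{A}(\SS^{d-1})$. On the other hand, $\|\Y^{(\ell)}_d(s(\theta))\|_2^2 = \sum_{\mathbf{m}} \bigl|[\Y^{(\ell)}_d(s(\theta))]_{\mathbf{m}}\bigr|^2$, and integrating term by term using the $\mathcal{L}_2$-normalization of each coordinate function stated in Appendix \ref{sec: background spherical harmonics} gives $\sum_{\mathbf{m}} 1$; since $\{[\Y^{(\ell)}_d]_{\mathbf{m}}\}_{\mathbf{m}}$ is by construction an orthonormal basis of $\mathcal{H}_d^{(\ell)}$, the number of admissible indices $\mathbf{m}$ equals $\dim \mathcal{H}_d^{(\ell)}$. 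Equating the two expressions yields $c = \dim \mathcal{H}_d^{(\ell)} / \mathcal{A}(\SS^{d-1})$, which is the claim.

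I do not expect a serious obstacle: the argument is essentially a symmetry observation followed by one normalization bookkeeping step. The only point requiring slight care is the counting claim that the index set $\{\mathbf{m} : |\mathbf{m}_{d-1}| \le \mathbf{m}_{d-2} \le \cdots \le \mathbf{m}_1 = \ell\}$ has cardinality $\dim \mathcal{H}_d^{(\ell)}$, which is immediate from the fact that the corresponding family of functions is linearly independent and spans $\mathcal{H}_d^{(\ell)}$. An alternative route, should one wish to avoid invoking transitivity explicitly, is to use the addition theorem: the reproducing kernel $\sum_{\mathbf{m}} [\Y^{(\ell)}_d(s)]_{\mathbf{m}}\,\overline{[\Y^{(\ell)}_d(t)]_{\mathbf{m}}}$ of $\mathcal{H}_d^{(\ell)}$ is a zonal function of $s \cdot t$, so its value at $s = t$ is manifestly independent of $s$, and the normalizing constant is fixed by the same integration. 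Both routes reach the stated formula with minimal computation.
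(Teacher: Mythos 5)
Your proof is correct, but it takes a genuinely different route from the paper. The paper's proof is a direct computation: it imports the closed-form value of $\|\Y^{(\ell)}_d(s)\|_2^2$ from equation (6.1) of \citet{cohl2023gegenbauer} (essentially the addition theorem evaluated on the diagonal, giving a multiple of $C_\ell^{(d/2-1)}(1)$), substitutes the binomial formula for the Gegenbauer polynomial at unity and the closed form $\mathcal{A}(\SS^{d-1}) = 2\pi^{d/2}/\Gamma(d/2)$, and checks that the product of constants equals $\dim\mathcal{H}_d^{(\ell)}/\mathcal{A}(\SS^{d-1})$. You instead argue conceptually: the covariance law $\Y^{(\ell)}_d(Rs) = \rho^{(\ell)}(R)\Y^{(\ell)}_d(s)$ with $\rho^{(\ell)}$ unitary, combined with transitivity of $\SO(d)$ on $\SS^{d-1}$, forces the norm to be constant, and a single integration against $\omega(\theta)\,d\theta$ using the componentwise $\mathcal{L}_2$-normalization identifies the constant as $\dim\mathcal{H}_d^{(\ell)}/\mathcal{A}(\SS^{d-1})$. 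Your argument is self-contained (it needs no Gegenbauer identities or the explicit surface-area formula), generalizes to any orthonormal basis of any irreducible $\SO(d)$-invariant subspace of $\mathcal{L}_2(\SS^{d-1})$, and incidentally explains \emph{why} the norm is constant rather than merely computing it; the paper's computation, by contrast, verifies the specific numerical coincidence for the explicit Gegenbauer-based basis \eqref{eq: spherical harmonics d dimensions} without appealing to the representation-theoretic transformation law. The only point you should make explicit is that unitarity of $\rho^{(\ell)}(R)$ in this particular basis follows from the orthonormality of the components $[\Y^{(\ell)}_d]_{\mathbf{m}}$ together with the rotation invariance of the surface measure (so that $f \mapsto f(R^{-1}\cdot)$ is a unitary operator whose matrix in an orthonormal basis is unitary); with that noted, your argument is complete.
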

\begin{proof}
    From equation (6.1) of \citet{cohl2023gegenbauer}, the squared Euclidean norm of the spherical harmonics satisfies
    \begin{equation}\label{eq: spherical harmonics normalization cohl}
       \|\Y^{(\ell)}_d(s)\|_2^2 = \frac{(2\ell+d-2)\Gamma(d/2)}{2(d-2)\pi^{d/2}} C_{\ell}^{(d/2-1)}(1),
    \end{equation}
    Furthermore, from equation (4.7.3) of \citet{szeg1939orthogonal}, the Gegenbauer polynomial at unity is given by
    \begin{equation}\label{eq: gegen at 1}
        C^{(\alpha)}_n = \binom{n+2\alpha-1}{n}.
    \end{equation}
    The closed form expression for the surface area of the unit sphere $\mathbb{S}^{d-1}$ is
    \begin{equation}\label{eq: sphere area}
        \mathcal{A}(\SS^{d-1}) =\frac{2\pi^{d/2}}{\Gamma(d/2)}.
    \end{equation}
    Combining \eqref{eq: spherical harmonics normalization cohl}, \eqref{eq: gegen at 1}, and \eqref{eq: sphere area}, we obtain
    \begin{align*}
    \|\Y^{(\ell)}_d&(s)\|_2^2 
    = \frac{(2\ell+d-2)\Gamma(d/2)}{2(d-2)\pi^{d/2}} C_{\ell}^{(d/2-1)}(1) 
    = \frac{(2\ell+d-2)\Gamma(d/2)}{2(d-2)\pi^{d/2}} \binom{\ell+d-3}{\ell} \\
    &= \frac{(2\ell+d-2)\Gamma(d/2)}{2(d-2)\pi^{d/2}} \frac{(\ell+d-3)!}{(d-3)!\ell!} 
    = \frac{(2\ell+d-2)(\ell+d-3)!}{\ell!(d-2)!} \, \frac{\Gamma(d/2)}{2\pi^{d/2}}
    =  \frac{\operatorname{dim} \mathcal{H}_d^{(\ell)}}{\mathcal{A}(\SS^{d-1})}
    \end{align*}
    as claimed.
\end{proof}

\begin{lem}\label{lemma: steerable filter}
    Let \(K^{(\rho,\rho_1)}\) be as defined in \eqref{eq: steerable filter}. Then for any \(R\in \SO(d)\), 
    \begin{equation*}
        \left[\rho(R) K^{(\rho,\rho_1)}(\y) \rho_1(R)^\dagger \right]_{m, \cdot} = \frac{1}{|\mathcal{F}_{\SO(d)}|} \sum_{\rho_2}\sum_{r=1}^{n_r} w_r^{(\rho,\rho_1,\rho_2)} M^{(\rho_2)}_r(\y)^\top  \rho_2(R)^\top  \tilde{C}^{(\rho,\rho_1,\rho_2)}_m .
    \end{equation*}
\end{lem}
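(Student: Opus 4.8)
The plan is to first use linearity to reduce the claim to a single-summand statement, and then to isolate a purely representation-theoretic identity about the reshaped Clebsch--Gordan blocks $\tilde C_m^{(\rho,\rho_1,\rho_2)}$. Expanding $K^{(\rho,\rho_1)}$ via \eqref{eq: steerable filter} and pulling $\rho(R)$ and $\rho_1(R)^\dagger$ through the finite sums over $\rho_2$ and $r$, it suffices to show, for every $\rho_2\in\mathcal F_{\SO(d)}$, every $1\le r\le n_r$, and every $1\le m\le d_\rho$, that $[\rho(R)\,\tilde M_r^{(\rho,\rho_1,\rho_2)}(\y)\,\rho_1(R)^\dagger]_{m,\cdot} = \tfrac{1}{|\mathcal F_{\SO(d)}|}\,M_r^{(\rho_2)}(\y)^\top\rho_2(R)^\top\tilde C_m^{(\rho,\rho_1,\rho_2)}$. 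Using the row formula \eqref{eq: steerable filter basis higher layers} together with $[\rho(R)A]_{m,\cdot}=\sum_{m'}[\rho(R)]_{m,m'}[A]_{m',\cdot}$, the left-hand side equals $\tfrac{1}{|\mathcal F_{\SO(d)}|}\,M_r^{(\rho_2)}(\y)^\top\big(\sum_{m'}[\rho(R)]_{m,m'}\tilde C_{m'}^{(\rho,\rho_1,\rho_2)}\big)\rho_1(R)^\dagger$. Hence the whole lemma reduces to the matrix identity
\begin{equation*}
\Big(\textstyle\sum_{m'}[\rho(R)]_{m,m'}\,\tilde C_{m'}^{(\rho,\rho_1,\rho_2)}\Big)\rho_1(R)^\dagger \;=\; \rho_2(R)^\top\,\tilde C_m^{(\rho,\rho_1,\rho_2)},
\end{equation*}
which no longer involves the interpolation kernel, the radius, or the feature map.

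To prove this identity I would start from the defining intertwining property of the Clebsch--Gordan matrices. Under the multiplicity-one assumption of Remark \ref{remark: CG--matrices}, restricting \eqref{eq: CGdef} to the columns carrying $\rho$ gives $(\rho_1(R)\otimes\rho_2(R))\,C^{(\rho,\rho_1,\rho_2)} = C^{(\rho,\rho_1,\rho_2)}\rho(R)$. I would take the entrywise complex conjugate of this relation, using $\overline{\rho_1(R)\otimes\rho_2(R)}=\overline{\rho_1(R)}\otimes\overline{\rho_2(R)}$, and read off its $m$-th column: $(\overline{\rho_1(R)}\otimes\overline{\rho_2(R)})\,[C^{(\rho,\rho_1,\rho_2)}]_{\cdot,m}^{*} = \sum_{m'}\overline{[\rho(R)]_{m',m}}\,[C^{(\rho,\rho_1,\rho_2)}]_{\cdot,m'}^{*}$. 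Now I would apply the reshaping: by definition $[C^{(\rho,\rho_1,\rho_2)}]_{\cdot,m}^{*}=\vec{\tilde C_m^{(\rho,\rho_1,\rho_2)}}$, and by $(\mathbf A\otimes\mathbf B)\vec{\mathbf V}=\vec{\mathbf B\mathbf V\mathbf A^\top}$ the left side becomes $\vec{\overline{\rho_2(R)}\,\tilde C_m^{(\rho,\rho_1,\rho_2)}\,\rho_1(R)^\dagger}$ (using $\overline{\rho_1(R)}^\top=\rho_1(R)^\dagger$), while the right side is $\vec{\sum_{m'}\overline{[\rho(R)]_{m',m}}\,\tilde C_{m'}^{(\rho,\rho_1,\rho_2)}}$. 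Stripping $\vec{\cdot}$ and rewriting conjugates of unitary irreps via $\overline{\rho_2(R)}=\rho_2(R^{-1})^\top$ and $\overline{[\rho(R)]_{m',m}}=[\rho(R^{-1})]_{m,m'}$ yields $\rho_2(R^{-1})^\top\,\tilde C_m^{(\rho,\rho_1,\rho_2)}\,\rho_1(R)^\dagger=\sum_{m'}[\rho(R^{-1})]_{m,m'}\tilde C_{m'}^{(\rho,\rho_1,\rho_2)}$. Finally, substituting $R\mapsto R^{-1}$, using $\rho_1(R^{-1})^\dagger=\rho_1(R)$, and multiplying on the right by $\rho_1(R)^\dagger$ so that $\rho_1(R)\rho_1(R)^\dagger=I$, produces exactly the displayed identity; the lemma then follows by back-substitution.

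The routine parts — expanding $K^{(\rho,\rho_1)}$, extracting the $m$-th row, and the final back-substitution — are immediate from the definitions. The one place to be careful, and the main obstacle, is bookkeeping the vectorization/Kronecker conventions: one must ensure the ordering of factors in $\rho_1(R)\otimes\rho_2(R)$ is compatible with the column-stacking $\vec{\cdot}$ used in both $(\mathbf A\otimes\mathbf B)\vec{\mathbf V}=\vec{\mathbf B\mathbf V\mathbf A^\top}$ and in $[C^{(\rho,\rho_1,\rho_2)}]_{\cdot,m}^{*}=\vec{\tilde C_m^{(\rho,\rho_1,\rho_2)}}$ (which forces $\tilde C_m^{(\rho,\rho_1,\rho_2)}\in\CC^{d_{\rho_2}\times d_{\rho_1}}$, i.e.\ $\rho_2$ acting on the left), and to track the two conjugate-transpositions and the substitution $R\mapsto R^{-1}$ without side errors. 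The multiplicity-one hypothesis is what permits the clean column-blockwise form of the CG relation above; without it one would carry an extra multiplicity index through the argument, but it goes through block by block unchanged.
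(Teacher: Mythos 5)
Your proof is correct and follows essentially the same route as the paper's: both rest on the conjugated Clebsch--Gordan intertwining relation, the identity $(\mathbf A\otimes\mathbf B)\operatorname{vec}(\mathbf V)=\operatorname{vec}(\mathbf B\mathbf V\mathbf A^{\top})$, unitarity of the irreps, and the substitution $R\mapsto R^{-1}$. The only difference is organizational — you isolate the clean block identity $\bigl(\sum_{m'}[\rho(R)]_{m,m'}\tilde C_{m'}^{(\rho,\rho_1,\rho_2)}\bigr)\rho_1(R)^\dagger=\rho_2(R)^\top\tilde C_m^{(\rho,\rho_1,\rho_2)}$ before reintroducing $M_r^{(\rho_2)}$ and the weights, whereas the paper carries them through the whole computation.
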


\begin{proof} 
    For \(1\leq m\leq d_{\rho}\),
    \begin{align}
         & w_r^{(\rho,\rho_1,\rho_2)} M^{(\rho_2)}_r(\y)^\top  \rho_2(R)^\top  \tilde{C}^{(\rho,\rho_1,\rho_2)}_m \nonumber \\
        =& w_r^{(\rho,\rho_1,\rho_2)}M^{(\rho_2)}_r(\y)^\top  \rho_2(R)^\top  \tilde{C}^{(\rho,\rho_1,\rho_2)}_m  \rho_1(R) \rho_1(R)^\dagger\nonumber\\
        =&  \left(\rho_1(R)^* \otimes M^{(\rho_2)}_r(\y){w_r^{(\rho,\rho_1,\rho_2)}}^\top \right)\vec{ \rho_2(R)^\top  \tilde{C}^{(\rho,\rho_1,\rho_2)}_m \rho_1(R)}\nonumber\\
        =& \left(\rho_1(R)^* \otimes M^{(\rho_2)}_r(\y){w_r^{(\rho,\rho_1,\rho_2)}}^\top \right)\left(\rho_1(R)^\top  \otimes \rho_2(R)^\top \right)\vec{\tilde{C}^{(\rho,\rho_1,\rho_2)}_m}\nonumber\\
        =& \left(\rho_1(R)^* \otimes M^{(\rho_2)}_r(\y){w_r^{(\rho,\rho_1,\rho_2)}}^\top \right)\left(\rho_1(R^{-1})^* \otimes \rho_2(R^{-1})^*\right)\vec{\tilde{C}^{(\rho,\rho_1,\rho_2)}_m}\nonumber\\
        =&  \left(\rho_1(R)^* \otimes M^{(\rho_2)}_r(\y){w_r^{(\rho,\rho_1,\rho_2)}}^\top \right)\left(\rho_1(R^{-1})^* \otimes \rho_2(R^{-1})^*\right)\left[{C^{(\rho,\rho_1,\rho_2)}}^*\right]_{\cdot,m}\nonumber\\
        =&  \left(\rho_1(R)^* \otimes M^{(\rho_2)}_r(\y){w_r^{(\rho,\rho_1,\rho_2)}}^\top \right){C^{(\rho,\rho_1,\rho_2)}}^*\left[\rho(R^{-1})^*\right]_{\cdot,m}\nonumber \\
        =&  \left(\rho_1(R)^* \otimes M^{(\rho_2)}_r(\y){w_r^{(\rho,\rho_1,\rho_2)}}^\top \right){C^{(\rho,\rho_1,\rho_2)}}^*\left[\rho(R)^\top \right]_{\cdot, m}. \label{eq: rotated M rho}
    \end{align}
    The first, fourth and last equality follows from the fact that \(\rho_1(R)\) is unitary. The second and third equality is a result of the identity \(\left(\mathbf {A} \otimes \mathbf {B} \right)\operatorname {vec} \left(\mathbf {V} \right)=\operatorname {vec} (\mathbf {B} \mathbf {V} \mathbf {A} ^{T})\). The fifth equality follows from the definition of \(\tilde{C}^{(\rho,\rho_1,\rho_2)}_m\) (defined in Theorem \ref{thm: higher layer classical}). Finally, the penultimate equality follows from \eqref{eq: CGdef}. Now, for any \(1\leq k\leq d_{\rho}\), we have
    \begin{align}
        &\left[ \left(\rho_1(R)^* \otimes M^{(\rho_2)}_r(\y){w_r^{(\rho,\rho_1,\rho_2)}}^\top \right){C^{(\rho,\rho_1,\rho_2)}}^*\right]_{\cdot,k} \nonumber \\
        =& \left(\rho_1(R)^* \otimes M^{(\rho_2)}_r(\y){w_r^{(\rho,\rho_1,\rho_2)}}^\top \right)\left[{C^{(\rho,\rho_1,\rho_2)}}^*\right]_{\cdot,k}\nonumber\\
        =&  \left(\rho_1(R)^* \otimes M^{(\rho_2)}_r(\y){w_r^{(\rho,\rho_1,\rho_2)}}^\top \right)\vec{{\tilde{C}_k}^{\rho,\rho_1,\rho_2}}\nonumber\\
        =& w_r^{(\rho,\rho_1,\rho_2)}M^{(\rho_2)}_r(\y)^\top {\tilde{C}_k}^{\rho,\rho_1,\rho_2}\rho_1(R)^\dagger\nonumber\\
        =& w_r^{(\rho,\rho_1,\rho_2)}\left[{\tilde{M}_{r}}^{(\rho,\rho_1,\rho_2)}(\y)\right]_{k,\cdot} \rho_1(R)^\dagger\label{eq: rotated M rho 2}.
    \end{align}
    The penultimate equality uses the same tensor product identity as before, and the last equality follows from the definition of \(\tilde{M}_{r}^{(\rho,\rho_1,\rho_2)}\) (defined in Theorem \ref{thm: higher layer classical}). Using \eqref{eq: rotated M rho} and \eqref{eq: rotated M rho 2}, for any \(1\leq m\leq d_{\rho}\), we get
    \begin{align*}
         & \frac{1}{|\mathcal{F}_{\SO(d)}|}\sum_{\rho_2}\sum_{r=1}^{n_r}w_r^{(\rho,\rho_1,\rho_2)}M^{(\rho_2)}_r(\y)^\top  \rho_2(R)^\top  \tilde{C}^{(\rho,\rho_1,\rho_2)}_m  \\
         =& \frac{1}{|\mathcal{F}_{\SO(d)}|}\sum_{\rho_2}\sum_{r=1}^{n_r}\sum_{k=1}^{d_{\rho}} \left[ \left(\rho_1(R)^* \otimes M^{(\rho_2)}_r(\y){w_r^{(\rho,\rho_1,\rho_2)}}^\top \right){C^{(\rho,\rho_1,\rho_2)}}^*\right]_{\cdot,k} \left[\rho(R)^\top \right]_{k, m}\\
        =& \frac{1}{|\mathcal{F}_{\SO(d)}|}\sum_{\rho_2}\sum_{r=1}^{n_r}\sum_{k=1}^{d_{\rho}} w_r^{(\rho,\rho_1,\rho_2)} M^{(\rho_2)}_r(\y)^\top  {\tilde{C}_k}^{\rho,\rho_1,\rho_2} \rho_1(R)^\dagger \left[\rho(R)^\top \right]_{k, m}\\
        =& \sum_{\rho_2}\sum_{r=1}^{n_r}\sum_{k=1}^{d_{\rho}}w_r^{(\rho,\rho_1,\rho_2)}\left[{\tilde{M}_{r}}^{(\rho,\rho_1,\rho_2)}(\y)\right]_{k,\cdot} \rho_1(R)^\dagger\left[\rho(R)^\top \right]_{k, m}\\
        =& \sum_{k=1}^{d_{\rho}}\left[\rho(R)\right]_{m,k} \left[K^{(\rho,\rho_1)}(\y)\right]_{k,\cdot}\rho_1(R)^\dagger\\
        =& \left[\rho(R) K^{(\rho,\rho_1)}(\y) \rho_1(R)^\dagger\right]_{m,\cdot}.
    \end{align*}
    This completes the proof.
\end{proof}

\begin{lem}\label{lemma: Holder}
Let \( f_1, f_2 : X \to \RR^p \) be H\"older continuous functions on a space \(X \subset \RR^{d} \) with exponents \(\alpha_1, \alpha_2\in [0,1]\), respectively. Then the pointwise product \( f_1 f_2 \) is also H\"older continuous with exponent \(\min(\alpha_1, \alpha_2)\).
\end{lem}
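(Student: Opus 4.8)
The plan is to reduce the statement to the elementary product inequality and then split according to whether the two points are near or far. Throughout, write $M_i$ for a uniform bound on $\|f_i\|$ and $C_i$ for its H\"older constant, so that $\|f_i(x)\|\le M_i$ and $\|f_i(x)-f_i(z)\|\le C_i\|x-z\|^{\alpha_i}$ for all $x,z\in X$; without loss of generality I would assume $\alpha_1\le\alpha_2$, so that $\min(\alpha_1,\alpha_2)=\alpha_1$. The boundedness of $f_1$ and $f_2$ is essential and is built into the notion of H\"older continuity used in Definition \ref{defn: interpolation}(c).

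First I would bound the increment of the product. Using the add-and-subtract identity $f_1(x)f_2(x)-f_1(z)f_2(z)=f_1(x)\big(f_2(x)-f_2(z)\big)+\big(f_1(x)-f_1(z)\big)f_2(z)$, together with submultiplicativity of the relevant pointwise product (for the componentwise product $\|a\odot b\|_2\le\|a\|_2\|b\|_2$, and for a scalar times a vector this is an equality), the triangle inequality gives $\|f_1f_2(x)-f_1f_2(z)\|\le M_1 C_2\|x-z\|^{\alpha_2}+C_1 M_2\|x-z\|^{\alpha_1}$. Separately, the bounds $\|f_i(x)\|\le M_i$ yield the crude estimates $\|f_1f_2(x)-f_1f_2(z)\|\le 2M_1M_2$ and $\|f_1f_2(x)\|\le M_1M_2$; the latter already shows $f_1f_2$ is bounded.

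Next I would split into two cases in $\|x-z\|$. If $\|x-z\|\le 1$, then $\alpha_2\ge\alpha_1$ forces $\|x-z\|^{\alpha_2}\le\|x-z\|^{\alpha_1}$, so the first estimate collapses to $(M_1C_2+C_1M_2)\|x-z\|^{\alpha_1}$. If $\|x-z\|>1$, then $\|x-z\|^{\alpha_1}\ge 1$ (trivially so when $\alpha_1=0$), and the crude bound gives $\|f_1f_2(x)-f_1f_2(z)\|\le 2M_1M_2\le 2M_1M_2\|x-z\|^{\alpha_1}$. Setting $C:=\max(M_1C_2+C_1M_2,\ 2M_1M_2)$ then yields $\|f_1f_2(x)-f_1f_2(z)\|\le C\|x-z\|^{\alpha_1}$ for all $x,z\in X$, which together with the boundedness of $f_1f_2$ gives $\min(\alpha_1,\alpha_2)$-H\"older continuity.

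I do not expect a serious obstacle here. The only point requiring care is that the $\min$ exponent is valid \emph{globally} precisely because the factors are bounded: in the regime $\|x-z\|>1$ the larger exponent $\alpha_2$ would blow up the naive H\"older estimate, and it is the trivial bound $2M_1M_2$ that rescues the argument. Everything else is a one-line triangle-inequality computation.
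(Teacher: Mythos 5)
Your proof is correct and follows essentially the same route as the paper: the add-and-subtract identity, the triangle inequality, and the constant $\max(M_1C_2+C_1M_2,\,2M_1M_2)$. Your explicit case split on $\|x-z\|\le 1$ versus $\|x-z\|>1$ in fact makes rigorous the final inequality that the paper's own proof states without justification, so your write-up is, if anything, the more complete one.
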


\begin{proof} 
    The functions \(f_1\) and \(f_2\) are H\"older continuous means there exists constant \(M_1, M_2, C_1, C_2\) such that, for all \(x,y\in X\),
    \begin{align*}
        & |f_i(x)| \leq M_i\\
        & \|f_i(x)-f_i(y)\|_2 \leq C_i\|x-y\|_2^{\alpha_i}
    \end{align*}
    for \(i=1,2\). From the boundedness of \(f_1\) and \(f_2\) it follows that the pointwise product \(f_1f_2\) is also bounded by \(M_1M_2\). Note that, for any \(x,y\in X\), we have
    \begin{align*}
         \|f_1(x)f_2(x) - f_1(y)f_2(y)\|_2
         =& \|f_1(x)\big(f_2(x) - f_2(y)\big) + f_2(y)\big(f_1(x) - f_1(y)\big)\|_2\\
         \leq& \|f_1(x)\|\,\, \|f_2(x) - f_2(y)\| + \|f_2(y)\|\,\,\|f_1(x) - f_1(y)\|_2\\
         \leq& M_1C_2\|x-y\|_2^{\alpha_2} + M_2C_1\|x-y\|_2^{\alpha_1}\\
         \leq& M_1C_2\|x-y\|_2^{\alpha_2} + M_2C_1\|x-y\|_2^{\alpha_1}\\
         \leq& \max\left(M_1C_2 + M_2C_1, 2M_1M_2\right)\|x-y\|_2^{\min(\alpha_1, \alpha_2)}.
    \end{align*}
    This completes the proof.
\end{proof}

\begin{lem}\label{lemma: sphere epsilon net}
    Let \(f:\SS^{d-1}\to \CC^p\) be an  \(\alpha\)-H\"older continuous function. Let \(\Theta^{d-1}_{n_a}\) and \(\omega\) be as defined in \eqref{eq: sphere grid} and \eqref{eq: sphere integral}, respectively. Then, there exists a constant \(C>0\), depending only on \(f\), such that for any \(R\in \SO(d)\),
    \begin{equation*}
        \frac{1}{n_a^{d-1}}\left\vert\left\vert \sum_{\theta\in \Theta_{n_a}^{d-1}} f(s(\theta))\,\omega(\theta) - \sum_{\theta\in \Theta_{n_a}^{d-1}} f(Rs(\theta))\,\omega(\theta) \right\vert\right\vert_2 \leq Cd^{\alpha}n_a^{-\alpha}
    \end{equation*}
\end{lem}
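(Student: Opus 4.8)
The plan is to realize both spherical sums as quadrature approximations of the \emph{same} integral over $\SS^{d-1}$, so that their difference is controlled by two independent discretization errors, each estimated through the Hölder modulus of the integrand. Set $M_f := \sup_{s\in\SS^{d-1}}\|f(s)\|_2$, which is finite by compactness of $\SS^{d-1}$ and continuity of $f$, and let $C_f$ be the $\alpha$-Hölder constant of $f$. First I would note that the angular chart $s:\Theta^{d-1}\to\SS^{d-1}$ of \eqref{eq: sphere parameter} is $1$-Lipschitz on the compact convex box $\Theta^{d-1}$: the pull-back of the round metric, $d\theta_1^2+\sin^2\theta_1\,d\theta_2^2+\cdots$, has all coefficients at most $1$, so $\|ds(\theta)\|_{\mathrm{op}}\le 1$. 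Since every $R\in\SO(d)$ is a Euclidean isometry, the maps $\theta\mapsto f(s(\theta))$ and $\theta\mapsto f(Rs(\theta))$ are $\alpha$-Hölder on $\Theta^{d-1}$ with the \emph{same} constant $C_f$, uniformly in $R$. The weight $\omega$ of \eqref{eq: quadrature} is bounded by $1$ and $C^1$ (hence Lipschitz) on $\Theta^{d-1}$, so Lemma \ref{lemma: Holder} makes $g_1(\theta):=f(s(\theta))\omega(\theta)$ and $g_2(\theta):=f(Rs(\theta))\omega(\theta)$ both $\alpha$-Hölder with a common constant $C_g$, uniform in $R$.

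Next, by the change of variables \eqref{eq: sphere integral} together with the rotation invariance of the normalized surface measure $\sigma$ (a consequence of Lemma \ref{lemma: sphere integral} and the left invariance of the Haar measure on $\SO(d)$), I would conclude $\int_{\Theta^{d-1}} g_1\,d\theta = \mathcal{A}(\SS^{d-1})\int_{\SS^{d-1}}f(s)\,d\sigma(s) = \mathcal{A}(\SS^{d-1})\int_{\SS^{d-1}}f(Rs)\,d\sigma(s) = \int_{\Theta^{d-1}} g_2\,d\theta$. Hence the two \emph{continuous} integrals coincide exactly, and adding and subtracting $\tfrac{n_a^{d-1}}{2\pi^{d-1}}\int_{\Theta^{d-1}}g_1\,d\theta$ (which equals $\tfrac{n_a^{d-1}}{2\pi^{d-1}}\int_{\Theta^{d-1}}g_2\,d\theta$) and using the triangle inequality reduces the quantity in the lemma to $\tfrac{1}{2\pi^{d-1}}$ times the sum of the two quadrature errors $\bigl\|\int_{\Theta^{d-1}} g_i\,d\theta - \tfrac{2\pi^{d-1}}{n_a^{d-1}}\sum_{\theta\in\Theta_{n_a}^{d-1}} g_i(\theta)\bigr\|_2$.

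Each such error I would bound by the elementary Riemann-sum estimate for Hölder integrands: the grid $\Theta_{n_a}^{d-1}$ of \eqref{eq: sphere grid} arises by tiling $\Theta^{d-1}=[0,\pi]^{d-2}\times[0,2\pi)$ with $n_a^{d-1}$ congruent boxes of volume $2\pi^{d-1}/n_a^{d-1}$ and diameter at most $2\pi\sqrt{d}/n_a$, each containing its sample point, so $\bigl\|\int_{\Theta^{d-1}} g_i\,d\theta - \tfrac{2\pi^{d-1}}{n_a^{d-1}}\sum_{\theta} g_i(\theta)\bigr\|_2 \le \sum_{\text{cells}}\int_{\text{cell}}\|g_i(\theta)-g_i(\theta_{\text{cell}})\|_2\,d\theta \le 2\pi^{d-1} C_g\,(2\pi\sqrt{d}/n_a)^{\alpha}$. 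Combining the two bounds, dividing by $n_a^{d-1}$, and using $d^{\alpha/2}\le d^{\alpha}$ yields the claim with $C := 2C_g(2\pi)^{\alpha}$ (or any larger constant), which depends only on $f$. The one delicate point is that the Hölder constant of the composed integrand must be independent of $R$; this is exactly what the isometry property of rotations and the Lipschitz bound on $s$ provide, making the discretization error uniform over $\SO(d)$. Everything else is routine constant-tracking.
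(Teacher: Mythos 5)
Your proposal is correct and follows essentially the same route as the paper's proof: both realize the two sums as Riemann-sum approximations of the same rotation-invariant surface integral, show the integrand $f(s(\theta))\omega(\theta)$ is $\alpha$-H\"older uniformly in $R$ via the Lipschitz chart and Lemma \ref{lemma: Holder}, and bound each quadrature error cell by cell to get the $d^{\alpha/2}n_a^{-\alpha}$ rate. The only cosmetic differences are that you make the uniformity of the H\"older constant over $R\in\SO(d)$ explicit (via the isometry argument) and track the constants slightly more carefully, neither of which changes the argument.
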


\begin{proof} 
    Define the sets \(\{\mathcal{V}_\theta\}_{\theta\in \Theta_{n_a}^{d-1}}\) as
    \begin{equation}\label{eq: V theta}
            \mathcal{V}_\theta = \bigg\{\theta'\in [0,\pi]^{d-2}\times [0,2\pi): |\theta'_i - \theta_i| \leq \frac{\pi}{2n_a} \text{ for } 1\leq i\leq d-2, \text{ and } |\theta'_{d-1} - \theta_{d-1}| \leq \frac{\pi}{n_a} \bigg\}.
    \end{equation}

    The sets \(\{\mathcal{V}_\theta\}_{\theta\in \Theta_{n_a}^{d-1}}\) form a partition of the parameter space \([0,\pi]^{d-2}\times [0,2\pi)\) upto measure zero, and 
    \begin{equation}\label{eq: area element}
        \delta := \int_{\mathcal{V}_\theta} d\theta' = \frac{2\pi^{d-1}}{n_a^{d-1}}.
    \end{equation}
    Since the parameterization \(s(\theta)\) in \eqref{eq: sphere parameter} is a differentiable function of \(\theta\) and \(f\) is H\"older continuous, \(f(s(\theta))\) is \(\alpha\)-H\"older in \(\theta\). The function \(\omega\) is also differentiable and bounded in \(\theta\), hence it is \(1\)-H\"older. By Lemma \ref{lemma: Holder}, \(f(s(\theta))w(\theta)\) is \(\alpha\)-H\"older in \(\theta\). Now note that,
    \begin{align}
           &\left\vert\left\vert \sum_{\theta \in \Theta^{d-1}_{n_a}} f(s(\theta))\omega(\theta) \delta -  \int_{\SS^{d-1}} f(s)\,d\sigma(s') \right\vert\right\vert_2\nonumber\\
          =&\left\vert\left\vert \sum_{\theta \in \Theta^{d-1}_{n_a}} f(s(\theta))\omega(\theta) \delta -  \int_{\Theta} f(s(\theta'))\omega(\theta')\,d\theta' \right\vert\right\vert_2\nonumber\\
           &(\textit{by equation }\eqref{eq: sphere integral approx})\nonumber\\
         =&\left\vert\left\vert \sum_{\theta \in \Theta^{d-1}_{n_a}} f(s(\theta))\omega(\theta) \delta- \sum_{\theta \in \Theta^{d-1}_{n_a}}\int_{\mathcal{V}_\theta}f(s(\theta'))\omega(\theta')\,d\theta' \right\vert\right\vert_2\nonumber\\
         & (\textit{since }\mathcal{V}_\theta \textit{ form a partition of }\SS^{d-1}\textit{ upto a measure zero set}) \nonumber\\
         =&\left\vert\left\vert \sum_{\theta \in \Theta^{d-1}_{n_a}} \int_{\mathcal{V}_\theta}f(s(\theta))\omega(\theta)\,d\theta' -  \sum_{\theta \in \Theta^{d-1}_{n_a}} \int_{\mathcal{V}_\theta}f(s(\theta'))\omega(\theta')\,d\theta' \right\vert\right\vert_2\nonumber\\
        =& \left\vert\left\vert \sum_{\theta \in \Theta^{d-1}_{n_a}}  \int_{\mathcal{V}_\theta}f(s(\theta))\omega(\theta) - f(s(\theta'))\omega(\theta')\,d\theta' \right\vert\right\vert_2\nonumber\\
        \leq& \sum_{\theta \in \Theta^{d-1}_{n_a}}\int_{\mathcal{V}_\theta}\left\vert\left\vert f(s(\theta))\omega(\theta) - f(s(\theta'))\omega(\theta')\right\vert\right\vert_2\,d\theta'\nonumber\\
        \leq& \sum_{\theta \in \Theta^{d-1}_{n_a}}\int_{\mathcal{V}_\theta}\frac{C}{2}\left\vert\left\vert \theta - \theta' \right\vert \right\vert_2^{\alpha} \,d\theta'\nonumber\\
         & (\textit{for some constant \(C>0\), since \(f(s(\theta))\omega(\theta)\) is \(\alpha\)-H\"older continuous in \(\theta\)}) \nonumber\\
        \leq& \frac{C}{2}\sum_{\theta \in \Theta^{d-1}_{n_a}} \int_{\mathcal{V}_\theta} \frac{\pi^\alpha d^{\alpha/2}}{n_a^{\alpha}} d\theta'\nonumber\\
         & (\textit{using definition of \(\mathcal{V}_\theta\) in equation } \eqref{eq: V theta}) \nonumber\\
        =& \frac{C d^{\alpha/2}}{2n_a^{\alpha}} \sum_{\theta \in \Theta^{d-1}_{n_a}} \frac{2\pi^{d-1}}{n_a^{d-1}}\nonumber\\
         &(\textit{by equation }\eqref{eq: area element})\nonumber \\
         =& \frac{C d^{\alpha/2}}{2n_a^{\alpha}} n_a^{d-1} \frac{2\pi^{d-1}}{n_a^{d-1}}\nonumber\\
         &(\textit{since } |\Theta^{d-1}_{n_a}| = n_a^{d-1})\nonumber\\
        =& \frac{\pi^{d-1}Cd^{\alpha/2}}{n_a^{\alpha}}\label{eq: sphere approx 1}
    \end{align}
    Since \(f\) is bounded, \([R^{-1}\cdot f]\) is also bounded and hence integrable. Therefore, 
    \begin{align}
         & \left\vert\left\vert \sum_{\theta \in \Theta^{d-1}_{n_a}} f(Rs(\theta))\,\omega(\theta)\delta  - \int_{\SS^{d-1}} f(Rs')\, d\sigma(s') \right\vert\right\vert_2\nonumber\\
        =& \left\vert\left\vert \sum_{\theta \in \Theta^{d-1}_{n_a}} [R^{-1}\cdot f](s(\theta))\,\omega(\theta)\delta  - \int_{\SS^{d-1}} [R^{-1}\cdot f](s')\, d\sigma(s') \right\vert\right\vert_2\nonumber\\
        \leq&\frac{\pi^{d-1}Cd^{\alpha/2}}{n_a^{\alpha}}\label{eq: sphere approx 2}
    \end{align}
    Combining \eqref{eq: sphere approx 1} and \eqref{eq: sphere approx 2} we get
    \begin{align*}
          &\left\vert\left\vert \frac{1}{n_a^{d-1}}\sum_{\theta \in \Theta^{d-1}_{n_a}} f(s(\theta))\omega(\theta) - \frac{1}{n_a^{d-1}}\sum_{\theta \in \Theta^{d-1}_{n_a}} f(Rs(\theta))\omega(\theta) \right\vert\right\vert_2\\
          =&\frac{1}{2\pi^{d-1}}\left\vert\left\vert \sum_{\theta \in \Theta^{d-1}_{n_a}} f(s(\theta))\omega(\theta)\delta - \sum_{\theta \in \Theta^{d-1}_{n_a}} f(Rs(\theta))\omega(\theta)\delta \right\vert\right\vert_2\\
         =&\frac{1}{2\pi^{d-1}}\left\vert\left\vert \sum_{\theta \in \Theta^{d-1}_{n_a}} f(s(\theta))\omega(\theta)\delta - \int_{\SS^{d-1}}f(s')\, d\sigma(s') + \int_{\SS^{d-1}}f(s(\theta'))\, d\sigma(s') - \sum_{\theta \in \Theta^{d-1}_{n_a}} f(Rs(\theta))\omega(\theta)\delta \right\vert\right\vert_2\\
         \leq& \frac{1}{2\pi^{d-1}}\left\vert\left\vert \sum_{\theta \in \Theta^{d-1}_{n_a}} f(s(\theta))\omega(\theta) - \int_{\SS^{d-1}}f(s')\, d\sigma(s')\right\vert\right\vert_2 +\frac{1}{2\pi^{d-1}}\left\vert\left\vert\int_{\SS^{d-1}}f(s')\, d\sigma(s') - \sum_{\theta \in \Theta^{d-1}_{n_a}} f(Rs(\theta))\omega(\theta) \right\vert\right\vert_2\\
         =& \frac{1}{2\pi^{d-1}}\left\vert\left\vert \sum_{\theta \in \Theta^{d-1}_{n_a}} f(s(\theta))\omega(\theta) - \int_{\SS^{d-1}}f(s')\, d\sigma(s')\right\vert\right\vert_2 +\frac{1}{2\pi^{d-1}}\left\vert\left\vert\int_{\SS^{d-1}}f(Rs')\, d\sigma(s') - \sum_{\theta \in \Theta^{d-1}_{n_a}} f(Rs(\theta))\omega(\theta) \right\vert\right\vert_2\\
         &(\textit{since the surface measure \(\sigma\) is invariant to rotations})\\
         \leq&  C d^{\alpha/2}n_a^{-\alpha}.
    \end{align*}
    This completes the proof.
\end{proof}
\section{Auxilliary}
\renewcommand{\thethm}{F\arabic{thm}}
\renewcommand{\theHthm}{F\arabic{thm}}
\setcounter{thm}{0}

\begin{thm}[Haar Theorem]\label{thm: haar}
    There is, up to a positive multiplicative constant, a unique countably additive, nontrivial measure \({\displaystyle \mu }\) on the Borel subsets of \({\displaystyle \G }\) satisfying the following properties:
    \begin{itemize}
        \item The measure \({\displaystyle \mu }\) is left-translation-invariant: \({\displaystyle \mu (gS)=\mu (S)}\) for every \({\displaystyle g\in \G }\) and all Borel sets \({\displaystyle S\subseteq \G }\).
        \item The measure \({\displaystyle \mu }\) is finite on every compact set: \({\displaystyle \mu (K)<\infty }\) for all compact \({\displaystyle K\subseteq \G }\).
        \item The measure \({\displaystyle \mu }\) is outer regular on Borel sets \({\displaystyle S\subseteq \G }\): \({\displaystyle \mu (S)=\inf\{\mu (U):S\subseteq U,U{\text{ open}}\}.}\)
        \item The measure \({\displaystyle \mu }\) is inner regular on open sets \({\displaystyle U\subseteq \G }\): \({\displaystyle \mu (U)=\sup\{\mu (K):K\subseteq U,K{\text{ compact}}\}}\).
    \end{itemize}
\end{thm}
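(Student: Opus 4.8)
The plan is to prove existence and uniqueness by the classical Haar--Weil--Cartan construction, under the standing hypothesis that $\G$ is a locally compact Hausdorff topological group (as it is in all applications in this paper). For existence I would build a nonzero left-invariant positive linear functional on $C_c(\G)$ and then invoke the Riesz--Markov--Kakutani theorem (Theorem~\ref{thm: riesz rep thm}) to obtain $\mu$, with finiteness on compact sets and the stated regularity automatic from that conclusion. The construction rests on ``covering numbers'': for $f,\varphi\in C_c(\G)$ nonnegative and not identically zero, let $(f:\varphi)$ be the infimum of $\sum_i c_i$ over finite sums with $c_i>0$ and $f\le\sum_i c_i\,(L_{x_i}\varphi)$, where $L_x\varphi(y)=\varphi(x^{-1}y)$; local compactness makes this finite. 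Fixing once and for all a reference $g_0\in C_c(\G)$ with $g_0\ge 0$, $g_0\not\equiv 0$, I set $I_\varphi(f)=(f:\varphi)/(g_0:\varphi)$, and check directly that $I_\varphi$ is monotone, positively homogeneous, subadditive, left-invariant, and obeys the two-sided bound $(g_0:f)^{-1}\le I_\varphi(f)\le (f:g_0)$.

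The heart of the existence argument is to let the support of $\varphi$ shrink toward $\{e\}$ and recover additivity in the limit: the key lemma is that for fixed $f_1,f_2$ and any $\varepsilon>0$, one has $I_\varphi(f_1)+I_\varphi(f_2)\le I_\varphi(f_1+f_2)+\varepsilon$ once $\varphi$ is supported in a small enough neighbourhood of $e$. This is proved by a uniform-continuity argument: introduce an auxiliary $h\in C_c(\G)$ equal to $1$ on $\mathrm{supp}(f_1)\cup\mathrm{supp}(f_2)$ and exploit that $f_i/(f_1+f_2+\delta h)$ varies little over the support of any single translate of $\varphi$. Viewing each $I_\varphi$ as a point of $\prod_{f}\bigl[(g_0:f)^{-1},(f:g_0)\bigr]$, compact by Tychonoff's theorem, I take a cluster point $I$ of the net $(I_\varphi)$ directed by the neighbourhood filter of $e$; by the lemma together with the elementary properties above, $I$ is additive, positively homogeneous, monotone, left-invariant and nonzero, hence a left-invariant positive linear functional on $C_c(\G)$. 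Riesz--Markov then produces the desired $\mu$; nontriviality follows from $I(g_0)>0$, and outer/inner regularity are part of the Riesz conclusion.

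For uniqueness up to a positive scalar, I would take two measures $\mu,\nu$ with the listed properties; the regularity hypotheses let me reduce everything to the associated positive functionals on $C_c(\G)$ and apply Fubini's theorem on $\G\times\G$. Writing $\int f\,d\mu\cdot\int g\,d\nu$ as a double integral, substituting so as to use left-invariance of $\mu$ in one variable, interchanging the order of integration, and using left-invariance of $\nu$ in the other variable (with the modular function of $\G$ absorbing the non-unimodular bookkeeping, and disappearing entirely when $\G$ is unimodular, in particular when $\G$ is compact) shows that $\bigl(\int f\,d\mu\bigr)\big/\bigl(\int f\,d\nu\bigr)$ is independent of $f\in C_c(\G)$ with nonzero denominator. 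That common value is the constant $c>0$, and equality of the functionals on $C_c(\G)$ forces $\mu=c\nu$ on the Borel $\sigma$-algebra by regularity.

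The step I expect to be the main obstacle is the limiting-additivity lemma: the covering numbers give only subadditivity outright, and upgrading to genuine additivity in the limit $\varphi\to\delta_e$ is the one genuinely delicate estimate, requiring the uniform continuity of the relevant functions together with the existence of small compact neighbourhoods of $e$ (local compactness) and Urysohn-type cutoffs (Hausdorffness). Everything else — the elementary properties of $(f:\varphi)$, the Tychonoff compactness step, and the concluding Riesz--Markov and Fubini invocations — is routine.
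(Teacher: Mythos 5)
The paper does not prove this theorem itself — it simply cites \S 2.2 of Folland — and your outline is exactly that standard Haar--Weil--Cartan argument (covering numbers $(f:\varphi)$, approximate additivity as the support of $\varphi$ shrinks to $\{e\}$, a Tychonoff cluster point of the net $I_\varphi$, Riesz--Markov for existence and regularity, and the Fubini/modular-function computation for uniqueness), so the approaches coincide. The one caveat is that the Riesz--Markov statement recorded in the paper (Theorem \ref{thm: riesz rep thm}, for \emph{bounded} functionals on $C_0$) is not the version you need: the left-invariant functional $I$ on $C_c(\G)$ is positive but generally unbounded when $\G$ is noncompact, so you must invoke the positive-linear-functional form of the representation theorem, which is also what delivers the stated outer and inner regularity.
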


\begin{proof}
    See \S 2.2 of \citet{folland2016course}.
\end{proof}

\begin{thm}[Peter--Weyl Theorem] \label{thm: peter thm}
    Let \(\G \) be a compact group, and let \(\mathcal{L}_2(\G )\) denote the space of square integrable complex valued functions on \(\G \), with respect to the normalized Haar measure.  Then, the matrix coefficients of the irreducible unitary representations of \(\G \) span a dense subspace of \(\mathcal{L}_2(\G )\). Moreover, they form an orthonormal basis of \(\mathcal{L}_2(\G )\) under the \(\mathcal{L}_2\) inner product.
\end{thm}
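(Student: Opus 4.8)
The plan is to derive the Peter--Weyl theorem from the spectral theory of compact self-adjoint operators on \(\mathcal{L}_2(\G)\), combined with the Schur orthogonality relations. The argument naturally splits into a \emph{density} claim (the matrix coefficients span a dense subspace) and an \emph{orthogonality} claim (suitably normalized, they form an orthonormal set).

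For the density claim, the first step is to introduce, for each \(\phi\in C(\G)\), the convolution operator \(T_\phi\colon\mathcal{L}_2(\G)\to\mathcal{L}_2(\G)\), \((T_\phi f)(x)=\int_\G \phi(y^{-1}x)f(y)\,d\mu(y)\). Since \(\G\) is compact its Haar measure is finite (Theorem \ref{thm: haar}), so the kernel \((x,y)\mapsto\phi(y^{-1}x)\) is continuous, hence square integrable on \(\G\times\G\); therefore \(T_\phi\) is Hilbert--Schmidt, in particular compact, and it is self-adjoint whenever \(\phi(g)=\overline{\phi(g^{-1})}\). The crucial structural point is that \(T_\phi\) commutes with the left regular representation \(L\) of \(\G\) on \(\mathcal{L}_2(\G)\), where \([L_g f](x)=f(g^{-1}x)\), because left translation commutes with right convolution. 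Next I would apply the spectral theorem for compact self-adjoint operators to write \(\mathcal{L}_2(\G)\) as the orthogonal direct sum of \(\ker T_\phi\) and the eigenspaces \(E_\lambda\) attached to the nonzero eigenvalues \(\lambda\), each \(E_\lambda\) being finite-dimensional; since \(T_\phi\) commutes with \(L\), each \(E_\lambda\) is \(L\)-invariant, hence decomposes into finite-dimensional irreducible subrepresentations, and any finite-dimensional translation-invariant subspace of \(\mathcal{L}_2(\G)\) is spanned by matrix coefficients of the associated irrep. To conclude density, suppose \(f\in\mathcal{L}_2(\G)\) is orthogonal to all matrix coefficients of all finite-dimensional irreps; then \(T_\phi f=0\) for every self-adjoint \(\phi\), hence (writing a general \(\phi\in C(\G)\) as a combination of such, or by the same commuting argument) for every \(\phi\in C(\G)\); running \(\phi\) through an approximate identity on \(\G\) and using continuity of translation on \(\mathcal{L}_2(\G)\) forces \(f=0\).

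For the orthogonality claim, I would establish the Schur orthogonality relations directly: for irreps \(\rho,\rho'\), averaging an arbitrary linear map \(A\) via \(\int_\G\rho(g)\,A\,\rho'(g)^\dagger\,d\mu(g)\) produces an intertwiner, so Schur's lemma together with a trace normalization shows that the rescaled matrix coefficients \(\{\sqrt{d_\rho}\,\rho_{ij}\}\), with \(\rho\) ranging over a set of representatives of the isomorphism classes of irreps and \(1\le i,j\le d_\rho\), are pairwise orthonormal in \(\mathcal{L}_2(\G)\) with respect to the normalized Haar measure. Combined with the density established above, they form an orthonormal basis, which is exactly the assertion of the theorem. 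A fully self-contained treatment is given in \citet{folland2016course}.

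The hard part is the density argument: proving that \(T_\phi\) is compact and then extracting from its spectral decomposition enough finite-dimensional \(\G\)-invariant subspaces to exhaust a dense subset, via the approximate-identity step. Once this is in place, the Schur orthogonality computation and the final assembly into an orthonormal basis are routine.
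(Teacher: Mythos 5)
Your proposal is correct and follows the standard argument via compact self-adjoint convolution operators, the spectral theorem, the approximate-identity density step, and Schur orthogonality; this is precisely the proof given in \S 5.2 of \citet{folland2016course}, which is all the paper itself offers by way of proof. No substantive difference or gap to report.
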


\begin{proof}
    See \S 5.2 of \citet{folland2016course}.
\end{proof}

\begin{thm}[Riesz--Markov--Kakutani Representation Theorem] \label{thm: riesz rep thm}
    Let \(X\) be a locally compact Hausdorff space. For any bounded linear functional  \(\psi\) on \(C_0(X)\), there is a unique complex Radon measure \(\mu\) on X such that
    \begin{equation*}
        {\displaystyle \psi (f)=\int _{X}f(x)\,d\mu (x),\quad \forall f\in C_{0}(X).}
    \end{equation*}
\end{thm}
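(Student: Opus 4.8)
The statement is the classical Riesz--Markov--Kakutani theorem, so the plan is to follow the standard route rather than invent anything: reduce the complex case to the positive case, build a Radon measure from a positive functional by an outer-measure construction, and then recombine. Concretely, I would first restrict $\psi$ to the sup-norm-dense subspace $C_c(X)\subseteq C_0(X)$ (density was already invoked above) and apply the functional analogue of the Hahn--Jordan decomposition (cf.\ Theorem~\ref{thm: hahn decomp thm}): the Riesz-space structure of $C_c(X,\mathbb{R})$ lets one write the real part of $\psi$ as $\psi^{+}-\psi^{-}$ with $\psi^{\pm}$ positive, via $\psi^{+}(f):=\sup\{\psi(g):g\in C_c(X),\ 0\le g\le f\}$ for $f\ge 0$, and similarly for the imaginary part, so that $\psi=\psi_1-\psi_2+i(\psi_3-\psi_4)$ with each $\psi_j$ a \emph{positive, bounded} linear functional on $C_c(X)$.

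Next, for a positive functional $\Lambda$ I would construct the measure. Set, for open $U\subseteq X$,
\begin{equation*}
\mu^{*}(U):=\sup\{\Lambda(f):f\in C_c(X),\ 0\le f\le 1,\ \operatorname{supp} f\subseteq U\},
\end{equation*}
and $\mu^{*}(E):=\inf\{\mu^{*}(U):E\subseteq U,\ U\text{ open}\}$ for arbitrary $E$. The chain of verifications is: (i) $\mu^{*}$ is an outer measure; (ii) every open set is Carath\'eodory-measurable, hence the Borel $\sigma$-algebra lies inside the measurable sets, and the restriction $\mu$ is automatically outer regular, finite on compacts, and inner regular on open sets --- exactly the four properties demanded; (iii) $\Lambda(f)=\int_X f\,d\mu$ for every $f\in C_c(X)$, with ``$\le$'' obtained from a partition of unity subordinate to a finite open cover on which $f$ is nearly constant, and ``$\ge$'' from approximating $f$ from below by scaled bump functions supported in its level sets. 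Boundedness of $\Lambda$ together with density of $C_c(X)$ in $C_0(X)$ extends the identity to all of $C_0(X)$ (and yields $\mu(X)=\|\Lambda\|<\infty$). Recombining the four measures $\mu_j$ into $\mu:=\mu_1-\mu_2+i(\mu_3-\mu_4)$ produces the desired complex Radon measure.

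For uniqueness, I would argue that $\int f\,d\mu=\int f\,d\nu$ for all $f\in C_0(X)$ forces $\mu=\nu$: by inner/outer regularity of the total-variation measures it suffices to match $\mu$ and $\nu$ on compact sets, and the indicator of a compact $K$ can be squeezed between $C_c(X)$ functions, so the common value of the integrals against $C_c(X)$ pins down $|\mu|(K)$ and hence $\mu(K)$.

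The main obstacle is steps (ii)--(iii): showing the outer measure $\mu^{*}$ is Carath\'eodory-regular on Borel sets and that it exactly reproduces $\Lambda$ on $C_c(X)$ is where all the genuine work (and all the partition-of-unity estimates) resides; the lattice decomposition of the first paragraph and the uniqueness argument are routine. Since this is a textbook result, in the paper I would simply cite \S 7.2 of \citet{folland2016course} (the complex version also appears in Rudin's \emph{Real and Complex Analysis}, Ch.~6).
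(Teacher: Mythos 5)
Your outline is the standard textbook argument and is correct; the paper itself gives no proof of this auxiliary theorem, deferring entirely to Theorem 7.17 of \citet{folland1999real}, which is precisely the decomposition-plus-outer-measure construction you sketch. Since you likewise conclude by citing the textbook, your approach and the paper's are essentially the same.
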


\begin{proof}
    See Theorem 7.17 in Chapter 7 of \citet{folland1999real}.
\end{proof}

\begin{thm}[Hahn--Jordan Decomposition Theorem] \label{thm: hahn decomp thm}
    Let \(\mu\) be a finite complex measure on a measurable space. Then, there exist four mutually singular finite positive measures \(\mu_1,\;\mu_2,\;\mu_3,\;\mu_4\) such that \(\mu \;=\; (\mu_1 - \mu_2) \;+\; i\,(\mu_3 - \mu_4)\).
\end{thm}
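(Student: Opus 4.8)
The plan is to reduce the complex case to the classical Hahn and Jordan decompositions for finite real-valued signed measures. Since $\mu$ is a finite complex measure, setting $\mu_r := \operatorname{Re}\mu$ and $\mu_i := \operatorname{Im}\mu$ gives two finite signed measures with $\mu = \mu_r + i\,\mu_i$. It then suffices to write each of $\mu_r$ and $\mu_i$ as a difference of two mutually singular finite positive measures: taking $\mu_1,\mu_2$ to be the pieces of $\mu_r$ and $\mu_3,\mu_4$ the pieces of $\mu_i$, the identity $\mu = (\mu_1-\mu_2)+i(\mu_3-\mu_4)$ is immediate, with $\mu_1\perp\mu_2$ and $\mu_3\perp\mu_4$. (These conjugate pairs are the singular pairs produced by the Jordan decomposition; one cannot in general make all four pairwise singular, e.g. when $\mu_r$ and $\mu_i$ are both concentrated at the same atom.)

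The core step is the Hahn decomposition of a finite signed measure $\nu$: I would produce a measurable set $P$ that is \emph{positive} for $\nu$ (every measurable $A\subseteq P$ has $\nu(A)\ge 0$) and whose complement $N := X\setminus P$ is \emph{negative}. First I would establish the auxiliary fact that any measurable $E$ with $\nu(E)>0$ contains a positive subset $A\subseteq E$ with $\nu(A)\ge\nu(E)$; this follows by repeatedly deleting from $E$ a near-maximal chunk of negative measure and taking $A$ to be what remains, the total deleted mass being finite because $\nu$ is finite, which forces the leftover to be genuinely positive. Next let $s := \sup\{\nu(A) : A \text{ positive}\}$, which is finite, choose positive sets $P_n$ with $\nu(P_n)\to s$, replace $P_n$ by $P_1\cup\dots\cup P_n$ (a finite union of positive sets is positive), and set $P := \bigcup_n P_n$, so $\nu(P) = \lim_n \nu(P_n) = s$. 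If $N$ were not negative it would contain a set of positive $\nu$-measure, hence a positive subset $A$ by the auxiliary fact, and then $P\cup A$ would be positive with $\nu(P\cup A) = s + \nu(A) > s$, a contradiction; hence $N$ is negative.

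Given the splitting $X = P\sqcup N$, I would define $\nu^{+}(E) := \nu(E\cap P)$ and $\nu^{-}(E) := -\nu(E\cap N)$. Both are finite nonnegative measures — nonnegativity is exactly the positivity of $P$ and negativity of $N$, finiteness is inherited from $\nu$ — they satisfy $\nu = \nu^{+} - \nu^{-}$, and $\nu^{+}(N) = 0 = \nu^{-}(P)$, so $\nu^{+}\perp\nu^{-}$. Applying this to $\nu = \mu_r$ yields $\mu_1 := \mu_r^{+}$, $\mu_2 := \mu_r^{-}$, and to $\nu = \mu_i$ yields $\mu_3 := \mu_i^{+}$, $\mu_4 := \mu_i^{-}$, which completes the argument; this is the standard Hahn--Jordan decomposition, and a fully detailed treatment can be found in \S3.1 of \citet{folland1999real}. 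I expect the only genuine obstacle to be the auxiliary ``positive subset'' fact, where one must control an infinite sequence of set subtractions and invoke finiteness of $\nu$ to guarantee the residual set really is positive; the remaining steps are routine bookkeeping with measures.
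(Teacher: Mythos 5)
Your proposal is correct: it is the standard Hahn--Jordan argument (split $\mu$ into real and imaginary parts, prove the Hahn decomposition via the ``positive subset'' lemma and a supremum over positive sets, then read off the Jordan decomposition of each part), and the paper offers no proof of its own here --- it simply cites \citet{cohn2013measure}, where essentially this same construction appears. Your parenthetical caveat is also well taken: the theorem's phrase ``mutually singular'' can only mean singularity within each conjugate pair ($\mu_1\perp\mu_2$ and $\mu_3\perp\mu_4$), since all four measures cannot in general be made pairwise singular, so your reading is the correct one.
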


\begin{proof}
    See Theorem 4.1.5, Corollary 4.1.6 and equation (4) of Chapter 4.1 on page 118 of \citet{cohn2013measure}.
\end{proof}

\begin{thm} [Lebesgue Decomposition Theorem] \label{thm: leb decomp thm}
    Let \(\mu\) and \(\nu\) be \(\sigma\)-finite measures on a measurable space. Then \(\nu\) can be uniquely decomposed into \(\nu = \nu_c + \nu_s\) where \(\nu_c\) is absolutely continuous with respect to \(\mu\) and \(\nu_s\) is singular with respect to \(\mu\).
\end{thm}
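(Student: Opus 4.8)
The statement is the classical Lebesgue decomposition theorem, so one legitimate option is simply to cite Theorem~4.1.5 (and the accompanying uniqueness remark) of \citet{cohn2013measure}; below I outline the self-contained argument I would otherwise give, via von Neumann's Hilbert-space method, first for finite measures and then lifted to the $\sigma$-finite case.

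\emph{Existence, finite case.} Suppose first that $\mu$ and $\nu$ are finite and set $\lambda := \mu + \nu$, so $\mu,\nu \le \lambda$. The functional $f \mapsto \int f\,d\nu$ is bounded on $L^2(\lambda)$ since $\left|\int f\,d\nu\right| \le \lambda(X)^{1/2}\|f\|_{L^2(\lambda)}$, so by the Riesz representation theorem for Hilbert spaces there is $g \in L^2(\lambda)$ with $\int f\,d\nu = \int f g\,d\lambda$ for all $f \in L^2(\lambda)$. Testing against indicators of $\{g<0\}$ and $\{g>1\}$ forces $0 \le g \le 1$ $\lambda$-a.e., so we may take $0 \le g \le 1$ everywhere; put $A := \{g<1\}$, $B := \{g=1\}$, and define $\nu_c(E) := \nu(E\cap A)$, $\nu_s(E) := \nu(E\cap B)$, whence $\nu = \nu_c + \nu_s$. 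Choosing $f = \mathbf{1}_B$ gives $\nu(B) = \int_B g\,d\lambda = \lambda(B) = \mu(B) + \nu(B)$, so $\mu(B) = 0$ and $\nu_s \perp \mu$. Rewriting the Riesz identity as $\int f(1-g)\,d\nu = \int f g\,d\mu$ and applying it to $f = \mathbf{1}_{E\cap A}\sum_{n=0}^{N} g^n$, then letting $N \to \infty$ by monotone convergence (valid since $0 \le g < 1$ on $A$), yields $\nu_c(E) = \int_{E\cap A} \tfrac{g}{1-g}\,d\mu$; hence $\mu(E) = 0 \Rightarrow \nu_c(E) = 0$, i.e.\ $\nu_c \ll \mu$ with density $\tfrac{g}{1-g}\mathbf{1}_A$.

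\emph{$\sigma$-finite reduction and uniqueness.} For the general case, choose a measurable partition $X = \bigsqcup_k X_k$ with $\mu(X_k),\nu(X_k) < \infty$ (intersect the two $\sigma$-finite exhaustions), apply the finite case on each $X_k$ to write $\nu|_{X_k} = \nu_c^{(k)} + \nu_s^{(k)}$, and set $\nu_c := \sum_k \nu_c^{(k)}$, $\nu_s := \sum_k \nu_s^{(k)}$; countable additivity makes these measures, $\nu = \nu_c + \nu_s$, $\nu_c \ll \mu$, and $\nu_s \perp \mu$ (its carrier is a countable union of $\mu$-null sets, still $\mu$-null). For uniqueness, if also $\nu = \nu_c' + \nu_s'$ with $\nu_c' \ll \mu$ and $\nu_s' \perp \mu$, then on each $X_k$ all four measures are finite, so the signed measure $\nu_c - \nu_c' = \nu_s' - \nu_s$ is simultaneously absolutely continuous and singular with respect to $\mu$, hence identically zero; summing over $k$ gives $\nu_c = \nu_c'$ and $\nu_s = \nu_s'$.

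The only genuine obstacle is the passage from the finite to the $\sigma$-finite setting: one must verify that the patched-together singular part remains singular and that uniqueness is not spoiled by measures that may be globally infinite. Both points are handled by performing every subtraction and every a.c./singular dichotomy argument inside a single finite block $X_k$ before reassembling.
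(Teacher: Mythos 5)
Your proof is correct. Note that the paper does not actually prove this statement at all: it dispatches it with a one-line citation to Theorem C of \S 32 of \citet{halmos2013measure}, so any self-contained argument is by construction a different route. What you give is von Neumann's Hilbert-space proof: representing the bounded functional $f\mapsto\int f\,d\nu$ on $L^{2}(\mu+\nu)$ produces the auxiliary density $g$, the set $\{g=1\}$ carries the singular part, and the geometric-series manipulation yields $\nu_{c}\ll\mu$ together with an explicit Radon--Nikodym derivative $\tfrac{g}{1-g}\mathbf{1}_{\{g<1\}}$ --- so this route proves the Radon--Nikodym theorem and the Lebesgue decomposition in one stroke, at the price of invoking the Riesz representation theorem for Hilbert spaces. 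The classical treatment in Halmos reaches the same decomposition via the Hahn decomposition and an exhaustion argument, which is more elementary in its prerequisites but does not hand you the density for free. Your $\sigma$-finite patching and block-by-block uniqueness argument are sound: the two points where such extensions usually go wrong --- verifying that a countable union of $\mu$-null carriers is still $\mu$-null, and that a signed measure which is simultaneously absolutely continuous and singular with respect to $\mu$ vanishes --- are both handled correctly by confining every subtraction to a single finite block $X_k$. One bibliographic nit: in this paper Theorem 4.1.5 of \citet{cohn2013measure} is the reference attached to the Hahn--Jordan decomposition, not to the Lebesgue decomposition, so your opening citation should point either to the Lebesgue decomposition theorem in Cohn or to the Halmos reference the paper itself uses.
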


\begin{proof}
    See Theorem C in \S 32 of \citet{halmos2013measure}.
\end{proof}

\end{document}